\crefname{section}{\S}{\S\S}
\Crefname{section}{\S}{\S\S}
\newcommand{\mat}[1]{\boldsymbol{#1}}
\newcommand{\norm}[1]{\left\lVert\mat{#1}\right\rVert}
\newcommand{\dotprod}[2]{\mat{#1}^{\top} \mat{#2}}
\newcommand{\dotprodi}[3]{\mat{#1}_{#3}^{\top} \mat{#2}}
\newcommand{\mscpi}[3]{\left(\mat{#1}_{#3}^{\top} \mat{#2}\right)}
\newcommand{\bigO}{\mathcal{O}}
\newcommand{\iu}{\mathrm{i}\mkern1mu}
\DeclarePairedDelimiter{\diagfences}{(}{)}
\newcommand{\diag}{\operatorname{diag}\diagfences}
\DeclareMathOperator*{\argmin}{arg\,min}
\begin{document}

\title{
Improved Random Features for Dot Product Kernels}

\author{%
    \name Jonas Wacker \email jonas.wacker@gmail.com \\[.5ex]
    \name Motonobu Kanagawa \email motonobu.kanagawa@eurecom.fr \\[.5ex]
    \addr Data Science Department, EURECOM, France\\[.5ex]
    \name Maurizio Filippone \email maurizio.filippone@kaust.edu.sa \\[.5ex]
    \addr Statistics Program, KAUST, Saudi Arabia
}

\editor{Jean-Philippe Vert}

\maketitle

\begin{abstract}%
 
Dot product kernels, such as polynomial and exponential (softmax) kernels, are among the most widely used kernels in machine learning, as they enable modeling the interactions between input features, which is crucial in applications like computer vision, natural language processing, and recommender systems. We make several novel contributions for  improving the efficiency of random feature approximations for dot product kernels, to make these kernels more useful in large scale learning. First, we present a generalization of existing random feature approximations for polynomial kernels, such as Rademacher and Gaussian sketches and TensorSRHT, using complex-valued random features. We show empirically that the use of complex features can significantly reduce the variances of these approximations. Second, we provide a theoretical analysis for understanding the factors affecting the efficiency of various random feature approximations, by deriving closed-form expressions for their variances. These variance formulas elucidate conditions under which certain approximations (e.g., TensorSRHT) achieve lower variances than others (e.g., Rademacher sketches), and conditions under which the use of complex features leads to lower variances than real features. Third, by using these variance formulas, which can be evaluated in practice, we develop a data-driven optimization approach to improve random feature approximations for general dot product kernels, which is also applicable to the Gaussian kernel. We describe the improvements brought by these contributions with extensive experiments on a variety of tasks and datasets.
 
\end{abstract}

\begin{keywords}
  Random features, randomized sketches, dot product kernels, polynomial kernels, large scale learning
\end{keywords}

\tableofcontents

\section{Introduction}

Statistical learning methods based on {\em positive definite kernels}, namely Gaussian processes \citep{Rasmussen2006} and kernel methods \citep{Schoelkopf2001}, are among the most theoretically principled approaches in machine learning with competitive empirical performance. Due to their strong theoretical guarantees, these methods should be of primary choice in applications where the learning machine should behave in an anticipated manner, e.g., when high-stake decision-making is involved or when safety is required. 
However, a well-known drawback of these methods is their high computational costs, as naive implementations usually require the computational complexity of $\mathcal{O}(N^3)$ or at least $\mathcal{O}(N^2)$, where $N$ is the training data size. 
This unfavorable scalability is an obstacle for these methods to handle a large amount of data. Moreover, it is problematic from a sustainability viewpoint, since these methods may perform essentially redundant computations and thus waste available computational resources.

The scalability issue has been a focus of research since the earliest literature \citep[Chapter 7]{wahba1990spline}, and many approximation methods for reducing the computational costs have been developed (e.g., \citealt{williams2001using,Rahimi2007,titsias2009variational,hensman2017variational}). 
One of the most successful approximations are those based on {\em random features}, initiated by \citet{Rahimi2007}. This approach constructs a random feature map $\Phi$ that transforms an input point $\mat{x}$ to a finite dimensional feature vector $\Phi(\mat{x}) \in \mathbb{R}^D$, so that the inner product of two feature maps $\Phi(\mat{x})^\top \Phi(\mat{y})$ approximates the kernel value $k(\mat{x},\mat{y})$ of the two input points $\mat{x}, \mat{y} \in \mathbb{R}^d$. The resulting computational complexity is dominated by the dimensionality $D$ of the random features, and thus the computational costs can be drastically reduced if $D$ is much smaller than the training data size $N$. 
\citet{Rahimi2007} proposed {\em random Fourier features} for {\em shift-invariant kernels} on the Euclidean space $\mathbb{R}^d$. These are kernels that depend only on the difference of two input points, i.e., of the form $k(\mat{x}, \mat{y}) = k( \mat{x} - \mat{y} )$, such as the Gaussian and Mat\'ern kernels. For a recent overview of random Fourier features and their extensions, see \citet{Liu2020a}.

Another important class of kernels are {\em dot product kernels}, which can be written as a function of the dot product (or the inner product) between two input points, i.e., kernels of the form $k(\mat{x}, \mat{y}) = k( \mat{x}^\top \mat{y} )$. 
Representative examples include {\em polynomial kernels},  $k(\mat{x}, \mat{y}) = ( \mat{x}^\top \mat{y} + \nu  )^p$ with $\nu \geq  0$ and $p \in \mathbb{N}$,   and {\em exponential kernels} (or {\em softmax kernels}), $k(\mat{x}, \mat{y}) = \exp(\dotprod{x}{y} / \sigma^2)$ with $\sigma > 0$ .  
These kernels can model the interactions between input features\footnote{As explained in Section \ref{sec:dot-product-kernels}, any dot product kernel can be written as a weighted sum of polynomial kernels. Since each polynomial kernel models multiplicative interactions between input features (See Section \ref{sec:polynomial-sketches}), the resulting dot product kernel also implicitly models such interactions.} \citep[e.g.,][]{Agrawal19a},  and thus are useful in applications such as genomic data analysis \citep{Aschard2016,weissbrod2016multikernel}, recommender systems \citep{Rendle2010,blondel2016polynomial}, computer vision \citep{lin2015bilinear,Gao2016, Fukui2016}, and natural language processing \citep{yamada2003statistical,chang2010training,Vaswani2017}.
Recent notable applications of dot product kernels include 
 {\em bilinear pooling} in computer vision  \citep{lin2015bilinear}, which essentially uses a polynomial kernel, and the {\em dot product attention mechanism}  in the Transformer architecture \citep{Vaswani2017}, which uses an exponential kernel.

As for kernel methods in general, approximations are necessary to make use of dot product kernels in large scale learning.
However, since dot product kernels are {\em not} shift-invariant, one cannot apply random Fourier features for their approximations, except for some specific cases \citep[c.f.][]{Pennington2015, Choromanski21a}.
Therefore, alternative random feature approximations have been proposed for dot product kernels in the literature, with a particular focus on polynomial kernels \citep{Kar2012, Pham2013, Hamid2014,Avron2014,Ahle2020,Song21c}.
These approximations are based on {\em sketching}  \citep{Woodruff2014}, which is a randomized linear projection of input feature vectors into a low dimensional space.
Random feature approximations play a key role in the above applications of dot product kernels  \cite[e.g.,][]{Gao2016, Fukui2016,Choromanski21a}.
 \\

This paper contributes to the above line of research, by suggesting various approaches to improving the efficiency of random feature approximations for dot product kernels. 
Our overarching goal is to make these kernels more useful in large scale learning, thereby widening their applicability.
Specifically, we make the following contributions: (From now on, we refer to sketching-based random feature approximations for polynomial kernels as {\em polynomial sketches} for brevity.)

\paragraph{Complex-valued features.}
We propose a generalization of polynomial sketches using  {\em complex-valued} random features.
This generalization is applicable to all polynomial sketches\footnote{There exists a recent line of research on improving the efficiency of polynomial sketches using a hierarchical feature construction  \citep[e.g.,][]{Ahle2020, Song21c}.  One can also use the proposed complex polynomial sketches as  base sketches in such a hierarchical construction. Therefore, our contributions are complementary to this line of research.}, including those using i.i.d.~Gaussian or Rademacher features \citep{Kar2012,Hamid2014} and structured sketches such as  TensorSRHT \citep{Hamid2014,Ahle2020}.
Our  approach is an extension of complex-valued random features  for the {\em linear} kernel discussed in \citet{Choromanski2017}  to {\em polynomial} kernels.  
We empirically show that the generalized polynomial sketches using complex features are statistically more efficient than those using real features (in terms of the resulting variances) in particular for higher degree polynomial kernels, and that the former leads to better performance in downstream learning tasks.
To corroborate these empirical findings, we provide a theoretical  analysis of the variances of these sketches, as explained below.

\paragraph{Variance formulas.}
We derive {\em closed-form} formulas for the variances of  the Gaussian and Rademacher polynomial sketches \citep{Kar2012,Hamid2014} and  TensorSRHT  \citep{Hamid2014, Ahle2020} as well as for their complex generalizations. 
These variance formulas provide new insights into the factors affecting the efficiency of these polynomial sketches, complementing existing theoretical results \citep[c.f.][]{Kar2012, Hamid2014, Ahle2020}.
Specifically, they elucidate conditions under which certain approximations (e.g., TensorSRHT) achieve lower variances than others (e.g, Rademacher sketches), and conditions under which the use of complex  features leads to lower variances than real features. 
Importantly, these variance formulas can be evaluated in practice, and thus can be optimized; this is how we develop a novel optimization approach to random feature construction,  explained next.

\paragraph{Optimized Maclaurin approximation for general dot product  kernels.}
Using the derived variance formulas,  we develop a data-driven optimization approach to random feature approximations for general dot product kernels, which is also applicable to the Gaussian kernel. 
Inspired from the randomized Maclaurin approximation of \citet{Kar2012}, we use a finite-degree  Maclaurin approximation of the kernel, given as a weighted sum of polynomial kernels of different degrees. 
Our approach optimizes the cardinalities of random features for approximating the polynomial kernels of different degrees  (given a total number of random features), so as to minimize the mean square error of the approximate kernel with respect to the data distribution -- we utilize the variance formulas to define this optimization objective. 
This optimized Maclaurin approach is compatible with exiting polynomial sketches as well as their complex generalizations, and enhances the efficiency of these sketches to achieve state-of-the-art performance, as we show in our experiments.

\paragraph{Extensive empirical comparison.} 
We conduct extensive experiments to study the effectiveness of the suggested approaches. 
Our investigations include the approximations of polynomial and Gaussian kernels, and cover various random feature approximations.
We study not only the quality of kernel approximation, but also the performance in downstream  learning tasks of Gaussian process regression and classification. 
We generally observe that the proposed approaches lead to significant reduction of kernel approximation errors, and also to state-of-the-art performance in the downstream tasks on most datasets. 

\paragraph{Software package.}
We provide a GitHub repository\footnote{Our code is available at: https://github.com/joneswack/dp-rfs} with modern implementations for all the methods studied in this work supporting GPU acceleration and automatic differentiation in PyTorch \citep{paszke2019pytorch}. Since version 1.8, PyTorch natively supports numerous linear algebra operations on complex numbers\footnote{The PyTorch 1.8 release notes are available at: https://github.com/pytorch/pytorch/releases/tag/v1.8.0}. The same is true for NumPy \citep{harris2020array} and TensorFlow \citep{tensorflow2015-whitepaper}. Therefore, it is straightforward to implement the complex-valued polynomial sketches proposed in this work. \\

This paper is organized as follows. 
Section~\ref{sec:preliminaries} presents preliminaries. 
In Section~\ref{sec:polynomial-sketches}, we review polynomial sketches using i.i.d.~random features and introduce their complex generalizations. We also provide a theoretical analysis and derive variance formulas. 
In Section~\ref{sec:structured-projections}, we study structured polynomial sketches and their complex generalizations, also deriving their variance formulas. 
In Section~\ref{sec:approx-dot-prod-kernels}, we study the approximation of general dot product kernels, and present the optimized Maclaurin approach. 
In Section~\ref{sec:experiments}, we report the results of extensive experiments. 
The appendix contains many supplementary materials, including proofs for theoretical results, additional experiments, and an explanation of Gaussian process regression and classification using complex random features.

\section{Preliminaries}
\label{sec:preliminaries}

This section serves as preliminaries for describing our main contributions. We first introduce basic notation and definitions in Section \ref{sec:notation}. 
We then define positive definite kernels in Section \ref{sec:kernel-def}.

\subsection{Notation} \label{sec:notation}
Let $\mathbb{N}$ and $\mathbb{R}$ denote the sets of natural and real numbers, respectively, and let $\mathbb{R}^d$ denote the real vector space of dimension $d \in \mathbb{N}$. 
Let $\mathbb{C}$ be the set of complex numbers, and $\overline{c}$ for $c \in \mathbb{C}$ be the complex conjugate of $c$.
Let $\iu := \sqrt{-1}$ be the imaginary unit.
 
We use $\mathcal{X}$ to denote a set of input points, and we generally assume $\mathcal{X} \subseteq \mathbb{R}^d$. We write the vector-valued inputs by bold-faced letters, e.g., $\mat{x} \in \mathbb{R}^d$.
For $\mat{x} := (x_1, \dots, x_d)^\top \in \mathbb{R}^d$, let $\| \mat{x} \| := \| \mat{x} \|_2 :=\sqrt{ \sum_{i=1}^d x_i^2 }$ be the 2-norm, and $ \| \mat{x} \|_1 := \sum_{i=1}^d |x_i|$ be the 1-norm. We may interchangeably use $\| \mat{x} \|$ and $\| \mat{x} \|_2$ depending on the context.

For any two vectors $\mat{a} \in \mathbb{R}^{d_1}$ and $\mat{b} \in \mathbb{R}^{d_2}$, $\mat{a} \otimes \mat{b} := \mathrm{vec}(\mat{a} \mat{b}^{\top}) \in \mathbb{R}^{d_1 \cdot d_2}$ denotes the vectorized outer product between $\mat{a}$ and $\mat{b}$.

We denote by $\mathbb{E} [\cdot]$ the expected value and by $\mathbb{V} [\cdot]$ the variance of a random variable. For complex-valued vectors $\mat{z} = \mat{x} + \iu \mat{y} \in \mathbb{C}^{d}$, with $\mat{x}, \mat{y} \in \mathbb{R}^d$, we define $\mathcal{R} \{ \mat{z} \} := \mat{x}$ and $\mathcal{I} \{ \mat{z} \} := \mat{y}$ to be their real and imaginary parts, respectively. 

We further define $\lfloor \cdot \rfloor$ and $\lceil \cdot \rceil$ to be the floor and ceil operators that round a floating point number down/up to the next integer, whereas ${\rm mod}(a, b)$ with $a, b \in \mathbb{N}$ is the arithmetic modulus that gives the rest after dividing $a$ by $b$.

\subsection{Positive Definite Kernels} \label{sec:kernel-def}

Let $\mathcal{X}$ be a nonempty set. A symmetric function $k: \mathcal{X} \times \mathcal{X} \rightarrow \mathbb{R}$ is called {\em positive definite kernel}, if for every $m \in \mathbb{N}$, $x_1, \dots, x_m \in \mathcal{X}$ and $c_1, \dots, c_m \in \mathbb{R}$
\begin{equation*}
    \sum_{i=1}^m \sum_{j=1}^m c_i c_j k(x_i, x_j) \geq 0.
\end{equation*}
We may simply call such $k$ {\em kernel}.
Popular examples include {\em Gaussian kernels} and {\em polynomial kernels}, among many others. 

For any kernel $k$, there exists a corresponding {\em reproducing kernel Hilbert space (RKHS)} consisting of functions on $\mathcal{X}$. In kernel methods \citep{Schoelkopf2001}, the RKHS provides implicit feature representations for points in $\mathcal{X}$, where each feature vector can be potentially infinite dimensional.
A learning method is defined conceptually on such feature representations in the RKHS, but the resulting concrete algorithm can be formulated as a finite dimensional optimization problem defined through the evaluations of the kernel $k$ evaluated at given data points $x_1, \dots, x_N$:
\begin{equation} \label{eq:kernel-values}
    k(x_i, x_j) , \quad i,j = 1,\dots, N.
\end{equation}
This reduction to a finite dimensional optimization problem is the core idea of kernel methods, enabled by the so-called kernel trick. 
However, the exact solution to the optimization problem often requires the computational complexity of $\mathcal{O}(N^3)$ or at least $\mathcal{O}(N^2)$ with $N$ being the data size, which poses a computational challenge to kernel methods. 

Similarly, any positive definite kernel $k$ can be used to define a Gaussian process whose covariance function is $k$ \citep{Rasmussen2006}. 
In Bayesian nonparametric learning, a Gaussian process is used to define a prior distribution over functions on $\mathcal{X}$, and the resulting posterior distribution is given in terms of the values of $k$ evaluated on the data points.
Like kernel methods, Gaussian processes also face a computational challenge, as naively computing the posterior requires the complexity of $\mathcal{O}(N^3)$ or at least $\mathcal{O}(N^2)$.

Various techniques have been proposed to speed up Gaussian processes and kernel methods by approximately computing the solution of interest. 
Approximations based on {\em random features} are one of the most successful approximation approaches, and these are the main topic of this paper.

\section{Polynomial Sketches}
\label{sec:polynomial-sketches}

We study here random feature approximations of {\em polynomial kernels}, defined as
\begin{equation} \label{eq:poly-kernels}
    k(\mat{x}, \mat{y}) = ( \mat{x}^\top \mat{y} + \nu  )^p,
\end{equation}
where $\nu \geq 0$ and $p \in \mathbb{N}$.
We call such random features {\em polynomial sketches}. 
Since polynomial kernels are not shift-invariant, widely known random Fourier features \citep{Rahimi2007} cannot be applied directly.
Polynomial sketches are a fundamentally different approach, and can be understood as implicit randomized projections of the explicit high dimensional feature maps of polynomial kernels. 

For simplicity, we focus on {\em homogeneous} polynomial kernels of the form 
\begin{equation} \label{eq:homo-poly-kernel}
        k(\mat{x}, \mat{y}) = ( \mat{x}^\top \mat{y}   )^p,
\end{equation}
i.e., $\nu = 0$ in \eqref{eq:poly-kernels}. The inhomogeneous case $\nu > 0$ can be reduced to the homogeneous case, by appending $\sqrt{\nu}$ to the input vectors, i.e., by setting $\tilde{\mat{x}} :=  [\mat{x}^\top, \sqrt{\nu}]^\top \in \mathbb{R}^{d+1}$ and $\tilde{\mat{y}} :=  [\mat{y}^\top, \sqrt{\nu}]^\top \in \mathbb{R}^{d+1}$, we have
$$
( \mat{x}^\top \mat{y} + \nu  )^p = (\tilde{\mat{x}}^\top \tilde{\mat{y}})^p 
$$
In this way, polynomial sketches for the homogeneous case can also be applied to the inhomogeneous case. 

We first review existing polynomial sketches with i.i.d.~real-valued features in Section \ref{sec:real-poly-sketch}. 
In Section \ref{sec:complex-projections}, we propose polynomial sketches with {\em complex-valued features}. These complex-valued sketches are an extension of the complex-valued sketches for the {\em linear} kernel discussed in \citet{Choromanski2017} to {\em polynomial} kernels.
We derive the variance formulas of these complex sketches in Section \ref{sec:var-comp-poly} and present a probabilistic error bound in
Section \ref{sec:prob-error-comp-poly}.

\subsection{Real-valued Polynomial Sketches }
\label{sec:real-poly-sketch}

We first study polynomial sketches proposed by \citet{Kar2012}, which are also discussed in \citet{Hamid2014}.  
We do not cover here  TensorSketch of \citet{Pham2013}, as it is conceptually different from the other polynomial sketches discussed in this paper.\footnote{However, we will include TensorSketch in our empirical evaluation in Section \ref{sec:experiments}.}

Let $D \in \mathbb{N}$ be the number of random features, and $p \in \mathbb{N}$ be the degree of the polynomial kernel \eqref{eq:homo-poly-kernel}.
Suppose we generate $p \times D$ i.i.d.~random vectors
\begin{equation} \label{eq:random-vectors-real}
\mat{w}_{i,\ell} \in \mathbb{R}^d \quad \text{satisfying}\quad \mathbb{E} [\mat{w}_{i,\ell} \mat{w}_{i,\ell}^{\top}] = \mat{I}_d, \quad i \in \{1, \dots, p\},\quad \ell \in \{1, \dots, D\}, 
\end{equation}
where $\mat{I}_d \in \mathbb{R}^{d \times d}$ denotes the identity matrix.  

Then we define a random feature map as
\begin{align}
    \label{eqn:polynomial-estimator}
    \Phi_{\mathcal{R}}(\mat{x})
    :=  \frac{1}{\sqrt{D}} \left[ (\prod_{i=1}^p \dotprodi{w}{x}{i,1}), \dots, (\prod_{i=1}^p \dotprodi{w}{x}{i,D}) \right]^{\top} \in \mathbb{R}^D.
\end{align}
The resulting approximation of the polynomial kernel \eqref{eq:homo-poly-kernel} is given by
\begin{equation} \label{eq:approx-kernel-real-D}
    \hat{k}_\mathcal{R} (\mat{x}, \mat{y}) :=  \Phi_{\mathcal{R}}(\mat{x})^{\top} \Phi_{\mathcal{R}}(\mat{y}),
\end{equation}
which is unbiased, as the expectation with respect to the random vectors \eqref{eq:random-vectors-real} gives
\begin{align*}
    \mathbb{E}\left[\Phi_{\mathcal{R}}(\mat{x})^{\top} \Phi_{\mathcal{R}}(\mat{y})\right]
    = \frac{1}{D} \sum_{\ell=1}^D \prod_{i=1}^p \mat{x}^{\top} \mathbb{E} [\mat{w}_{i,\ell} \mat{w}_{i,\ell}^{\top}] \mat{y}
    = (\dotprod{x}{y})^p.
\end{align*}
\citet{Kar2012} suggest to define random vectors in \eqref{eq:random-vectors-real} using the Rademacher distribution: each element of $\mat{w}_{i,\ell}$ is independently drawn from $\{-1, 1\}$ with equal probability.  
We study later how the distribution of the random vectors affects the quality of kernel approximation.

\paragraph{Implicit sketching of high-dimensional features.} 
The random feature map \eqref{eqn:polynomial-estimator} can be interpreted as a linear sketch (projection) of an explicit high-dimensional feature vector for the polynomial kernel. 
To describe this, consider the case $D = 1$ and let $\mat{w}_i = (w_{i,1}, \dots, w_{i,d})^\top \in \mathbb{R}^d $, $i=1,\dots,p$, be i.i.d.~random vectors satisfying $\mathbb{E}[ \mat{w}_i \mat{w}_i^\top ] = \mat{I}_d$. 
Then, the random feature map $\Phi_\mathcal{R}(\mat{x})$ (which is one dimensional in this case) for $\mat{x} := (x_1, \dots, x_d)^\top$ is given by
\begin{align}
 \Phi_\mathcal{R}(\mat{x}) =   \prod_{i=1}^p \dotprodi{w}{x}{i}
    = \prod_{i=1}^p \sum_{j=1}^d w_{i,j} x_j
    = \sum_{j_1=1, \dots, j_p=1}^d w_{1,j_1} x_{j_1} \cdots w_{p,j_p} x_{j_p}
    = \mat{w}^{(p) \top} \mat{x}^{(p)},
    \label{eqn:random-tensor-expansion}
\end{align} 
where $\mat{x}^{(p)} \in \mathbb{R}^{d^p}$ and $\mat{w}^{(p)} \in \mathbb{R}^{d^p}$ are defined as (recall the notation in Section \ref{sec:notation}) 
$$
\mat{x}^{(p)} := \mat{x} \underbrace{\otimes \cdots \otimes}_{p \, \text{times}} \mat{x} \in \mathbb{R}^{d^p}, \quad \mat{w}^{(p)} := \mat{w} \underbrace{\otimes \cdots \otimes}_{p \, \text{times}} \mat{w} \in \mathbb{R}^{d^p}, 
$$
Therefore, for $D = 1$, the approximate kernel is given as 
\begin{equation} \label{eq:approx-kernel-1d-kar12}
\hat{k}_\mathcal{R} (\mat{x}, \mat{y}) :=  \Phi_{\mathcal{R}}(\mat{x}) \cdot \Phi_{\mathcal{R}}(\mat{y}) = \mat{w}^{(p) \top} \mat{x}^{(p)} \cdot \mat{w}^{(p) \top} \mat{y}^{(p)}  
\end{equation}

On the other hand, the polynomial kernel can be written as \citep[Proposition 2.1]{Schoelkopf2001}:
\begin{equation*}
\quad (\mat{x}^{\top} \mat{y})^p = (\mat{x}^{(p)})^{\top} \mat{y}^{(p)}, 
\end{equation*} 
where $\mat{y}^{(p)} = \mat{y} \otimes \cdots \otimes \mat{y} \in \mathbb{R}^{d^p}$. 
Thus, $\mat{x}^{(p)}$ and $\mat{y}^{(p)}$ are the exact feature maps of the input vectors $\mat{x}$ and $\mat{y}$, respectively. 
The comparison of this expression with \eqref{eq:approx-kernel-1d-kar12} implies that the random feature map $ \Phi_\mathcal{R}(\mat{x})  = \mat{w}^{(p) \top} \mat{x}^{(p)} \in \mathbb{R}$ in \eqref{eqn:random-tensor-expansion}  is a projection of the exact feature map $\mat{x}^{(p)} \in \mathbb{R}^{d^p}$ onto $\mathbb{R}$.

Similarly, if $D > 1$, the random feature map $ \Phi_\mathcal{R}(\mat{x}) \in \mathbb{R}^D$ in  \eqref{eqn:polynomial-estimator} can be interpreted as a projection of the exact feature map $\mat{x}^{(p)}$ onto $\mathbb{R}^D$. 
A remarkable point of this random feature map is that it can be obtained without constructing the exact feature vector $\mat{x}^{(p)}$, the latter being infeasible if $d$ or $p$ is large.\footnote{The exact feature expansion $\mat{x}^{(p)}$ leads to $d^p$ dimensional vectors. By grouping up equal terms, we can reduce the dimensionality to $\binom{d+p-1}{p}$, which still leads to unrealistic dimensional feature vectors as soon as $d$ is large. For example, working with MNIST images of size 28x28 ($d=784$) leads to 307,720 features for $p=2$ and to 80,622,640 features for $p=3$. This justifies the need for randomized approximations of the polynomial kernel.} 
Indeed, the computational complexity of constructing the random feature map $\Phi_\mathcal{R}(\mat{x})$ is $\bigO(p d D)$, while the exact feature map $\mat{x}^{(p)}$ requires $\bigO(d^p)$.

\subsection{Complex-valued Polynomial Sketches}

\label{sec:complex-projections}

We now introduce complex-valued polynomial sketches, one of our novel contributions.
We do this by extending the analysis of  \citet{Choromanski2017} for linear sketches to polynomial sketches.\footnote{More specifically,  \citet{Choromanski2017} analyze the variance of the {\em real part} of the approximate complex-valued kernel in \cref{eq:approx-kernel-D-complex} for $p=1$.  In contrast, we study \cref{eq:approx-kernel-D-complex}  with generic $p \in \mathbb{N}$, and analyze the variance of \cref{eq:approx-kernel-D-complex} itself, including both the real and imaginary parts. 
}

As before, without loss of generality, we focus on approximating the homogeneous polynomial kernel $k(\mat{x}, \mat{y}) = (\mat{x}^\top \mat{y})^p$ of degree $p \in \mathbb{N}$.
Let $D \in \mathbb{N}$.
Suppose we generate $p \times D$ {\em complex-valued} random vectors 
satisfying 
\begin{align}
&\mat{z}_{i,j} \in \mathbb{C}^d \quad \text{satisfying} \quad \mathbb{E}[ \mat{z}_{i,j}  \overline{\mat{z}_{i,j}}^\top ] = \mat{I}_d, \quad i \in \{1, \dots, p\},\quad j \in \{1, \dots, D\}  \label{eq:complex-weights-properties}
\end{align}
We then define a {\em complex-valued random feature map} as
\begin{align}
    \label{eqn:comp-rad-polynomial-sketch}
    \Phi_\mathcal{C}(\mat{x})
    :=  \frac{1}{\sqrt{D}} \left[ (\prod_{i=1}^p \dotprodi{z}{x}{i,1}), \dots, (\prod_{i=1}^p \dotprodi{z}{x}{i,D}) \right]^{\top} \in \mathbb{C}^D, \quad \mat{x} \in \mathbb{R}^d,
\end{align}
and the resulting approximate kernel as 
\begin{align}
    \hat{k}_\mathcal{C}( \mat{x}, \mat{y} ) &:= \Phi_{\mathcal{C}}(\mat{x})^\top \overline{\Phi_{\mathcal{C}}(\mat{y})} = \frac{1}{D} \sum_{j=1}^D \prod_{i=1}^p (\mat{z}_{i,j}^\top \mat{x}) \overline{(\mat{z}_{i,j}^\top \mat{y})}, \quad \mat{x}, \mat{y} \in \mathbb{R}^d.
    \label{eq:approx-kernel-D-complex}
\end{align}

\cref{eq:approx-kernel-D-complex} is a generalization of the approximate kernel \eqref{eq:approx-kernel-real-D} with real-valued features, as \cref{eq:approx-kernel-real-D} can be recovered by defining the complex random vectors $\mat{z}_{i,k}$ in \cref{eq:complex-weights-properties} as real random vectors $\mat{w}_{i,k}$ in \cref{eq:random-vectors-real}; in this case the requirement $\mathbb{E}[ \mat{z}_{i,j}  \overline{\mat{z}_{i,j}}^\top ] = \mathbb{E}[ \mat{w}_{i,j}  \mat{w}_{i,j}^\top ] = \mat{I}_d$ is satisfied.

For example, complex-valued random vectors $\mat{z}_{i,j}$ satisfying \cref{eq:complex-weights-properties} can be generated as follows.

\begin{example} \label{example:two-random-reals-complex}
Suppose we generate $2 \times p \times D$ independent real-valued random vectors  
\begin{align} \label{eq:two-real-for-complex-properties}
  & \mat{v}_{i,j},\ \mat{w}_{i,j} \in \mathbb{R}^d \quad \text{satisfying} \quad  \mathbb{E}[\mat{v}_{i,j}] = \mathbb{E}[\mat{w}_{i,j}] = \mat{0}, \quad \mathbb{E}[\mat{v}_{i,j} \mat{v}_{i,j}^{\top}] = \mathbb{E}[\mat{w}_{i,j} \mat{w}_{i,j}^{\top}] = \mat{I}_d
\end{align}
for $i \in \{1, \dots, p\}$, $j \in \{1, \dots, D\}$.
Then one can define complex-valued random vectors \eqref{eq:complex-weights-properties} as 
\begin{equation} \label{eq:comp-vec-from-two-reals}
    \mat{z}_{i,j} := \sqrt{\frac{1}{2}} (\mat{v}_{i,j} + \iu \mat{w}_{i,j}) \in \mathbb{C}^d, \quad i \in \{1, \dots, p\}, \quad j \in \{1, \dots, D\}.
\end{equation}
\end{example}

The following two examples are specific cases of \cref{example:two-random-reals-complex} and are complex versions of the real-valued Rademacher and Gaussian sketches discussed previously.

\begin{example}[Complex Rademacher Sketch]
\label{example:complex-rademacher-sketch}

In \cref{example:two-random-reals-complex},  suppose that elements of random vectors $\mat{v}_{i,j}$ and $\mat{w}_{i,j}$ are independently sampled from the Rademacher distribution, i.e., sampled uniformly from $\{1, -1\}$. Then the resulting random vectors $\mat{v}_{i,j}$, $\mat{w}_{i,j}$ satisfy the conditions in \cref{eq:two-real-for-complex-properties} and thus the complex random vectors in \cref{eq:comp-vec-from-two-reals} satisfy the condition \cref{eq:complex-weights-properties}. 

\end{example}

\begin{example}[Complex Gaussian Sketch]
\label{example:complex-Gauss-sketch}
In \cref{example:two-random-reals-complex}, suppose that elements of random vectors $\mat{v}_{i,j}$ and $\mat{w}_{i,j}$ are independently sampled from the standard Gaussian distribution, $\mathcal{N}(0,1)$. Then the resulting random vectors $\mat{v}_{i,j}$, $\mat{w}_{i,j}$ satisfy the conditions in \cref{eq:two-real-for-complex-properties} and thus the complex random vectors in \cref{eq:comp-vec-from-two-reals} satisfy the condition \cref{eq:complex-weights-properties}. 
\end{example}

\begin{example} \label{example:complex-rademacher-uniform}
Suppose the elements of each random vector $\mat{z}_{i,j} \in \mathbb{C}^d$ are
independently sampled from the uniform distribution on $\{ 1, -1, \iu, - \iu \}$.
Then the requirement in \cref{eq:complex-weights-properties} is satisfied.
\end{example}

\cref{example:complex-rademacher-uniform} is essentially identical to the complex Rademacher sketch in \cref{example:complex-rademacher-sketch}, in that each element of $\mat{z}_{i,j}$ in \cref{example:complex-rademacher-uniform} can be obtained by multiplying $e^{\iu \pi /4}$ to an element of $\mat{z}_{i,j}$ in  \cref{example:complex-rademacher-sketch}, and vice versa. The multiplication by $e^{\iu \pi /4}$ is equivalent to rotating an element counter-clockwise by 45 degrees. 
See \cref{fig:complex-rotation} for an illustration.
One can see that this multiplication by $e^{\iu \pi /4}$ does not change the resulting approximate kernel \eqref{eq:approx-kernel-D-complex}. 
In this sense, the constructions of \cref{example:complex-rademacher-sketch} and  \cref{example:complex-rademacher-uniform} are equivalent.
 However,  the sketch in \cref{example:complex-rademacher-uniform} gives a computational advantage over \cref{example:complex-rademacher-sketch}: Since every element of each random vector $\mat{z}_{i,j}$ is either real {\em or} imaginary, the inner products $\mat{z}_{i,j}^{\top} \mat{x}$ in \cref{eqn:comp-rad-polynomial-sketch} can be computed at the same cost as for real polynomial sketches. %

\begin{figure}[t]
\centering
\includegraphics[width=0.8\textwidth]{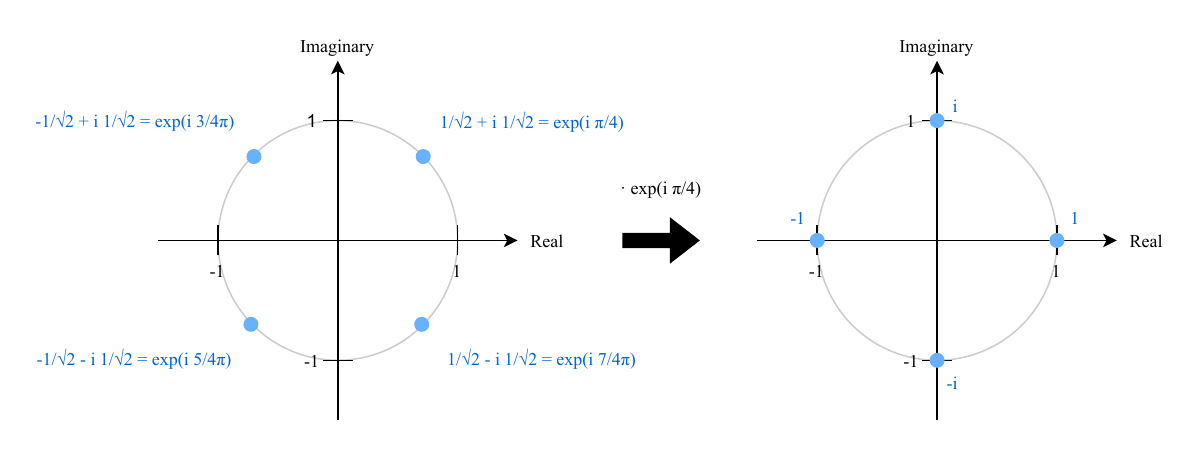}
\caption{Multiplying each element of a random vector $\mat{z}_{i,j}$ in \cref{example:complex-rademacher-sketch} by $\exp(\iu \frac{\pi}{4})$ corresponds to a counter-clockwise rotation of that element by 45 degrees on the complex plane. The support of the resulting elements is $\{1, -1, \iu, -\iu \}$ and the construction of \cref{example:complex-rademacher-uniform} is obtained.}
\label{fig:complex-rotation}
\end{figure}

We show in the following proposition that the approximate kernel \eqref{eq:approx-kernel-D-complex} is an unbiased estimator of the polynomial kernel $( \mat{x}^\top \mat{y} )^p$.  

\begin{proposition}
\label{prop:unbiased-complex-approximate-kernel}
Let $\mat{x}, \mat{y} \in \mathbb{R}^d$ be arbitrary, and  $\hat{k}_\mathcal{C}( \mat{x}, \mat{y} ) $ be the approximate kernel in \eqref{eq:approx-kernel-D-complex}.
Then we have
$$
  \mathbb{E}[ \hat{k}_\mathcal{C}( \mat{x}, \mat{y} ) ] = ( \mat{x}^\top \mat{y} )^p
$$
\end{proposition}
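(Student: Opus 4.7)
The plan is to exploit linearity of expectation and independence across the index $i$ to reduce the computation to a one-line second-moment identity. By linearity,
\[
\mathbb{E}[\hat{k}_\mathcal{C}(\mat{x},\mat{y})] \;=\; \frac{1}{D}\sum_{j=1}^D \mathbb{E}\!\left[\prod_{i=1}^p (\mat{z}_{i,j}^\top \mat{x})\,\overline{(\mat{z}_{i,j}^\top \mat{y})}\right],
\]
so it suffices to fix an arbitrary $j$ and show the inner expectation equals $(\mat{x}^\top\mat{y})^p$.

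First I would use the independence of $\mat{z}_{1,j},\dots,\mat{z}_{p,j}$ (implicit in the construction, mirroring the i.i.d.\ assumption made in the real case in \eqref{eq:random-vectors-real}) to factor the expectation of the product into the product of expectations. Each factor is then of the form $\mathbb{E}[(\mat{z}^\top \mat{x})\overline{(\mat{z}^\top \mat{y})}]$ with $\mathbb{E}[\mat{z}\overline{\mat{z}}^\top]=\mat{I}_d$.

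Next I would handle the conjugate. The key observation is that $\mat{x},\mat{y}\in\mathbb{R}^d$, so $\overline{\mat{z}^\top \mat{y}} = \overline{\mat{z}}^\top \mat{y}$. Treating the two scalars as a $1\times 1 \cdot 1\times 1$ product, I can rewrite
\[
(\mat{z}^\top \mat{x})\,\overline{(\mat{z}^\top \mat{y})} \;=\; \mat{x}^\top \mat{z}\,\overline{\mat{z}}^\top \mat{y}.
\]
Taking expectations and pulling the deterministic $\mat{x},\mat{y}$ out gives $\mat{x}^\top \mathbb{E}[\mat{z}\overline{\mat{z}}^\top]\mat{y} = \mat{x}^\top \mat{I}_d \mat{y} = \mat{x}^\top \mat{y}$.

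Combining, the product over $i=1,\dots,p$ yields $(\mat{x}^\top \mat{y})^p$, and averaging the $D$ identical terms gives the claim. The proof is essentially a two-line calculation; the only point requiring care is the conjugate identity $\overline{\mat{z}^\top \mat{y}} = \overline{\mat{z}}^\top \mat{y}$, which would fail for complex inputs and is precisely why the condition \eqref{eq:complex-weights-properties} is formulated with $\mat{z}\overline{\mat{z}}^\top$ rather than $\mat{z}\mat{z}^\top$. I do not anticipate any genuine obstacle.
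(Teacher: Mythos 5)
Your proof is correct and follows essentially the same route as the paper: reduce to a single term by linearity, factor over $i$ by independence, and rewrite each factor as $\mat{x}^\top \mathbb{E}[\mat{z}_i \overline{\mat{z}_i}^\top]\mat{y} = \mat{x}^\top\mat{y}$ using \eqref{eq:complex-weights-properties}. Your explicit remark about why $\overline{\mat{z}^\top\mat{y}} = \overline{\mat{z}}^\top\mat{y}$ requires $\mat{y}$ real is a nice clarification that the paper leaves implicit, but it is not a different argument.
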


\begin{proof}
Since \cref{eq:approx-kernel-D-complex} is the empirical average of $D$ terms, it is sufficient to show the unbiasedness of each term. 
To this end, we consider here the case $D = 1$ and drop the index $j$. 
We have
\begin{align*}
    & \mathbb{E} \left[ \prod_{i=1}^p   \mscpi{z}{x}{i} \overline{\mscpi{z}{y}{i}} \right] 
    = \prod_{i=1}^p \mathbb{E} \left[ \mscpi{z}{x}{i} \overline{\mscpi{z}{y}{i}} \right] =
    \prod_{i=1}^p  \mat{x}^\top \mathbb{E} \left[ \mat{z}_i \overline{\mat{z}_i}^\top \right] \mat{y}
     = (\mat{x}^\top \mat{y})^p.
\end{align*}
where we used  \cref{eq:complex-weights-properties} in the last identity.
\end{proof}

\subsection{Variance of Complex-valued Polynomial Sketches} \label{sec:var-comp-poly}
We now study the variance of the approximate kernel \eqref{eq:approx-kernel-D-complex} with the complex-valued random feature map \eqref{eqn:comp-rad-polynomial-sketch}. We consider the case $D = 1$ and drop the index $j$:
\begin{equation} \label{eq:approx-kernel-complex-1}
    \hat{k}_\mathcal{C}(\mat{x}, \mat{y}) = \prod_{i=1}^p  \mscpi{z}{x}{i} \overline{\mscpi{z}{y}{i}}.
\end{equation}
The variance of the case $D > 1$ can be obtained by dividing the variance of \cref{eq:approx-kernel-complex-1} by $D$, since the approximate kernel \eqref{eq:approx-kernel-D-complex} is the average of $D$ i.i.d.~copies of \cref{eq:approx-kernel-complex-1}. We denote by $z_{i,k}$ the $k$-th element of $\mat{z}_i$. 

Note that the variance of a complex random variable $Z \in \mathbb{C}$ is defined by
$$
\mathbb{V}[Z] := \mathbb{E}[ | Z - \mathbb{E}[Z] |^2 ] = \mathbb{E}[ ( Z - \mathbb{E}[Z] ) \overline{( Z - \mathbb{E}[Z] )} ] = \mathbb{E}[|Z|^2] - | \mathbb{E}[Z] |^2
$$
Theorem \ref{thm:expression-var-complex-aprox-kernel} below characterizes the variance in terms of the input vectors $\mat{x}, \mat{y} \in \mathbb{R}^d$ and the distribution of the complex weight vectors \eqref{eq:complex-weights-properties}.  
The proof is given in \cref{sec:appendix-complex-variance}.

\begin{theorem}  
\label{thm:expression-var-complex-aprox-kernel}

Let $\mat{x} := (x_1, \dots, x_d)^\top \in \mathbb{R}^d$ and $\mat{y} := (y_1, \dots, y_d)^\top \in \mathbb{R}^d$ be any input vectors. 
Let $\mat{z}_1,\dots, \mat{z}_p \in \mathbb{C}^d$ be i.i.d.~random vectors satisfying \eqref{eq:complex-weights-properties}, such that elements $z_{i1}, \dots, z_{id}$ of each vector $\mat{z}_i = (z_{i1}, \dots, z_{id})^\top$ are themselves i.i.d.
Let $\mat{z} = (z_1, \dots, z_d)^\top \in \mathbb{C}^d$ be a random vector  independently and identically distributed as $\mat{z}_1,\dots, \mat{z}_p$, and write $z_k = a_k + \iu b_k$ %
with $a_k, b_k \in \mathbb{R}$.
Suppose 
\begin{equation} \label{eq:coro-complex-var-condition}
    \mathbb{E}[a_k b_k] = 0, \quad \mathbb{E}[a_k^2] = q, \quad \mathbb{E}[b_k^2] = 1-q \quad \text{where}\quad  0 \leq q \leq 1.
\end{equation}
Then, for the approximate kernel \eqref{eq:approx-kernel-complex-1}, we have
\begin{align}  
\mathbb{V}[ \hat{k}_\mathcal{C}(\mat{x}, \mat{y}) ] &=  \bigg( \sum_{k=1}^d \mathbb{E}[|z_k|^4] x_k^2 y_k^2 +  \| \mat{x} \|^2 \| \mat{y} \|^2 -  \sum_{k=1}^d x_k^2 y_k^2 \nonumber \\
& \quad +  \left( (2q - 1)^2 + 1 \right) \big( (\dotprod{x}{y})^2 -  \sum_{k=1}^d x_k^2 y_k^2  \big)   \bigg)^p  - (\mat{x}^\top \mat{y})^{2p} \label{eq:complex-poly-var-coro-1} 
\end{align}

\end{theorem}

\cref{thm:expression-var-complex-aprox-kernel} applies to a spectrum of complex polynomial sketches in terms of $q$, where the case $q = 1$ is the case of real-valued polynomial sketches in \cref{eq:approx-kernel-1d-kar12}. 
Indeed, to our knowledge, this result is also new for real-valued polynomial sketches. This variance formula is not only of theoretical interest, but also offers a way of estimating the variance from data. It will be used later to define the objective function of the proposed optimized Maclaurin approach.  

\paragraph{Condition in \cref{eq:coro-complex-var-condition}.}
The key condition is \cref{eq:coro-complex-var-condition},\footnote{\cref{eq:coro-complex-var-condition} implies that $z_k$ is a {\em proper} complex random variable \citep{neeser1993proper}.} where the constant $q$ is the average length of the real part $a_k$ of each random element $z_k = a_k + \iu b_k$.
Note that $\mathbb{E}[b_k^2] = 1 - q$ follows from $\mathbb{E}[a_k^2] = q$ since $1 = \mathbb{E}[|z_k|^2] = a_k^2 + b_k^2$. 
\cref{eq:coro-complex-var-condition} is satisfied for Examples \ref{example:complex-rademacher-sketch}, \ref{example:complex-Gauss-sketch} and \ref{example:complex-rademacher-uniform} with $q = 1/2$ and for the real-valued Rademacher and Gaussian sketches with $q = 1$. 
If $z_k$ is sampled uniformly from $\{ \iu, - \iu \}$, which is eligible as it satisfies \cref{eq:complex-weights-properties}, then $q = 0$. 
If $z_k$ is sampled uniformly from $\{ 1, - 1\}$ with probability $q$ and from $\{ \iu, - \iu \}$ with probability $1 - q$, then \cref{eq:coro-complex-var-condition} is satisfied with this $q$.

\paragraph{Lower Bound.}
The variance in \cref{eq:complex-poly-var-coro-1} can be lower-bounded by using Jensen's inequality $\mathbb{E}[ |z_k|^4 ] \geq ( \mathbb{E}[ |z_k|^2 ] )^2 = 1$:
\begin{align}
\mathbb{V}[ \hat{k}_\mathcal{C}(\mat{x}, \mat{y}) ]     & \geq \left(  \| \mat{x} \|^2 \| \mat{y} \|^2  +  \left( (2q - 1)^2 + 1 \right) \big( (\dotprod{x}{y})^2 -  \sum_{k=1}^d x_k^2 y_k^2  \big)    \right)^p - (\mat{x}^\top \mat{y})^{2p}. \label{eq:complex-poly-var-coro-2}
\end{align}
\cref{eq:complex-poly-var-coro-2} is the smallest possible variance attainable by complex polynomial sketches satisfying the conditions in \cref{thm:expression-var-complex-aprox-kernel}. 
For $q = 1/2$, this lower bound is attained by the complex Rademacher sketch (\cref{example:complex-rademacher-sketch}) and its equivalent construction (\cref{example:complex-rademacher-uniform}), for which we have $\mathbb{E}[ |z_k|^4] = 1$.
On the other hand, for the complex Gaussian sketch (\cref{example:complex-Gauss-sketch}) we have $\mathbb{E}[ | z_k |^4 ] = \mathbb{E} [(a_k^2 + b_k^2)^2] = 2$.

\paragraph{Concrete Examples.}
Below, we summarize the variance formula for the real and complex Rademacher sketches, and the real and complex Gaussian sketches:
\begin{align}
 \text{\bf (Real~Radem.)}\quad \mathbb{V} \left[ \hat{k}_\mathcal{R}(\mat{x}, \mat{y}) \right] 
& = \left(\norm{x}^2 \norm{y}^2 + 2 \left[ ( \mat{x}^\top \mat{y} )^2 - \sum_{k=1}^d x_k^2 y_k^2 \right] \right)^p - (\mat{x}^\top \mat{y})^{2p}, \label{eqn:rademacher-variance}    \\
\text{\bf (Comp.~Radem.)} \quad  \mathbb{V}[ \hat{k}_\mathcal{C}(\mat{x}, \mat{y}) ]
& = \left(\norm{x}^2 \norm{y}^2 + ( \mat{x}^\top \mat{y} )^2 - \sum_{k=1}^d x_k^2 y_k^2 \right)^p - (\mat{x}^\top \mat{y})^{2p}
\label{eqn:complex-rademacher-variance} \\
 \text{\bf (Real~Gauss.)}\quad \mathbb{V} \left[ \hat{k}_\mathcal{R}(\mat{x}, \mat{y}) \right] 
& = \left(\| \mat{x} \|^2 \| \mat{y} \|^2 + 2 ( \mat{x}^\top \mat{y} )^2 \right)^p - (\mat{x}^\top \mat{y})^{2p}. \label{eqn:normal-polynomial-est-variance} \\
\text{\bf (Comp.~Gauss.)} \quad  \mathbb{V}[ \hat{k}_\mathcal{C}(\mat{x}, \mat{y}) ]
& = \left(\| \mat{x} \|^2 \| \mat{y} \|^2 + ( \mat{x}^\top \mat{y} )^2 \right)^p - (\mat{x}^\top \mat{y})^{2p}
\label{eqn:complex-normal-variance}
\end{align}

\paragraph{Comparing the Real and Complex Polynomial Sketches.}
Let us now compare the variances of real ($q = 1$) and complex ($q \not= 1$) polynomial sketches. 
First, it is easy to see that the variance of the complex Gaussian polynomial sketch (\cref{eqn:complex-normal-variance}) is upper-bounded by the variance of the real Gaussian sketch (\cref{eqn:normal-polynomial-est-variance}).
For the lower bound in \cref{eq:complex-poly-var-coro-2}, which is attained by the Rademacher sketches, a more detailed analysis is needed.
To this end, consider the term that depends on $q$:
\begin{equation*}
 \left( (2q - 1)^2 + 1 \right) \big( (\dotprod{x}{y})^2 -  \sum_{k=1}^d x_k^2 y_k^2  \big)  
\end{equation*}
The variance in \cref{eq:complex-poly-var-coro-1} is a monotonically increasing function of this term. 
Suppose 
\begin{equation} \label{eq:cond-for-complex-to-be-better}
   (\dotprod{x}{y})^2 -  \sum_{k=1}^d x_k^2 y_k^2  = \sum_{i=1}^d \sum_{\substack{j=1 \\ j \neq i}}^d   x_i x_j y_i y_j \geq  0
\end{equation}
Then $q = 1/2$ (e.g., complex sketches in Examples \ref{example:complex-rademacher-sketch}, \ref{example:complex-Gauss-sketch} and \ref{example:complex-rademacher-uniform}) makes the term the smallest, while $q = 1$ and $q=0$ (purely real and imaginary polynomial sketches) makes it the largest. In other words, for input vectors $\mat{x}$ and $\mat{y}$ satisfying \cref{eq:cond-for-complex-to-be-better}, complex-valued sketches with $q = 1/2$ result in a lower variance than the real-valued counterparts with $q =1$.
On the other hand, if \cref{eq:cond-for-complex-to-be-better} does not hold, real-valued sketches result in a lower variance than the complex-valued counterparts. 

Therefore, whether complex-valued Rademacher sketches ($q=1/2$) yield a lower variance than real-valued Rademacher sketches ($q=1$) depends on whether \cref{eq:cond-for-complex-to-be-better} holds. 
For example, \cref{eq:cond-for-complex-to-be-better} holds true if input vectors $\mat{x} = (x_1, \dots, x_d)^\top$ and $\mat{y} = (y_1, \dots, y_d)^{\top}$ are {\em nonnegative}: $x_1, ..., x_d \geq 0$ and $y_1, \dots, y_d \geq 0$. 
Nonnegative input vectors are ubiquitous in real-world applications, e.g., where each input feature represents the amount of a certain quantity, where input vectors are given by bag-of-words representations,  one-hot encoding (categorical data), or min-max feature scaling,  and where they are outputs of a ReLU neural network\footnote{c.f. DeepFried Convnets \citep{Yang2015} and fine-grained image recognition \citep{Gao2016}.}.  
For such applications with nonnegative input vectors, complex-valued polynomial sketches always yield a smaller variance than the real-valued counterparts.

\subsection{Probabilistic Error Bounds for Rademacher Sketches}

\label{sec:prob-error-comp-poly}

We present here probabilistic error bounds for the approximate kernel in \cref{eq:approx-kernel-D-complex} in terms of the number $D$ of random features, using the variance formula obtained in the previous subsection and focusing on Rademacher sketches. The proof of the following result is given in \cref{sec:appendix-rademacher-bound}.

\begin{theorem}
    \label{trm:real-bernstein-bound}
     Let $\mat{x}, \mat{y} \in \mathbb{R}^d$ be arbitrary input vectors.
    For $0 \leq q \leq 1$, consider a polynomial sketch in \cref{thm:expression-var-complex-aprox-kernel} such that $\mathbb{E}[|z_k|^4] = 1$ and thus it attains the variance in \cref{eq:complex-poly-var-coro-2}.
    Define a constant $\sigma^2 \geq 0$ by
    $$
    \sigma^2 := \frac{ 1}{\|\mat{x}\|^{2p} \|\mat{y}\|^{2p}} \left[ \left(  \| \mat{x} \|^2 \| \mat{y} \|^2  +  \left( (2q - 1)^2 + 1 \right) \big( (\dotprod{x}{y})^2 -  \sum_{k=1}^d x_k^2 y_k^2  \big)    \right)^p - (\mat{x}^\top \mat{y})^{2p} \right]
    $$
    Let $\epsilon, \delta > 0$ be arbitrary, and $D \in \mathbb{N}$ be such that 
    \begin{equation} \label{eq:real-rademacher-scaling-D}
    D \geq 2 \left( \frac{2}{3 \epsilon} + \frac{ \sigma^2}{\epsilon^2} \right) \log \left( \frac{2}{\delta} \right).
    \end{equation}
     Then, for  the approximate kernel $\hat{k}_{\mathcal{C}}(\mat{x}, \mat{y})$  in \cref{eq:approx-kernel-D-complex}, we have
    \begin{equation*}
        \mathrm{Pr} \left[\left| \hat{k}_\mathcal{C} (\mat{x}, \mat{y}) - (\mat{x}^\top \mat{y})^p \right| \leq \epsilon \norm{x}_1^p \norm{y}_1^p \right] \geq 1 - \delta .
    \end{equation*}
\end{theorem}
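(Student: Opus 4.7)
The plan is to apply a Bernstein concentration inequality for sums of independent, bounded complex-valued random variables (the version of \citet{Tropp2012}, Theorem 1.4), plugging in the closed-form variance from \cref{thm:expression-var-complex-aprox-kernel} together with a deterministic $\ell_1$-norm bound on each summand.

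First I would write the estimator as $\hat{k}_\mathcal{C}(\mat{x},\mat{y}) = \frac{1}{D}\sum_{j=1}^D Y_j$ where $Y_j := \prod_{i=1}^p (\mat{z}_{i,j}^\top \mat{x}) \overline{(\mat{z}_{i,j}^\top \mat{y})}$. The $Y_j$ are i.i.d., $\mathbb{E}[Y_j] = (\mat{x}^\top\mat{y})^p$ by \cref{prop:unbiased-complex-approximate-kernel}, and $\mathbb{V}[Y_j] = \sigma^2 \|\mat{x}\|^{2p}\|\mat{y}\|^{2p}$ by the definition of $\sigma^2$ and the variance expression in \cref{thm:expression-var-complex-aprox-kernel}.

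Next I would obtain an almost-sure bound on each $Y_j$. The hypothesis $\mathbb{E}[|z_k|^4]=1$, combined with $\mathbb{E}[|z_k|^2]=1$ (which follows from the isotropy condition \cref{eq:complex-weights-properties}), forces $\mathbb{V}[|z_k|^2]=0$, so $|z_k|=1$ almost surely. This covers precisely the real and complex Rademacher sketches of \cref{example:complex-rademacher-sketch} and \cref{example:complex-rademacher-uniform} (as well as the real Rademacher case $q=1$). Hence $|\mat{z}_{i,j}^\top \mat{x}|\leq \sum_k|x_k| = \|\mat{x}\|_1$ and likewise for $\mat{y}$, giving $|Y_j|\leq\|\mat{x}\|_1^p\|\mat{y}\|_1^p$. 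Since also $|(\mat{x}^\top\mat{y})^p|\leq\|\mat{x}\|_1^p\|\mat{y}\|_1^p$, the centered summands $X_j := Y_j - (\mat{x}^\top\mat{y})^p$ satisfy $|X_j|\leq M$ a.s.\ with $M := 2\|\mat{x}\|_1^p\|\mat{y}\|_1^p$.

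Then I would apply the complex Bernstein inequality to $\sum_{j=1}^D X_j$ with per-sample second moment $V = \sigma^2\|\mat{x}\|^{2p}\|\mat{y}\|^{2p}$ and uniform bound $M$. Setting the deviation threshold to $\tau = D\epsilon\|\mat{x}\|_1^p\|\mat{y}\|_1^p$ and using the elementary inequality $\|\mat{x}\|^{2p}\|\mat{y}\|^{2p}\leq\|\mat{x}\|_1^{2p}\|\mat{y}\|_1^{2p}$ to absorb the ratio $V/(\|\mat{x}\|_1^{2p}\|\mat{y}\|_1^{2p})$ into $\sigma^2$, the Bernstein exponent collapses to $-\dfrac{D\epsilon^2/2}{\sigma^2 + 2\epsilon/3}$. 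Requiring this to dominate $\log(2/\delta)$ and rearranging yields exactly the bound \cref{eq:real-rademacher-scaling-D}, and concludes the proof.

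The main obstacle is essentially bookkeeping: (i) pinning down $|z_k|=1$ a.s.\ from the combined second and fourth moment conditions, so that the $\ell_1$-norm bound on the inner products is valid for every realization rather than just in expectation, and (ii) carefully tracking the normalization so that the $\ell_2$-based variance formula of \cref{thm:expression-var-complex-aprox-kernel} meshes cleanly with the $\ell_1$-based deviation scale $\epsilon\|\mat{x}\|_1^p\|\mat{y}\|_1^p$, producing the stated constants $\tfrac{2}{3\epsilon}$ and $\tfrac{\sigma^2}{\epsilon^2}$. Invoking the complex-valued Bernstein bound of \citet{Tropp2012} (already cited in the paper) lets us treat the real and imaginary parts jointly and avoids a suboptimal union-bound argument.
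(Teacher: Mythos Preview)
Your proposal is correct and follows essentially the same route as the paper's proof: center the i.i.d.\ summands, bound each summand almost surely by $2\|\mat{x}\|_1^p\|\mat{y}\|_1^p$ via $|z_k|=1$ and H\"older, bound the variance via \cref{thm:expression-var-complex-aprox-kernel} together with $\|\cdot\|_2\leq\|\cdot\|_1$, and apply Bernstein with threshold $\epsilon\|\mat{x}\|_1^p\|\mat{y}\|_1^p$. Your treatment is in fact slightly tidier than the paper's in two places: you explicitly derive $|z_k|=1$ a.s.\ from $\mathbb{E}[|z_k|^4]=\mathbb{E}[|z_k|^2]=1$ (the paper simply asserts it), and you invoke the complex Bernstein inequality of \citet{Tropp2012} rather than the real-valued version the paper cites, which is the correct tool since the centered summands are genuinely complex.
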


\cref{eq:real-rademacher-scaling-D} shows that the required number $D$ of random features to achieve the relative accuracy of $\varepsilon$ (where the ``relative'' is with respect to $ \norm{x}_1^p \norm{y}_1^p$) with probability at least $1-\delta$.  
For small $\epsilon$,  the second term $\sigma^2 / \epsilon^2$ dominates the first term $2/(3\epsilon)$. This second term depends on $\sigma^2$, which is a scaled version of the variance in \cref{eq:complex-poly-var-coro-2} of the approximate kernel for $D = 1$. 
Thus, if the variance in \cref{eq:complex-poly-var-coro-2} is smaller (resp.~larger), one needs a smaller (resp.~larger) number of random features to achieve the relative accuracy of $\epsilon$.
 
Let us now compare the real-valued ($q = 1$) and complex-valued ($q = 1/2$) Rademacher sketches. 
As discussed earlier, the complex Rademacher sketch has a smaller variance than the real Rademacher sketch when the inequality in \cref{eq:cond-for-complex-to-be-better} holds for the two input vectors $\mat{x}, \mat{y}$. In particular, this inequality always holds when the input vectors are nonnegative. 
Therefore, in this case, the complex Rademacher sketch requires a smaller number of random features than the real Rademacher sketch to achieve a given accuracy.

When the inequality in \cref{eq:cond-for-complex-to-be-better} holds, the advantage of the complex Rademacher sketch becomes more significant for larger $p$.
We illustrate this behavior in \cref{fig:comparison-comp-real-over-p}, where vectors $\mat{x}$ and $\mat{y}$ are sampled randomly in the positive quadrant by first drawing $\tilde{\mat{x}}$ and $\tilde{\mat{y}}$ from $\mathcal{N}(\mat{0}, \mat{I}_d)$ and then by computing $\mat{x}=|\tilde{\mat{x}}| / \| \tilde{\mat{x}} \|, \mat{y}=|\tilde{\mat{y}}| / \| \tilde{\mat{y}} \|$.
(Here $|\tilde{\mat{x}}|$ denotes the vector whose elements are the absolute values of the elements of $\tilde{\mat{x}}$.)
In the figure we report the mean squared error of the approximate kernel for $d=8$ and $d=32$ for Gaussian and Rademacher sketches. 
To facilitate quantitative analysis of the improvement offered by complex random features, we also report the ratio between the mean squared error obtained by complex and real random features.
As expected from the theoretical analysis, the results show a decreasing error ratio for increasing values of $p$, with an improvement of Rademacher over Gaussian, which is slightly larger for lower $d$.

In \cref{fig:comparison-comp-real-over-p} we also included the tree construction in \citet[Alg.~1]{Ahle2020} with (complex) Rademacher sketches as nodes.
This approach has an improved scaling w.r.t. $p$ compared to our method; however, for commonly used degrees $p=2,3$ our method outperforms it. 
Another interesting observation from the figure is that \citet{Ahle2020} does not benefit from complex sketches in the same way as our approach, even though this yields an error ratio below one.
Based on these results, we recommend our approach over the tree method by \citet{Ahle2020} for low degrees $p=2,3$ as it is competitive and easier to implement. 
In \cref{sec:experiments} we provide more evidence to support these results.

\begin{figure}[t] 
\centering
\includegraphics[width=1.0\textwidth]{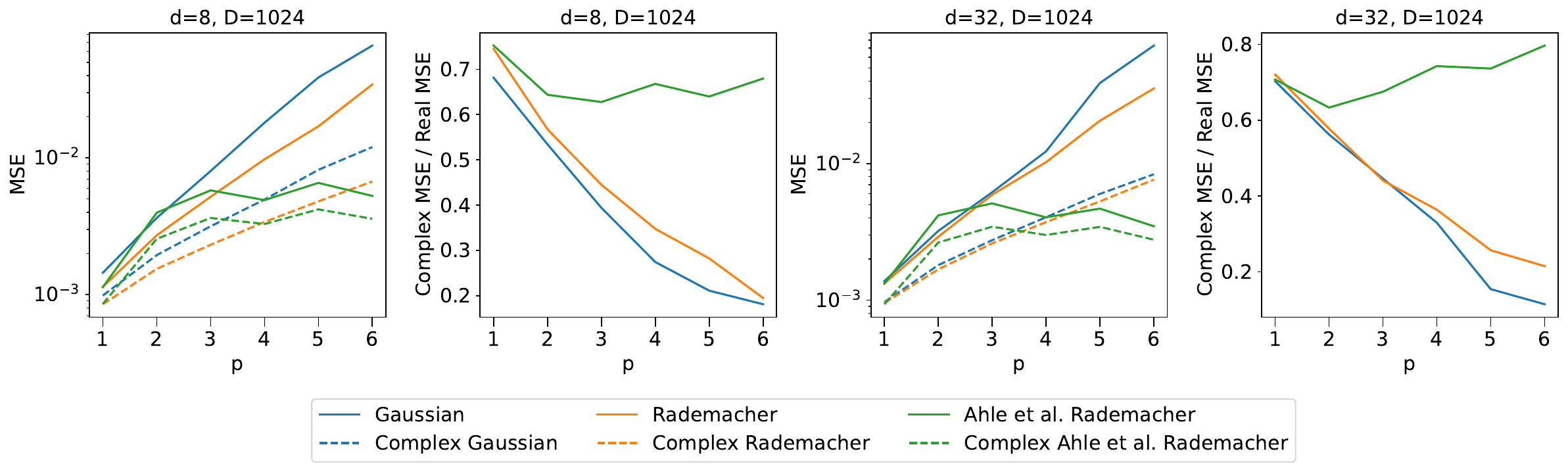}
\caption{  This plot shows the mean squared error $\mathbb{E}[ | \hat{k}(\mat{x}, \mat{y}) - (\mat{x}^\top \mat{y})^p |^2 ]$ for different values of the degree $p$ as well as the mean squared error ratio of the complex sketches over their real analogues. We sample 1,000 independent vector pairs $(\mat{x}, \mat{y}) \in \mathbb{R}^d \times \mathbb{R}^d$ with $\mat{x}=|\tilde{\mat{x}}| / \| \tilde{\mat{x}} \|, \mat{y}=|\tilde{\mat{y}}| / \| \tilde{\mat{y}} \|$ and $\tilde{\mat{x}}, \tilde{\mat{y}} \sim \mathcal{N}(\mat{0}, \mat{I}_d)$. The mean is then taken over 1,000 independent constructions of the approximate kernel $\hat{k}(\mat{x}, \mat{y})$, and over every input pair $(\mat{x}, \mat{y})$.}
\label{fig:comparison-comp-real-over-p}
\end{figure}

\section{Structured Polynomial Sketches }
\label{sec:structured-projections}

We study here {\em structured} polynomial sketches and their extensions with complex features. In Section \ref{sec:polynomial-sketches}, we studied polynomial sketches in \cref{eqn:polynomial-estimator} (or \cref{eqn:comp-rad-polynomial-sketch} for complex extensions), where the $p \times D$ random vectors $\mat{w}_{i, \ell} \in \mathbb{R}^d$ ($i=1,\dots,p$, $\ell = 1,\dots,D$) are generated in an i.i.d.~manner. 
By putting a structural constraint on these vectors, one can construct more efficient random features with a lower variance.
Moreover, such a structural constraint leads to a computational advantage, as the imposed structure may be used for implementing an efficient algorithm for fast matrix multiplication.

We consider structured polynomial sketches known as {\em TensorSRHT (Tensor Subsampled Randomized Hadamard Transform)}.  
 \citet{Tropp2011} studied TensorSRHT for $p=1$ (linear case) and \citet{Hamid2014,Ahle2020} extended it\footnote{ The sketches proposed by \citet{Hamid2014} and \citet{Ahle2020} are similar but not exactly the same. \citet{Hamid2014} uses $p \times B$ independent linear SRHT sketches \citep[see][]{Tropp2011}, where $B := \left\lceil \frac{D}{d} \right\rceil$ is the number of SRHT blocks per degree. The elements of these sketches are then shuffled across degrees and blocks, and the blocks are multiplied elementwise over $p$. \citet{Ahle2020} compute only $p$ independent sketches and subsample from their tensor product instead.} to arbitrary polynomial degrees $p$. 
 \citet{Ahle2020} introduced the name TensorSRHT, which refers to the fact that it implicitly sketches from the $d^p$-dimensional space of tensorized inputs $\mat{x}^{(p)}$, as shown in \cref{eqn:random-tensor-expansion}.

In Section \ref{sec:real-tensorSRHT}, we introduce TensorSRHT with real features, and present its extension using complex features in Section \ref{sec:complex-TensorSRHT}.
We then make a comparison between the real and complex versions of TensorSRHT in Section \ref{sec:compare-tensorSRHT}.

 Note that the TensorSRHT algorithm presented here is a slight modification\footnote{Instead of permuting elements across degrees and blocks, we only permute the elements inside each block as it is done for SRHT \citep[see][]{Tropp2011}. Our sketch and the one proposed by \citet{Ahle2020} are equivalent when $D \leq d$ and different when $D > d$.} of the one proposed by \citet{Hamid2014}. 
We show that our modification still yields unbiased approximations of polynomial kernels. 
It further allows us to derive the variance of the sketch in closed form, which has not been done in any of the previous works. We show, for the first time, that TensorSRHT is more efficient than Rademacher sketches for odd $p$. %

\subsection{Real TensorSRHT} \label{sec:real-tensorSRHT}

TensorSRHT imposes an orthogonality constraint on the vectors $\mat{w}_{i, 1}, \dots, \mat{w}_{i, D}$ through predefined structured matrices, specifically {\em unnormalized Hadamard matrices}.
Let $n := 2^m$ with $m \in \mathbb{N}$, and define $\mat{H}_n \in \{ 1, - 1\}^{n \times n}$ to be the unnormalized Hadamard matrix, which is recursively defined as 
\begin{align} \label{eq:hadamard}
    \mat{H}_{2n} :=
    \begin{bmatrix}
        \mat{H}_n & \mat{H}_n \\
        \mat{H}_n & -\mat{H}_n
    \end{bmatrix},
    \quad
    \text{with}
    \quad
    \mat{H}_2 :=
    \begin{bmatrix}
        1 & 1\\
        1 & -1
    \end{bmatrix}.
\end{align}
From now on, we always use $\mat{H}_d \in \{ 1, -1 \}^{d \times d}$ with $d$ being the dimensionality of input vectors, assuming $d = 2^m$ for some $m \in \mathbb{N}$. If $d \not= 2^m$ for any $m \in \mathbb{N}$, we pad $0$ to input vectors until their dimensionality becomes $2^m$ for some $m \in \mathbb{N}$.
For $i=1,\dots,d$, let $\mat{h}_i \in \{1, -1 \}^d$ be the $i$-th column 
of $\mat{H}_d$, i.e.,
$$
\mat{H}_d = ( \mat{h}_1, \dots, \mat{h}_d )  \in \{1 , -1 \}^{d \times d}.
$$
Note that we have $
\mat{H}_d \mat{H}_d^{\top} = \mat{H}_d^{\top} \mat{H}_d = d \mat{I}_d$,
which implies that distinct columns (and rows) of $\mat{H}_d$ are orthogonal to each other, i.e., $\mat{h}_{i}^\top \mat{h}_j = 0$ for $i \not= j$.

\paragraph{\underline{Case $D = d$.}} For ease of explanation, suppose here that  the number $D$ of random features is equal to the dimensionality $d$ of input vectors: $D = d$. 
For $i = 1, \dots, p$, define $\mat{w}_{i} \in \mathbb{R}^d$ as a random vector whose elements are i.i.d.~Rademacher random variables:
$$
\mat{w}_i := (w_{i,1}, \dots, w_{i,d})^\top \in \mathbb{R}^d, \quad w_{i,j}  \stackrel{i.i.d.}{\sim} {\rm unif}(\{1, -1 \})\quad (j = 1, \dots, d)
$$
Consider a random permutation of the indices $\pi: \{1,\dots, d\} \to \{1,\dots,d\}$, and let
$$\pi(1), \dots, \pi(d)$$ be the permuted indices.
For $i = 1, \dots, p$ and $\ell = 1, \dots, D$, we then define a random vector $\mat{s}_{i, \ell} \in \mathbb{R}^d$ as the Hadamard product (i.e., element-wise product) of the Rademacher vector $\mat{w}_i$ and the permuted column $\mat{h}_{\pi(\ell)}$ of the Hadamard matrix: 
\begin{equation} \label{eq:weight-vector-SRHT}
    \mat{s}_{i, \ell} := \mat{w}_{i} \circ \mat{h}_{\pi(\ell)} = ( w_{i,1} h_{\pi(\ell), 1}, \dots, w_{i,d} h_{\pi(\ell), d} )^\top \in \mathbb{R}^d,  
\end{equation}
where $h_{\pi(\ell), j}$ denotes the $j$-th element of $\mat{h}_{\pi(\ell)}$.

Because of the orthogonality of the columns $\mat{h}_1, \dots, \mat{h}_d$ of the Hadamard matrix $\mat{H}_d$, the random weight vectors $\mat{s}_{i, 1}, \dots, \mat{s}_{i, d}$ are orthogonal to each other almost surely: for $\ell \not= m$, we have
\begin{align*}
 \mat{s}_{i, \ell}^\top \mat{s}_{i, m} =     (\mat{w}_{i} \circ \mat{h}_{\pi(\ell)} )^\top (\mat{w}_{i} \circ \mat{h}_{\pi(m)} ) = \sum_{j = 1}^d w_{i, j}^2 \mat{h}_{\pi(\ell), j} \mat{h}_{\pi(m), j} = \mat{h}_{\pi(\ell)}^\top \mat{h}_{\pi(m)} = 0.
\end{align*}
Note also that, given the permutation $\pi(1), \dots, \pi(d)$, the elements of each random vector $\mat{s}_{i, \ell}$ in \eqref{eq:weight-vector-SRHT} are i.i.d.~Rademacher variables.

Finally, we define a random feature map $\Phi_\mathcal{R}(\mat{x}): \mathbb{R}^d \to \mathbb{R}^D$ for the case $D = d$ as
\begin{align} \label{eq:tensorSRHT}
    \Phi_\mathcal{R}(\mat{x})
    :=  \frac{1}{\sqrt{D}} \left[ (\prod_{i=1}^p \dotprodi{s}{x}{i,1}), \dots, (\prod_{i=1}^p \dotprodi{s}{x}{i,d}) \right]^{\top} \in \mathbb{R}^d,
\end{align}
 which defines an approximate kernel as
 $$\hat{k}_\mathcal{R}(\mat{x}, \mat{y}) := \Phi_\mathcal{R}(\mat{x})^\top  \Phi_\mathcal{R}(\mat{y}) = \frac{1}{D}\sum_{\ell = 1}^D \Phi(\mat{x})_{\mathcal{R}, \ell} \Phi(\mat{y})_{\mathcal{R}, \ell}$$ 
 where $\Phi(\cdot)_{\mathcal{R}, \ell}$ denotes the $\ell$-th element of $\Phi_{\mathcal{R}}(\cdot)$. 
 
 The orthogonality of the weight vectors in \cref{eq:weight-vector-SRHT} 
 leads to {\em negative covariances} between the terms $\Phi(\mat{x})_{\mathcal{R}, \ell} \Phi(\mat{y})_{\mathcal{R}, \ell}$ and $\Phi(\mat{x})_{\mathcal{R}, m} \Phi(\mat{y})_{\mathcal{R}, m}$ with distinct indices $\ell \not= m$ in the approximate kernel. 
These negative covariances decrease the overall variance of the approximate kernel, as we will show later in \cref{lemma:variance-formula-struct} and \cref{sec:analysis-variances-tensorSRHT} 

\paragraph{\underline{Case $D \not= d$.}} 
We now explain the case $D \not =d$. If $D< d$, we first compute the feature map in \cref{eq:tensorSRHT} and keep the first $D$ components of it. 
If $D > d$, we independently generate the feature map in \cref{eq:tensorSRHT}  $B := \left\lceil \frac{D}{d} \right\rceil$ times and concatenate the resulting $B$ vectors to obtain a $Bd$-dimensional feature map, and then discard the redundant last $Bd - D$ components of it to obtain a $D$-dimensional feature map. %
In this way, we can obtain a $D$-dimensional feature map for arbitrary $D \in \mathbb{N}$, which we write as
\begin{align}
    \label{eqn:real-struct-polynomial-sketch}
    \Phi_\mathcal{R}(\mat{x})
    :=  \frac{1}{\sqrt{D}} \left[ (\prod_{i=1}^p \dotprodi{s}{x}{i,1}), \dots, (\prod_{i=1}^p \dotprodi{s}{x}{i,D}) \right]^{\top} \in \mathbb{R}^D.
\end{align}
The entire procedure for constructing the structured polynomial sketch in \cref{eqn:real-struct-polynomial-sketch} is outlined in \cref{alg:tensor-srht-algorithm}, where we also cover the complex-valued case discussed later. 

In \cref{alg:tensor-srht-algorithm}, we use an equivalent matrix formulation, since it enables the Fast Walsh-Hadamard transform by employing the associativity, and thus the feature map can be computed much faster.
To explain this more precisely, let $\mat{D}_i := {\rm diag}(w_{i1}, \dots, w_{id})\in \mathbb{R}^{d \times d}$ be a diagonal matrix whose diagonal entries $w_{i1}, \dots, w_{id} \in \{1, -1\}$ are i.i.d.~Rademacher random variables, and $\mat{P}_\pi := ( \mat{e}_{\pi(1)}, \dots, \mat{e}_{\pi(d)} ) \in \mathbb{R}^{d \times d}$ be a permutation matrix, where $\mat{e}_{\pi(\ell)} \in \mathbb{R}^d$ is a vector whose $\pi(\ell)$-th element is $1$ and other elements are $0$.
We can then compute 
$$(\mat{s}_{i,1}^\top \mat{x}, \dots, \mat{s}_{i,d}^\top \mat{x}) = \mat{x}^\top (\mat{s}_{i,1}, \dots, \mat{s}_{i,d}) =  \mat{x}^\top ( \mat{D}_i \mat{H}_d \mat{P}_\pi ) = \left( (\mat{x}^\top \mat{D}_i) \mat{H}_d \right) \mat{P}_\pi
$$ 
by 1) first computing $\mat{x}^\top \mat{D}_i$, 2) then multiplying the Hadamard matrix $\mat{H}_d$ using the Fast Walsh-Hadamard transform, and 3) lastly multiplying the permutation matrix $\mat{P}_\pi$, which is more efficient than first precomputing $ \mat{D}_i \mat{H}_d \mat{P}_\pi$ and then multiplying $\mat{x}^\top$. 
In this way, thanks to the Fast Walsh-Hadamard transform, $(\mat{s}_{i, 1}^\top \mat{x}, \dots, \mat{s}_{i, d}^\top \mat{x})$ can be computed in $\bigO(d \log{d})$ instead of $\bigO(d^2)$ \citep{Fino1976}.
The total computational complexity is therefore $\bigO(p D \log{d})$ and the memory requirement is $\bigO(p D)$, and this is a computational advantage of TensorSRHT.

\begin{algorithm}[t]
\caption{Real and Complex TensorSRHT}
\label{alg:tensor-srht-algorithm}
    \SetAlgoLined
    \KwResult{A feature map $\Phi_{\mathcal{R}/\mathcal{C}}(\mat{x})$}
    Pad $\mat{x}$ with zeros so that $d$ becomes a power of $2$ \;
    Let $B = \left\lceil \frac{D}{d} \right\rceil$ be the number of stacked projection blocks \;

    \ForAll{$b \in \{1, \dots, B\}$}{ 
        \ForAll{$i \in \{1, \dots, p\}$}{
    \underline{\bf Real case} Generate a random vector ${\bf w}_i = (w_{i,1}, \dots, w_{i,d})^\top \in  \mathbb{R}^d$ as $w_{i,1}, \dots, w_{i,d} \stackrel{i.i.d}{\sim} {\rm unif}( \{ 1, - 1\} )$, and define a diagonal matrix $\mat{D}_i := {\rm diag}(\mat{w}_i) \in \mathbb{R}^{d \times d}$;
    
\underline{\bf Complex case} Generate a random vector ${\bf z}_i = (z_{i,1}, \dots, z_{i,d})^\top \in  \mathbb{C}^d$ as $z_{i,1}, \dots, z_{i,d} \stackrel{i.i.d}{\sim} {\rm unif}( \{ 1, - 1, \iu, - \iu \} )$, and define a diagonal matrix $\mat{D}_i := {\rm diag}(\mat{w}_i) \in \mathbb{C}^{d \times d}$ ;

       Randomly permute the indices $1, \dots, d$ to $\pi(1), \dots, \pi(d)$ \;
       
       Let $\mat{P}_\pi := ( \mat{e}_{\pi(1)}, \dots, \mat{e}_{\pi(d)} ) \in \mathbb{R}^{d \times d}$, where $\mat{e}_{\pi(\ell)} \in \mathbb{R}^d$ is a vector whose $\pi(\ell)$-th element is $1$ and other elements are $0$\ ($\ell = 1,\dots,d$) \;
       
       Let $(\mat{s}_{i, 1}, \dots, \mat{s}_{i, d}) := \mat{D}_i \mat{H}_d \mat{P}_\pi$ \; 
        }
        Compute $\Phi_b(\mat{x}) := \sqrt{1/D} [ (\prod_{i=1}^p \dotprodi{s}{x}{i,1}), \dots, (\prod_{i=1}^p \dotprodi{s}{x}{i,d}) ]^{\top}$ \;
    }
  
    Concatenate the elements of $\Phi_1(\mat{x}), \dots, \Phi_B(\mat{x})$ to yield a single projection vector $\Phi_{\mathcal{R}/\mathcal{C}}(\mat{x})$ and keep the first $D$ entries \;
\end{algorithm}

The feature map in \cref{eqn:real-struct-polynomial-sketch} induces an approximate kernel $\hat{k}_\mathcal{R}( \mat{x}, \mat{y} ) = \Phi_{\mathcal{R}} (\mat{x})^{\top} \Phi_{\mathcal{R}} (\mat{y})$. 
The following proposition summarizes that this approximate kernel is unbiased with respect to the target polynomial kernel $k(\mat{x}, \mat{y}) = (\mat{x}^\top \mat{y})^p$.
As mentioned earlier,  TensorSRHT discussed here is slightly different from the existing versions. 
Therefore this result is novel in its own right. 
The result follows from Proposition \ref{prop:unbiased-struct-approximate-kernel-complex} in the next subsection, so we omit the proof.

\begin{proposition}
\label{prop:unbiased-struct-approximate-kernel}
Let $\mat{x}, \mat{y} \in \mathbb{R}^d$ be arbitrary, and  $\hat{k}_\mathcal{R}( \mat{x}, \mat{y} ) = \Phi_{\mathcal{R}} (\mat{x})^{\top} \Phi_{\mathcal{R}} (\mat{y})$ be the approximate kernel with $\Phi_{\mathcal{R}}(\mat{x}),  \Phi_{\mathcal{R}}(\mat{y}) \in \mathbb{R}^D$ given by the random feature map in \cref{eqn:real-struct-polynomial-sketch}.
Then we have
$ \mathbb{E}[ \hat{k}_\mathcal{R}( \mat{x}, \mat{y} ) ] = ( \mat{x}^\top \mat{y} )^p$.
\end{proposition}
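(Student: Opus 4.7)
The plan is to compute the expectation of each component of $\Phi_{\mathcal{R}}(\mat{x})^\top \Phi_{\mathcal{R}}(\mat{y})$ by conditioning on the random permutation(s) and exploiting the independence of the Rademacher diagonals across the degree index $i$.

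First, I would treat the base case $D = d$ (a single block). Here,
\[
\hat{k}_\mathcal{R}(\mat{x}, \mat{y}) = \frac{1}{D}\sum_{\ell=1}^{D} \prod_{i=1}^p (\mat{s}_{i,\ell}^\top \mat{x})(\mat{s}_{i,\ell}^\top \mat{y}),
\]
where $\mat{s}_{i,\ell} = \mat{w}_i \circ \mat{h}_{\pi(\ell)}$. Conditioning on the permutation $\pi$, the vectors $\mat{s}_{1,\ell}, \dots, \mat{s}_{p,\ell}$ depend on the \emph{independent} Rademacher diagonals $\mat{w}_1, \dots, \mat{w}_p$, hence they are mutually independent given $\pi$. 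Therefore the expectation of the product factorises:
\[
\mathbb{E}\!\left[\prod_{i=1}^p (\mat{s}_{i,\ell}^\top \mat{x})(\mat{s}_{i,\ell}^\top \mat{y}) \,\Big|\, \pi\right] = \prod_{i=1}^p \mathbb{E}\!\left[(\mat{s}_{i,\ell}^\top \mat{x})(\mat{s}_{i,\ell}^\top \mat{y}) \,\Big|\, \pi\right].
\]

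Next, I would compute each factor. Writing $\mat{s}_{i,\ell} \mat{s}_{i,\ell}^\top = \mathrm{diag}(\mat{h}_{\pi(\ell)})\, \mat{w}_i \mat{w}_i^\top\, \mathrm{diag}(\mat{h}_{\pi(\ell)})$ and using $\mathbb{E}[\mat{w}_i \mat{w}_i^\top] = \mat{I}_d$, together with the fact that every entry of $\mat{h}_{\pi(\ell)}$ is in $\{-1,+1\}$ so that $\mathrm{diag}(\mat{h}_{\pi(\ell)})^2 = \mat{I}_d$, one obtains $\mathbb{E}[\mat{s}_{i,\ell} \mat{s}_{i,\ell}^\top \mid \pi] = \mat{I}_d$. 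Thus each factor equals $\mat{x}^\top \mat{y}$, and the expectation of each summand is $(\mat{x}^\top \mat{y})^p$. Averaging over $\ell = 1,\dots,D$ and applying the tower property gives $\mathbb{E}[\hat{k}_\mathcal{R}(\mat{x},\mat{y})] = (\mat{x}^\top \mat{y})^p$ for $D = d$.

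Finally, I would extend the argument to arbitrary $D \in \mathbb{N}$. For $D < d$, keeping only the first $D$ of the $d$ coordinates gives an average of $D$ terms, each of which has conditional expectation $(\mat{x}^\top \mat{y})^p$ by the calculation above, so unbiasedness is preserved. For $D > d$, the feature map is a concatenation of $B = \lceil D/d \rceil$ independently generated blocks, each produced by the same procedure and with the same scaling $1/\sqrt{D}$; the first $D$ entries of the concatenation are again an average of $D$ identically-distributed terms whose individual expectations equal $(\mat{x}^\top \mat{y})^p$. I do not expect a real obstacle here: the key fact is that conditioning on the (possibly multiple) permutations eliminates all dependence between the coordinates, and then the Rademacher calculation is elementary. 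The only subtle point is verifying that our per-block normalisation $1/\sqrt{D}$ (rather than $1/\sqrt{d}$) combines correctly across the $B$ blocks, which follows directly from writing the final estimator as $\frac{1}{D}\sum_{\ell=1}^{D}(\cdot)$ on the concatenated coordinates.
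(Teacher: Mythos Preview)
Your proposal is correct and essentially matches the paper's argument. The paper obtains this result as the real special case of the complex version (Proposition~\ref{prop:unbiased-struct-approximate-kernel-complex}), whose proof computes $\mathbb{E}[\mat{s}_{i,\ell}\overline{\mat{s}_{i,\ell}}^\top] = \mat{I}_d$ elementwise and then factorises the expectation of the product over $i$; you make the conditioning on $\pi$ (which justifies that factorisation) explicit and spell out the block structure for general $D$, but the underlying computation is the same.
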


We next study the variance of the approximate kernel given by TensorSRHT, which is the mean squared error of the approximate kernel since it is unbiased as shown above.  
The following theorem provides a closed form expression for the variance, whose proof is given for the more general complex case in \cref{sec:proof-var-comp-TensorSRHT}. 
It is a novel result and extends \citet[Theorem 3.3]{Choromanski2017} to the setting $p > 1$ and $D > d$.

\begin{theorem}[Variance of Real TensorSRHT] \label{lemma:variance-formula-struct} 
Let $\mat{x}, \mat{y} \in \mathbb{R}^d$ be arbitrary, and  $\hat{k}_\mathcal{R}( \mat{x}, \mat{y} ) = \Phi_{\mathcal{R}} (\mat{x})^{\top} \Phi_{\mathcal{R}} (\mat{y})$ be the approximate kernel with $\Phi(\mat{x}),  \Phi(\mat{y}) \in \mathbb{R}^D$ given by the random feature map in \cref{eqn:real-struct-polynomial-sketch}. 
Then we have
\begin{align}
  \mathbb{V} \left[ \hat{k}_\mathcal{R}( \mat{x}, \mat{y} ) \right]  
     = \underbrace{\frac{V_{\rm Rad}^{(p)}}{D} }_{(A)}  - \underbrace{ \frac{c(D,d)}{D^2}  \left[ (\mat{x}^\top \mat{y})^{2p}  -  \left( ( \mat{x}^\top \mat{y} )^2 - \frac{V_{\rm Rad}^{(1)}}{d-1}   \right)^p \right] }_{(B)}
    \label{eqn:tensor-srht-variance},
\end{align}
where  $V_{\rm Rad}^{(p)} \geq 0$ and $ V_{\rm Rad}^{(1)} \geq 0$ are the variances of the real Rademacher sketch with a single feature in \cref{eqn:rademacher-variance} with generic $p \in \mathbb{N}$ and $p = 1$, respectively,  and $c(D,d) \in \mathbb{N}$ is defined by
\begin{equation} \label{eq:non-zero-pairs}
   c(D,d) := \lfloor D/d \rfloor d(d-1) + {\rm mod}(D, d)( {\rm mod}(D, d) - 1).
\end{equation}

\end{theorem}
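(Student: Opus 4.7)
The plan is to expand the variance of $\hat{k}_\mathcal{R}(\mat{x}, \mat{y}) = D^{-1} \sum_{\ell=1}^D T_\ell$, with $T_\ell := \prod_{i=1}^p (\mat{s}_{i,\ell}^\top \mat{x})(\mat{s}_{i,\ell}^\top \mat{y})$, into diagonal variances and off-diagonal covariances, and to evaluate each contribution separately. Since element-wise multiplication by a $\pm 1$ vector preserves a Rademacher distribution, for every fixed $\ell$ the coordinates of $\mat{s}_{i,\ell} = \mat{w}_i \circ \mat{h}_{\pi(\ell)}$ are i.i.d.~Rademacher, hence $T_\ell$ has the same marginal law as one term of the real Rademacher sketch of \cref{sec:polynomial-sketches}, giving $\mathbb{V}[T_\ell] = V_{\rm Rad}^{(p)}$. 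Off-diagonal pairs $(\ell, m)$ in distinct blocks use wholly independent randomness so $\mathrm{Cov}(T_\ell, T_m) = 0$, and counting the within-block ordered pairs across the $\lfloor D/d \rfloor$ full blocks of size $d$ and the residual block of size $\mathrm{mod}(D,d)$ recovers $c(D,d)$ exactly as in \cref{eq:non-zero-pairs}.

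The substantive calculation is the within-block covariance. Since the random vectors $\{\mat{s}_{i,\cdot}\}_i$ are independent across the degree index $i$ (independent permutations and independent Rademacher sign flips), I obtain $\mathbb{E}[T_\ell T_m] = f^p$ with $f := \mathbb{E}[(\mat{s}_{1,\ell}^\top \mat{x})(\mat{s}_{1,\ell}^\top \mat{y})(\mat{s}_{1,m}^\top \mat{x})(\mat{s}_{1,m}^\top \mat{y})]$. I condition on $\pi$ and first integrate against $\mat{w}_1$ via the Rademacher fourth-moment identity
$$\mathbb{E}[(\mat{w}^\top \mat{a})(\mat{w}^\top \mat{b})(\mat{w}^\top \mat{c})(\mat{w}^\top \mat{d})] = (\mat{a}^\top \mat{b})(\mat{c}^\top \mat{d}) + (\mat{a}^\top \mat{c})(\mat{b}^\top \mat{d}) + (\mat{a}^\top \mat{d})(\mat{b}^\top \mat{c}) - 2\sum_k a_k b_k c_k d_k,$$
applied to $\mat{a} = \mat{h}_{\pi(\ell)} \circ \mat{x}$, $\mat{b} = \mat{h}_{\pi(\ell)} \circ \mat{y}$, $\mat{c} = \mat{h}_{\pi(m)} \circ \mat{x}$, $\mat{d} = \mat{h}_{\pi(m)} \circ \mat{y}$. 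Because each Hadamard entry squares to $1$, the pair $(\mat{a}^\top \mat{b})(\mat{c}^\top \mat{d})$ collapses to $(\mat{x}^\top \mat{y})^2$ and the quartic sum to $\sum_k x_k^2 y_k^2$; the remaining two products are quadratic forms in $u_k := h_{\pi(\ell), k} h_{\pi(m), k}$.

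The final step averages over $\pi$. Using the group structure of the Sylvester--Hadamard construction, $\mat{u} = \mat{h}_{\pi(\ell)} \circ \mat{h}_{\pi(m)}$ is uniformly distributed over the $d-1$ non-constant columns of $\mat{H}_d$, and a direct computation from $\mat{H}_d^\top \mat{H}_d = d \mat{I}_d$ yields $\mathbb{E}_\pi[u_k^2] = 1$ and $\mathbb{E}_\pi[u_k u_{k'}] = -1/(d-1)$ for $k \neq k'$. Substituting these pairwise moments into $(\sum_k u_k x_k^2)(\sum_k u_k y_k^2)$ and $(\sum_k u_k x_k y_k)^2$, then combining with the non-random pieces, a clean algebraic cancellation gives the compact identity
$$f = (\mat{x}^\top \mat{y})^2 - \frac{V_{\rm Rad}^{(1)}}{d-1},$$
with $V_{\rm Rad}^{(1)} = \norm{x}^2 \norm{y}^2 + (\mat{x}^\top \mat{y})^2 - 2\sum_k x_k^2 y_k^2$ read off from \cref{eqn:rademacher-variance} at $p=1$. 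Assembling the diagonal contribution $V_{\rm Rad}^{(p)}/D$ with the $c(D,d)$ within-block covariances, each equal to $f^p - (\mat{x}^\top \mat{y})^{2p}$, yields exactly \cref{eqn:tensor-srht-variance}.

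The main obstacle is the last reduction: although the marginal expectations $\mathbb{E}_\pi[u_k]$ are asymmetric in $k$ (because the all-ones column is one of the $d$ columns of $\mat{H}_d$), the quantity $f$ depends on $\mat{u}$ only through the symmetric pairwise moments $\mathbb{E}_\pi[u_k u_{k'}]$, and it is this collapse---together with the neat merging of three $\pi$-dependent terms into a single expression scaled by $1/(d-1)$---that is the nontrivial algebraic step. The complex version treated in \cref{sec:proof-var-comp-TensorSRHT} follows the same blueprint, with the Rademacher fourth-moment identity replaced by its Hermitian counterpart and the uniform-over-non-constant-columns argument carried through the conjugation structure of the complex weights.
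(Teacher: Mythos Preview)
Your proof is correct and follows the same overall architecture as the paper's (carried out for the general complex case in \cref{sec:proof-var-comp-TensorSRHT}): the same three-way split into diagonal terms, within-block off-diagonal pairs counted by $c(D,d)$, and independent across-block pairs, followed by factorization over the degree index $i$ and evaluation of the single-degree cross moment $f$. The one genuine variation is how you obtain $\mathbb{E}_\pi[u_k u_{k'}] = -1/(d-1)$: you use the group structure of the Sylvester--Hadamard columns under elementwise product to argue that $\mat{u}=\mat{h}_{\pi(\ell)}\circ\mat{h}_{\pi(m)}$ is uniform over the $d-1$ non-constant columns, whereas the paper isolates this as Lemma~\ref{lemma:key-lemma-hadamard} and proves it by a direct counting argument (exactly half of the products $h_{\ell,t}h_{\ell,u}$ equal $+1$ by row orthogonality). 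Your route is slightly slicker for the Sylvester construction actually used here; the paper's counting argument has the small advantage of using only row orthogonality and $\pm1$ entries, so it would go through unchanged for an arbitrary Hadamard matrix.
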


\begin{remark} \label{remark:var-real-tensorSRHT}
The constant $c(D,d)$ in \cref{eq:non-zero-pairs} is the number of  pairs of indices $\ell, \ell' = 1, \dots, D$ with $\ell \not= \ell'$  for which the covariance of the weight vectors $\mat{s}_{i, \ell}$ and $\mat{s}_{i, \ell'}$ in \cref{eqn:real-struct-polynomial-sketch} is non-zero (see the proof in Appendix \ref{sec:proof-var-comp-TensorSRHT} for details). 
If $D = Bd$ for some $B \in \mathbb{N}$, this constant simplifies to $c(D,d) = B d (d-1)$, and the variance in \cref{eqn:tensor-srht-variance} becomes 
$$
  \mathbb{V} \left[ \hat{k}_\mathcal{R}( \mat{x}, \mat{y} ) \right] =    \frac{1}{D} V_{\rm Rad}^{(p)}  - \frac{d-1}{D}  \left[ (\mat{x}^\top \mat{y})^{2p}  -  \left( ( \mat{x}^\top \mat{y} )^2 - \frac{V_{\rm Rad}^{(1)}}{d-1}  \right)^p \right].
$$
An interesting subcase is $p=1$,  for which the variance becomes zero. Thus, setting $D \in \{kd \mid k\in\mathbb{N}\}$ for $p=1$ is equivalent to using the linear kernel with the original inputs.
\end{remark}

\cref{lemma:variance-formula-struct} enables understanding the condition under which TensorSRHT has a smaller variance than the unstructured Rademacher sketch in  \cref{eqn:polynomial-estimator}.
Note that the term $(A)$ in \cref{eqn:tensor-srht-variance}  is the variance of the approximate kernel with the Rademacher sketch with $D$ features. 
On the other hand, the term $(B)$  in \cref{eqn:tensor-srht-variance} can be interpreted as the effect of the structured sketch. 
The term $(B)$ always becomes non-negative when $p$ is {\rm odd}, and thus the overall variance of TensorSRHT becomes smaller than the Rademacher sketch, as  summarized in the following corollary.
Thus, when $p$ is odd, TensorSRHT should be preferred over the Rademacher sketch.

\begin{corollary} \label{thrm:tensor-srht-odd-p}
Let $p \in \mathbb{N}$ be odd.
Then,  for all input vectors $\mat{x}, \mat{y} \in \mathbb{R}^d$, the variance of the approximate kernel with TensorSRHT in \cref{eqn:tensor-srht-variance} is smaller or equal to the variance of the approximate kernel with the Rademacher sketch: %
$$
\frac{V_{\rm Rad}^{(p)}}{D} - \frac{c(D,d)}{D^2}  \left[ (\mat{x}^\top \mat{y})^{2p}  -  \left( ( \mat{x}^\top \mat{y} )^2 - \frac{V_{\rm Rad}^{(1)}}{d-1}   \right)^p \right] \leq \frac{V_{\rm Rad}^{(p)}}{D}  
$$
\end{corollary}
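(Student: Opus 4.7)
The plan is to reduce the claim directly to a sign analysis of the bracketed expression in term $(B)$ of the variance formula \cref{eqn:tensor-srht-variance} provided by \cref{lemma:variance-formula-struct}. Since the coefficient $c(D,d)/D^2$ in front of that bracket is manifestly nonnegative (both summands in \cref{eq:non-zero-pairs} are nonnegative, using $\mathrm{mod}(D,d)(\mathrm{mod}(D,d)-1) \geq 0$ for integers), the desired inequality $\mathbb{V}[\hat{k}_\mathcal{R}(\mat{x},\mat{y})] \leq V_{\rm Rad}^{(p)}/D$ is equivalent to
\[
(\mat{x}^\top \mat{y})^{2p} - \left( (\mat{x}^\top \mat{y})^2 - \tfrac{V_{\rm Rad}^{(1)}}{d-1} \right)^p \geq 0,
\]
and my whole task is to verify this for odd $p$.

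To make the structure transparent, I would introduce the shorthand $a := (\mat{x}^\top \mat{y})^2 \geq 0$ and $b := V_{\rm Rad}^{(1)}/(d-1)$. The quantity $b$ is nonnegative because $V_{\rm Rad}^{(1)}$ is a variance (and $d \geq 2$ for TensorSRHT to be meaningful, so the denominator is positive). The inequality to prove then becomes simply $a^p - (a-b)^p \geq 0$ with $a, b \geq 0$ and $p$ odd.

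I would then split into two cases. If $a \geq b$, then $0 \leq a-b \leq a$, and since $t \mapsto t^p$ is monotone increasing on $[0,\infty)$ for $p \in \mathbb{N}$, we obtain $(a-b)^p \leq a^p$. If instead $a < b$, then $a-b < 0$; using that $p$ is odd, $(a-b)^p < 0 \leq a^p$, so the inequality holds strictly. In both cases the bracket in term $(B)$ is nonnegative, and substituting back into \cref{eqn:tensor-srht-variance} gives the claim.

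There is no genuine obstacle here; the only thing to be careful about is the case where $a - b$ could be negative, which is precisely where the hypothesis that $p$ is odd becomes essential (for even $p$ one would have $(a-b)^p \geq 0$ and potentially larger than $a^p$, so the argument would fail). The result is an immediate consequence of \cref{lemma:variance-formula-struct} together with this one-line sign analysis, so the proof should fit in a few lines.
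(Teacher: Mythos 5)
Your proposal is correct and follows essentially the same route as the paper's proof: both reduce the claim to showing that the bracketed term $(\mat{x}^\top \mat{y})^{2p} - \bigl( (\mat{x}^\top \mat{y})^2 - V_{\rm Rad}^{(1)}/(d-1) \bigr)^p$ is nonnegative, which follows from $V_{\rm Rad}^{(1)} \geq 0$ together with the monotonicity of $t \mapsto t^p$ for odd $p$ (your two-case split is just an explicit version of this monotonicity on all of $\mathbb{R}$). Your extra remark that $c(D,d) \geq 0$ is a harmless bit of added care that the paper leaves implicit.
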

\begin{proof}
Since, $V_{\rm Rad}^{(1)} \geq 0$, we have  $( \mat{x}^\top \mat{y} )^2 - \frac{1}{d-1} V_{\rm Rad}^{(1)} \leq ( \mat{x}^\top \mat{y} )^2$. For odd $p$ this leads to $\left( ( \mat{x}^\top \mat{y} )^2 - \frac{1}{d-1} V_{\rm Rad}^{(1)} \right)^p \leq (\mat{x}^\top \mat{y})^{2p}$. The assertion immediately follows.
\end{proof}

If $p$ is even, on the other hand, the variance of TensorSRHT can be larger than the Rademacher sketch for certain input vectors
 $\mat{x}, \mat{y} \in \mathbb{R}^d$.
For instance, if $\mat{x}$ and $\mat{y}$ are orthogonal, i.e., $\mat{x}^\top \mat{y} = 0$, then the variance of TensorSRHT in \cref{eqn:tensor-srht-variance} is 
$$
\text{\cref{eqn:tensor-srht-variance}} = \frac{V_{\rm Rad}^{(p)}}{D} + \frac{c(D,d)}{D^2} \left( \frac{ V_{\rm Rad}^{(1)} }{d-1} \right)^p \geq \frac{V_{\rm Rad}^{(p)}}{D}   .
$$
Therefore, for even $p$, we do not have a theoretical guarantee for the advantage of TensorSRHT over the Rademacher sketch in terms of their variances. 
In practice, however,  TensorSRHT has often  a smaller variance than the Rademacher sketch also for even $p$, as demonstrated in our experiments described later. 
Moreover, TensorSRHT has a computational advantage over the Rademacher sketch, thanks to the fast Walsh-Hadamard transform.

\begin{remark}
One can straightforwardly derive a probabilistic error bound for TensorSRHT by using \cref{lemma:variance-formula-struct} and Chebyshev's inequality. However, deriving an exponential tail bound for TensorSRHT is more involved, because different features in the feature map $\Phi_\mathcal{R}(\mat{x})$ in \cref{eq:tensorSRHT} are dependent for TensorSRHT and thus applying Bersnstein's inequality is not straightforward.
One can find an exponential tail bound for TensorSRHT in \citet[Lemma 33 in the longer version]{Ahle2020}, while they analyze a slightly different version of TensorSRHT from ours and their bound is a uniform upper bound that holds for all input vectors simultaneously. 

\end{remark}

Our variance formula in \cref{eqn:tensor-srht-variance}, which is a novel contribution to the literature, provides a precise characterization of how the variance of the approximate kernel depends on the input vectors $\mat{x}, \mat{y} \in \mathbb{R}^d$, and shows when TensorSRHT is more advantageous than the Rademacher sketch.
Moreover, as the variance formula can be computed in practice, it can be used for designing an objective function for a certain optimization problem, as we do in Section \ref{sec:improving-maclaurin} for designing a data-driven approach to feature construction.

\subsection{Complex-valued TensorSRHT}

\label{sec:complex-TensorSRHT}

We present here a generalization of TensorSRHT by allowing for complex features. To this end, let $z \in \mathbb{C}$ be a random variable such that (i) $| z | = 1$ almost surely, (ii) $\mathbb{E}[z] = 0$ and (iii) $z$ is symmetric, i.e., the distributions of $z$ and $-z$ are the same.
Define then $\mat{z}_1, \dots, \mat{z}_p \in \mathbb{C}^d$ as i.i.d.~complex random vectors such that elements of each random vector $\mat{z}_i$ are i.i.d.~realizations of $z$: %
\begin{equation} \label{eq:complex-vectors}
    \mat{z}_i = (z_{i,1}, \dots, z_{i, d})^\top \in \mathbb{C}^d, \quad z_{i,j} \stackrel{i.i.d.}{\sim}  P_z \quad (j=1,\dots,d),
\end{equation}
where $P_z$ denotes the probability distribution of $z$.

Let $\pi : \{1,\dots,d\} \to \{1,\dots,d\}$ be a random permutation of indices $1,\dots,d$. For $i=1,\dots,p$ and $\ell = 1,\dots,D$, we then define a random vector $\mat{s}_{i,\ell} \in \mathbb{C}^d$ as the Hadamard product of the random vector $\mat{z}_i$ in \eqref{eq:complex-vectors} and the permuted column $\mat{h}_{\pi(\ell)}$ of the Hadamard matrix $\mat{H}_d$:
\begin{equation} \label{eq:complex-weight-vector-SRHT}
    \mat{s}_{i, \ell} := \mat{z}_{i} \circ \mat{h}_{\pi(\ell)} = ( z_{i,1} h_{\pi(\ell), 1}, \dots, z_{i,d} h_{\pi(\ell), d} )^\top \in \mathbb{C}^d,  
\end{equation}

With these weight vectors $\mat{s}_{i, \ell}$, we define a random feature map exactly in the same way as the feature map in  \cref{eqn:real-struct-polynomial-sketch} for the real TensorSRHT in Section \ref{sec:real-tensorSRHT}. We define the resulting feature map $\Phi_{\mathcal{C}}: \mathbb{R}^d \to \mathbb{C}^D$  by
\begin{align}
    \label{eqn:complex-struct-polynomial-sketch}
    \Phi_\mathcal{C}(\mat{x})
    :=  \frac{1}{\sqrt{D}} \left[ (\prod_{i=1}^p \dotprodi{s}{x}{i,1}), \dots, (\prod_{i=1}^p \dotprodi{s}{x}{i,D}) \right]^{\top} \in \mathbb{C}^D.
\end{align}
We call this feature construction {\em complex TensorSRHT}.

Admissible examples of the distribution $P_z$ in \cref{eq:complex-vectors} include: (1) the uniform distribution on $\{1, -1\}$; (2)  the uniform distribution on $\{1, -1, \iu, - \iu\}$; and (3) the uniform distribution on the unit circle in $\mathbb{C}^d$.
Example (1) is where $z$ is a real Rademacher random variable, and in this case, the complex TensorSRHT coincides with the real TensorSRHT. Thus, the complex TensorSRHT is a strict generalization of the real TensorSRHT. 

We first show that the complex TensorSRHT provides an unbiased approximation of the polynomial kernel $k(\mat{x}, \mat{y}) = (\dotprod{x}{y})^p$.
Since the real TensorSRHT is a special case, its unbiasedness follows from this result. 

\begin{proposition}
\label{prop:unbiased-struct-approximate-kernel-complex}
Let $\mat{x}, \mat{y} \in \mathbb{R}^d$ be arbitrary, and  $\hat{k}_\mathcal{C}( \mat{x}, \mat{y} ) = \Phi_{\mathcal{C}} (\mat{x})^{\top} \overline{\Phi_{\mathcal{C}} (\mat{y})}$ be the approximate kernel with $\Phi_\mathcal{C}(\mat{x}),  \Phi_\mathcal{C}(\mat{y}) \in \mathbb{C}^D$ given by the random feature map in \cref{eqn:complex-struct-polynomial-sketch}.
Then we have
$ \mathbb{E}[ \hat{k}_\mathcal{C}( \mat{x}, \mat{y} ) ] = ( \mat{x}^\top \mat{y} )^p$.
\end{proposition}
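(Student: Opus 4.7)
The plan is to show that every summand $\prod_{i=1}^p (\mat{s}_{i,\ell}^\top \mat{x}) \overline{(\mat{s}_{i,\ell}^\top \mat{y})}$ in the definition of $\hat{k}_\mathcal{C}(\mat{x},\mat{y})$ has expectation equal to $(\mat{x}^\top \mat{y})^p$, after which averaging over $\ell=1,\dots,D$ immediately yields the claim. The natural route is to condition on the random permutation $\pi$ and exploit the fact that, given $\pi$, the weight vectors $\mat{s}_{1,\ell},\dots,\mat{s}_{p,\ell}$ are mutually independent across the degree index $i$, since each $\mat{s}_{i,\ell}=\mat{z}_i\circ \mat{h}_{\pi(\ell)}$ involves its own independent copy $\mat{z}_i$ while sharing the same deterministic column $\mat{h}_{\pi(\ell)}$.

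With this conditional independence, the expectation factorizes as
$$\mathbb{E}\!\left[\prod_{i=1}^p (\mat{s}_{i,\ell}^\top \mat{x})\overline{(\mat{s}_{i,\ell}^\top \mat{y})}\,\Big|\,\pi\right]=\prod_{i=1}^p \mathbb{E}\!\left[(\mat{s}_{i,\ell}^\top \mat{x})\overline{(\mat{s}_{i,\ell}^\top \mat{y})}\,\Big|\,\pi\right].$$
For a single factor I would expand the inner products over coordinates: $\mat{s}_{i,\ell}^\top \mat{x}=\sum_j z_{i,j}h_{\pi(\ell),j}x_j$ and $\overline{\mat{s}_{i,\ell}^\top \mat{y}}=\sum_k \overline{z_{i,k}}h_{\pi(\ell),k}y_k$, so the product becomes $\sum_{j,k} z_{i,j}\overline{z_{i,k}}\,h_{\pi(\ell),j}h_{\pi(\ell),k}\,x_j y_k$. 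The key moment computation is $\mathbb{E}[z_{i,j}\overline{z_{i,k}}]=\delta_{jk}$, which is immediate from the assumptions on $z$: if $j=k$ then $|z|=1$ a.s.\ gives $\mathbb{E}[|z_{i,j}|^2]=1$, while if $j\neq k$ then the i.i.d.\ structure and $\mathbb{E}[z]=0$ give $\mathbb{E}[z_{i,j}]\mathbb{E}[\overline{z_{i,k}}]=0$. Combined with $h_{\pi(\ell),j}^2=1$ (entries of Hadamard matrices are $\pm1$), the double sum collapses to $\sum_j x_j y_j=\mat{x}^\top \mat{y}$, with no dependence on $\pi$ or $\ell$.

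Multiplying the $p$ identical factors gives $(\mat{x}^\top \mat{y})^p$ conditionally on $\pi$, and since the result does not depend on $\pi$, taking outer expectation is trivial; averaging over $\ell$ then preserves the constant and yields the stated unbiasedness. For the block-based construction used when $D>d$ (Algorithm \ref{alg:tensor-srht-algorithm}), each of the $B=\lceil D/d\rceil$ blocks is generated independently with its own random vectors and permutation, so exactly the same argument applies within each block, and truncating to $D$ entries does not affect unbiasedness.

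I do not expect a real obstacle here: the argument reduces to the orthogonality relation $\mathbb{E}[z_{i,j}\overline{z_{i,k}}]=\delta_{jk}$, which is where all three listed properties of $z$ enter (though only $|z|=1$ a.s.\ and $\mathbb{E}[z]=0$ are strictly needed for unbiasedness; the symmetry assumption will play a role later, in the variance analysis). Notably, the Hadamard structure is used only through $h_{ij}^2=1$, so the proof does not exploit the orthogonality of distinct columns of $\mat{H}_d$; that orthogonality is what enters the variance expression in the companion result \cref{lemma:variance-formula-struct}, not the bias.
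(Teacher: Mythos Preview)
Your proposal is correct and follows essentially the same route as the paper: both arguments reduce to showing $\mathbb{E}[\mat{s}_{i,\ell}\overline{\mat{s}_{i,\ell}}^{\top}]=\mat{I}_d$ (equivalently, $\mathbb{E}[z_{i,j}\overline{z_{i,k}}\,h_{\pi(\ell),j}h_{\pi(\ell),k}]=\delta_{jk}$), then factorize over $i$ and average over $\ell$. Your explicit conditioning on $\pi$ before factorizing across $i$ is in fact slightly more careful than the paper's presentation, since the $\mat{s}_{i,\ell}$ share the permutation $\pi$ and are therefore only \emph{conditionally} independent; the paper glosses over this, but the end result is the same because the conditional expectation does not depend on $\pi$.
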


\begin{proof}
We first show $\mathbb{E} [\mat{s}_{i, \ell} \overline{\mat{s}_{i, \ell}}^{\top}] = \mat{I}_d$ for all $i=1, \dots, p$ and $\ell = 1, \dots, D$.
This follows from the fact that, for all $t, u = 1,\dots,d$, we have
$$
\mathbb{E} [ ( \mat{s}_{i, \ell} \overline{\mat{s}_{i, \ell}}^{\top} )_{tu}] = \mathbb{E}[ z_{i, t} h_{\pi(\ell),t} \overline{z_{i, u}} h_{\pi(\ell),u}  ] = 
\begin{cases}
\mathbb{E}[ |z_{i, t}|^2] \mathbb{E}[ h_{\pi(\ell),t}^2  ] = 1 \quad (\text{if }\ t = u),\\
\mathbb{E}[ z_{i, t}] \mathbb{E}[\overline{z_{i, u}}] \mathbb{E}[  h_{\pi(\ell),t} h_{\pi(\ell),u}  ] = 0 \quad (\text{if }\ t \not= u),
\end{cases}.
$$
Using this, we have
\begin{align*} \mathbb{E}\left[\Phi_{\mathcal{C}}(\mat{x})^{\top} \overline{\Phi_{\mathcal{C}}(\mat{y})}\right]
    = \mathbb{E} \left[\frac{1}{D} \sum_{\ell=1}^D \prod_{i=1}^p \mat{x}^{\top} \mat{s}_{i,\ell} \overline{\mat{s}_{i,\ell}}^{\top} \mat{y} \right] = \frac{1}{D} \sum_{\ell=1}^D \prod_{i=1}^p \mat{x}^{\top} \mathbb{E} [\mat{s}_{i,\ell} \overline{\mat{s}_{i,\ell}}^{\top}] \mat{y}
    = (\dotprod{x}{y})^p
\end{align*}
\end{proof}

We now study the variance of the approximate kernel given by the complex TensorSRHT in \cref{eqn:complex-struct-polynomial-sketch}. 
To this end, we use the same notation as \cref{thm:expression-var-complex-aprox-kernel} to write the real and imaginary parts of the random variable $z$ as $z = a + \iu b$ with real-valued random variables $a, b \in \mathbb{R}$.
The proof of the following theorem is provided in \cref{sec:proof-var-comp-TensorSRHT}.
\begin{theorem}[Variance of Complex TensorSRHT] \label{thm:variance-complexTensorSRHT}
Let $\mat{x}, \mat{y} \in \mathbb{R}^d$ be arbitrary, and  $\hat{k}_\mathcal{C}( \mat{x}, \mat{y} ) = \Phi_{\mathcal{C}} (\mat{x})^{\top} \overline{\Phi_{\mathcal{C}} (\mat{y})}$ be the approximate kernel with $\Phi_\mathcal{C}(\mat{x}),  \Phi_\mathcal{C}(\mat{y}) \in \mathbb{C}^D$ given by the complex random feature map in  \cref{eqn:complex-struct-polynomial-sketch}. 
For the random variable $z$ defining \cref{eq:complex-vectors}, 
write $z = a + \iu b$ with $a, b \in \mathbb{R}$, and suppose that 
\begin{equation*}  
    \mathbb{E}[a b] = 0, \quad \mathbb{E}[a^2] = q, \quad \mathbb{E}[b^2] = 1-q \quad \text{where}\quad  0 \leq q \leq 1.
\end{equation*}
Then we have
\begin{equation}  \label{eqn:complex-tensor-srht-variance}
 \mathbb{V}[\hat{k}_\mathcal{C}(\mat{x}, \mat{y})] = \underbrace{\frac{V_q^{(p)}}{D}}_{(A)}     - \underbrace{\frac{c(D,d)}{D^2} \left[ (\mat{x}^\top \mat{y})^{2p}  -  \left( ( \mat{x}^\top \mat{y} )^2 - \frac{V_q^{(1)}}{d-1}   \right)^p \right] }_{(B)},
\end{equation}
where $V_q^{(p)} \geq 0$ and $V_q^{(1)} \geq 0$  are \cref{eq:complex-poly-var-coro-2}  with the considered value of $p$ and $p = 1$, respectively, and  $c(D,d) \in \mathbb{N}$ is defined in \eqref{eq:non-zero-pairs}.

\end{theorem}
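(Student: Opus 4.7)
The strategy is to exploit unbiasedness (\cref{prop:unbiased-struct-approximate-kernel-complex}) to write $\mathbb{V}[\hat{k}_\mathcal{C}(\mat{x},\mat{y})] = \mathbb{E}[|\hat{k}_\mathcal{C}(\mat{x},\mat{y})|^2] - (\mat{x}^\top\mat{y})^{2p}$, then expand the square. Letting $T_\ell := \prod_{i=1}^p (\mat{s}_{i,\ell}^\top \mat{x})\,\overline{(\mat{s}_{i,\ell}^\top \mat{y})}$, we have $\hat{k}_\mathcal{C} = \frac{1}{D}\sum_\ell T_\ell$ and thus $\mathbb{E}[|\hat{k}_\mathcal{C}|^2] = \frac{1}{D^2}\sum_{\ell,\ell'}\mathbb{E}[T_\ell \overline{T_{\ell'}}]$. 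I would partition the $D^2$ index pairs into three groups: (i) the $D$ diagonal pairs $\ell=\ell'$; (ii) the $c(D,d)$ off-diagonal pairs lying in a common block of size $d$; and (iii) the remaining $D^2 - D - c(D,d)$ off-diagonal pairs from different blocks. Since distinct blocks are constructed from fully independent copies of $(\mat{z}_i,\pi)$, the third group factorizes via $\mathbb{E}[T_\ell]\,\mathbb{E}[\overline{T_{\ell'}}] = (\mat{x}^\top\mat{y})^{2p}$ and contributes only a constant offset that will be partially cancelled by the subtraction of $(\mat{x}^\top\mat{y})^{2p}$.

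For the diagonal contribution I would use independence across $i$ to reduce to the single-$i$ factor $\mathbb{E}[|\mat{s}_\ell^\top \mat{x}|^2 |\mat{s}_\ell^\top \mat{y}|^2]$. The key observation is that $\mat{s}_{i,\ell} = \mat{z}_i \circ \mat{h}_{\pi(\ell)}$ where each $h_{j,\pi(\ell)} \in \{-1,+1\}$, so the Hadamard signs act as coordinate-wise sign flips of $\mat{x}$ and $\mat{y}$, which leave $\|\mat{x}\|^2$, $\|\mat{y}\|^2$, $\mat{x}^\top\mat{y}$ and $\sum_k x_k^2 y_k^2$ invariant. The diagonal factor therefore coincides with the single-feature unstructured complex sketch expectation; since $|z|=1$ forces $\mathbb{E}[|z|^4]=1$, \cref{thm:expression-var-complex-aprox-kernel} gives $\mathbb{E}[|T_\ell|^2] = V_q^{(p)} + (\mat{x}^\top\mat{y})^{2p}$, which accounts for term $(A)$ in \eqref{eqn:complex-tensor-srht-variance}.

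The crux is the same-block off-diagonal term. I would expand $\mathbb{E}[(\mat{s}_\ell^\top \mat{x})(\overline{\mat{s}_\ell^\top \mat{y}})\overline{(\mat{s}_{\ell'}^\top \mat{x})}(\mat{s}_{\ell'}^\top \mat{y})]$ as a quadruple sum over indices $j,k,j',k'$. Using $\mathbb{E}[z]=0$, $\mathbb{E}[|z|^2]=1$ and $\mathbb{E}[z^2]=\mathbb{E}[a^2]-\mathbb{E}[b^2]+2\iu\,\mathbb{E}[ab]=2q-1$, only four index-pairing patterns survive the $\mathbf{z}_i$-expectation: $j=k=j'=k'$; $\{j=k,\,j'=k'\}$ with $j\neq j'$; $\{j=j',\,k=k'\}$ with $j\neq k$; and $\{j=k',\,k=j'\}$ with $j\neq k$ (the last producing a factor $(2q-1)^2$). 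The off-diagonal residual factors all take the form $u_j u_k$ with $u_j := h_{j,\pi(\ell)} h_{j,\pi(\ell')} \in \{-1,+1\}$. The crucial averaging step is then over the shared permutation $\pi$: since $\pi(\ell)\neq \pi(\ell')$ are two uniformly random distinct column indices of $\mat{H}_d$ and distinct Hadamard columns are orthogonal, the identity $\sum_m h_{m,j} h_{m,k} = d\,\delta_{jk}$ yields $\mathbb{E}_\pi[u_j u_k] = -1/(d-1)$ for $j\neq k$. Substituting this and invoking the algebraic identity $V_q^{(1)} = \|\mat{x}\|^2\|\mat{y}\|^2 - \sum_k x_k^2 y_k^2 + (2q-1)^2\bigl((\mat{x}^\top\mat{y})^2 - \sum_k x_k^2 y_k^2\bigr)$ collapses the four pattern contributions into the compact single-$i$ expectation $(\mat{x}^\top\mat{y})^2 - V_q^{(1)}/(d-1)$. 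Independence across $i$ raises this to the $p$-th power, giving exactly the structure in term $(B)$.

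The main obstacle is the bookkeeping in this last step: correctly enumerating the four surviving pairings, tracking the matching sign products, and crucially separating the $\mat{z}_i$-expectation from the $\pi$-expectation so that the $-1/(d-1)$ factor from averaging the Hadamard columns combines cleanly with $V_q^{(1)}$. Once that cancellation is recognized, assembling the three groups and subtracting $(\mat{x}^\top\mat{y})^{2p}$ yields \eqref{eqn:complex-tensor-srht-variance} directly, and the real-TensorSRHT formula of \cref{lemma:variance-formula-struct} is recovered as the special case $q=1$.
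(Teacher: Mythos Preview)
Your proposal is correct and follows essentially the same approach as the paper: the same three-way partition of index pairs (diagonal, same-block off-diagonal, cross-block), the same reduction of the diagonal term to the unstructured sketch via the sign-flip invariance of $\mat{s}_{i,\ell}=\mat{z}_i\circ\mat{h}_{\pi(\ell)}$, and the same quadruple-sum expansion for the same-block cross term with the four surviving index patterns. Your key averaging step $\mathbb{E}_\pi[u_j u_k]=-1/(d-1)$ is exactly the content of the paper's \cref{lemma:key-lemma-hadamard}, and the resulting single-$i$ factor $(\mat{x}^\top\mat{y})^2 - V_q^{(1)}/(d-1)$ matches the paper's computation verbatim.
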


Regarding  \cref{thm:variance-complexTensorSRHT}, we make the following observations. 
\begin{itemize}
\item The case $q = 1$ recovers \cref{lemma:variance-formula-struct} on the real TensorSRHT, where $z \in \{ 1, - 1\}$ is a Rademacher random variable.  
The case $q = 1/2$ is the complex TensorSRHT with, for instance, $P_z$ being  the uniform distribution on $\{1,-1, \iu, -\iu\}$ or on the unit circle in $\mathbb{C}$.
Other values of $q \in [0,1]$ can also be considered, but we do not discuss them further. 
\item The first term $(A)$ in \cref{eqn:complex-tensor-srht-variance}  is the variance of the unstructured polynomial sketch in \cref{eqn:comp-rad-polynomial-sketch} with $D$ features, since $V_q^{(p)}$ is its variance with a single feature ($D = 1$) in \cref{eq:complex-poly-var-coro-2}.
The second term $(B)$ in \cref{eqn:complex-tensor-srht-variance} is the effect of using the structured sketch.  The quantity $V_q^{(1)}$ is the variance of the unstructured sketch in \cref{eqn:comp-rad-polynomial-sketch}  with a single feature in  \cref{eq:complex-poly-var-coro-2} with $p = 1$.
\item As for the real case, the variance \eqref{eqn:complex-tensor-srht-variance} becomes zero when $p=1$ and $D \in \{kd \mid k \in \mathbb{N}\}$.
\end{itemize}

As we discussed for the real TensorSRHT in Corollary \ref{thrm:tensor-srht-odd-p},    \cref{thm:variance-complexTensorSRHT} enables understanding a condition under which the complex TensorSRHT is advantageous over the corresponding unstructured complex sketch in \cref{eqn:comp-rad-polynomial-sketch}.
As for the real case, the condition is that the degree $p$ of the polynomial kernel is {\em odd}, as stated in the following.

\begin{corollary} \label{coro:complex-tensor-srht-odd-p}
Let $p \in \mathbb{N}$ be odd.
Then,  for all input vectors $\mat{x}, \mat{y} \in \mathbb{R}^d$, the variance of the approximate kernel with the complex TensorSRHT in \cref{eqn:complex-tensor-srht-variance} is smaller or equal to the variance of the approximate kernel with the corresponding unstructured polynomial sketch: 
$$
\frac{V_q^{(p)}}{D}   - \frac{c(D,d)}{D^2}  \left[ (\mat{x}^\top \mat{y})^{2p}  -  \left( ( \mat{x}^\top \mat{y} )^2 - \frac{V_q^{(1)}}{d-1}   \right)^p \right] \leq \frac{V_q^{(p)}}{D}  
$$
\end{corollary}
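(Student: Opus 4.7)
The plan is to reduce the claim to the non-negativity of the term $(B)$ in the variance formula \eqref{eqn:complex-tensor-srht-variance}, mimicking the argument already used in the proof of Corollary \ref{thrm:tensor-srht-odd-p} for the real case. Specifically, subtracting the right-hand side from the left-hand side of the desired inequality shows that it is equivalent to
\[
\frac{c(D,d)}{D^2}\left[ (\mat{x}^\top \mat{y})^{2p} - \left( (\mat{x}^\top \mat{y})^2 - \frac{V_q^{(1)}}{d-1}\right)^p \right] \geq 0.
\]
Since $c(D,d) \in \mathbb{N}$ is a nonnegative integer by its definition in \eqref{eq:non-zero-pairs}, and $D^2 > 0$, it suffices to show that the bracketed quantity is nonnegative.

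The first step is to observe that $V_q^{(1)} \geq 0$, because it is the variance of the unstructured polynomial sketch with $p=1$ as given by \eqref{eq:complex-poly-var-coro-2} evaluated at $p=1$ (and variances are always nonnegative; this was already noted in \cref{thm:expression-var-complex-aprox-kernel}). Hence $V_q^{(1)}/(d-1) \geq 0$, which gives
\[
(\mat{x}^\top \mat{y})^2 - \frac{V_q^{(1)}}{d-1} \leq (\mat{x}^\top \mat{y})^2.
\]

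The second step is to apply the monotonicity of $t \mapsto t^p$ on all of $\mathbb{R}$, which holds precisely because $p$ is odd. Raising both sides of the previous inequality to the $p$-th power preserves the direction of the inequality, yielding
\[
\left((\mat{x}^\top \mat{y})^2 - \frac{V_q^{(1)}}{d-1}\right)^p \leq \left((\mat{x}^\top \mat{y})^2\right)^p = (\mat{x}^\top \mat{y})^{2p},
\]
which is exactly the nonnegativity of the bracketed quantity. Combined with $c(D,d) \geq 0$, this completes the argument.

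There is no real obstacle here: the proof is essentially a one-line consequence of \cref{thm:variance-complexTensorSRHT} once one notices that the entire dependence on $q$ in term $(B)$ enters through the single nonnegative constant $V_q^{(1)}$, so the same monotonicity-of-odd-powers argument used in Corollary \ref{thrm:tensor-srht-odd-p} carries over verbatim. The only thing worth emphasizing is that the argument fails for even $p$ because $t\mapsto t^p$ is not monotonic on $\mathbb{R}$ in that case, which matches the discussion following Corollary \ref{thrm:tensor-srht-odd-p}.
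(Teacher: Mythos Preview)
Your proof is correct and follows essentially the same approach as the paper: both use $V_q^{(1)} \geq 0$ to obtain $(\mat{x}^\top \mat{y})^2 - V_q^{(1)}/(d-1) \leq (\mat{x}^\top \mat{y})^2$, then invoke monotonicity of $t\mapsto t^p$ for odd $p$ to conclude that the bracketed term is nonnegative. The paper's version is slightly terser (it leaves the $c(D,d)\geq 0$ step implicit), but the argument is identical.
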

\begin{proof}
Since, $V_q^{(1)} \geq 0$, we have  $( \mat{x}^\top \mat{y} )^2 - \frac{1}{d-1} V_q^{(1)} \leq ( \mat{x}^\top \mat{y} )^2$. For odd $p$ this leads to $\left( ( \mat{x}^\top \mat{y} )^2 - \frac{1}{d-1} V_q^{(1)} \right)^p \leq (\mat{x}^\top \mat{y})^{2p}$. The assertion immediately follows.
\end{proof}

As discussed for the real case, 
if $p$ is even, the variance of the complex TensorSRHT can be larger than the corresponding unstructured sketch for certain input vectors $\mat{x}, \mat{y} \in \mathbb{R}^d$ (e.g., when $\mat{x}^\top \mat{y} = 0$). 
Empirically, however, the complex TensorSRHT often has a smaller variance also for even $p$, as we demonstrate later.

\subsection{Comparing the Real and Complex TensorSRHT}

\label{sec:compare-tensorSRHT}

Let us now compare the real and complex TensorSRHT. 
To make the discussion clearer, suppose that the number of random features satisfies $D = B d$ for some $B \in \mathbb{N}$, as in Remark \ref{remark:var-real-tensorSRHT}.
Then the variance formula in
\cref{eqn:complex-tensor-srht-variance} simplifies to 
\begin{equation}  \label{eq:var-formula-comp-TensorSRHT-simple} 
 \mathbb{V}[\hat{k}_\mathcal{C}(\mat{x}, \mat{y})] = \underbrace{\frac{V_q^{(p)}}{D}}_{(A)}     - \underbrace{\frac{d-1}{D} \left[ (\mat{x}^\top \mat{y})^{2p}  -  \left( ( \mat{x}^\top \mat{y} )^2 - \frac{V_q^{(1)}}{d-1}   \right)^p \right] }_{(B)}. 
\end{equation}
Recall that setting $q = 1$ and $q = 1/2$  recover the variances of real and complex TensorSRHT, respectively. 
Thus, let us compare these two cases. 
We make the following observations:
\begin{itemize}
\item As discussed in Section \ref{sec:var-comp-poly}, it holds that $V_{1/2}^{(p)} \leq V_{1}^{(p)}$ and $V_{1/2}^{(1)} \leq V_{1}^{(1)}$  given that the input vectors $\mat{x} = (x_1,\dots,x_d), \mat{y} = (y_1,\dots,y_d)$ satisfy the inequality in \cref{eq:cond-for-complex-to-be-better}, i.e., 
$\sum_{i \not= j}  x_i x_j y_i y_j \geq  0$, which is satisfied when $\mat{x}$ and $\mat{y}$ are non-negative vectors.  
\item Thus, if \cref{eq:cond-for-complex-to-be-better} is satisfied, the first term $(A)$ becomes smaller for $q = 1/2$ (complex case) than $q  = 1$ (real case).  On the other hand, if $p$ is odd,  the second term $(B)$ becomes smaller for $q = 1/2$ than $q = 1$; thus, the variance reduction (i.e., $-(B)$) is smaller for $q = 1/2$ than $q = 1$.
\end{itemize}

The above observations suggest that, even when \cref{eq:cond-for-complex-to-be-better} is satisfied, whether the complex TensorSRHT $(q = 1/2)$ has a smaller variance than the real TensorSRHT ($q = 1$) depends on the balance between the two terms $(A)$ and $(B)$ and on the properties of the input vectors $\mat{x}, \mat{y} \in \mathbb{R}^d$.   
We have not been able to provide a theoretical characterization of exact situations where the complex TensorSRHT has a smaller variance than the real TensorSRHT.

To complement the lack of a theoretical characterization, we performed experiments to compare the variances of real and complex TensorSRHT, whose results are shown in
\cref{fig:var-real-complex-TensorSRHT}.
We evaluated the variance formula in \cref{eq:var-formula-comp-TensorSRHT-simple} for $q = 1$ (real) and  $q = 1/2$ (complex), for  1000 pairs of input vectors $\mat{x}, \mat{y}$ randomly sampled from a given dataset (EEG, CIFAR 10 ResNet34 features, MNIST and Gisette). 
For each pair $\mat{x}, \mat{y}$, we computed the ratio of \cref{eq:var-formula-comp-TensorSRHT-simple} with $q=1/2$ divided by  \cref{eq:var-formula-comp-TensorSRHT-simple} with $q = 1$, and \cref{fig:var-real-complex-TensorSRHT} shows the empirical cumulative distribution function of this ratio for the 4 datasets.  
In these datasets, the input vectors are nonnegative.

 \cref{fig:var-real-complex-TensorSRHT}  shows that, for  100\%, 100\%, 97.8\%, and 100\% of the cases of the 4 datasets, respectively, the variance of the complex TensorSRHT is smaller than that of the real TensorSRHT. 
 Moreover, the ratio of the variances tends to be even smaller for a larger value of $p$.
 These results suggest that the complex TensorSRHT is effective in reducing the variance of the real TensorSRHT, and the variance reduction is more significant for a larger value $p$ of the polynomial degree. 
We leave a theoretical analysis for explaining this improvement of the complex TensorSRHT for future work.

\begin{figure}[t]
\centering
\includegraphics[width=1.0\textwidth]{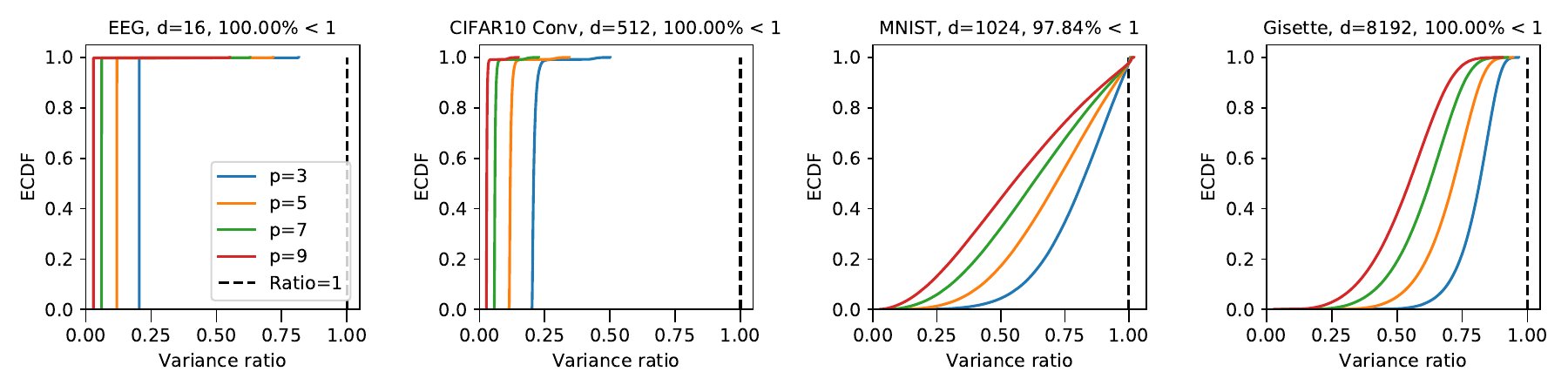}
\caption{Empirical cumulative distribution of pairwise ratios Var(Compl.\ TensorSRHT) / Var(Real TensorSRHT) on a subsample (1000 samples) of four different datasets (EEG, CIFAR10 ResNet34 features, MNIST, Gisette) with unit-normalized data where $D=d$. The datasets are not zero-centered and therefore entirely positive. }
\label{fig:var-real-complex-TensorSRHT}
\end{figure}

\section{Approximating Dot Product Kernels}

\label{sec:approx-dot-prod-kernels}
We discuss here how polynomial sketches described so far can be used for approximating more general {\em dot product kernels}, i.e., kernels whose values depend only on the inner product of input vectors. 

In Sections \ref{sec:dot-product-kernels} and \ref{sec:random-maclaurin}, we first review a key result on the  Maclaurin expansion of dot product kernels and the resulting random sketch approach by \citet{Kar2012}, and show how the polynomial sketches described so far can be used.
In Section \ref{sec:improving-maclaurin}, we then introduce a data-driven optimization approach to improving the random sketches based on the Maclaurin expansion.
In Section \ref{sec:approx-Gauss-kernel}, we describe how to apply this approach for approximating the  Gaussian kernel.
In Section \ref{sec:illustration-objective-optim}, we provide a numerical illustration of the optimization objective.

\subsection{Maclaurin Expansion of Dot Product Kernels}
\label{sec:dot-product-kernels}

Let $\mathcal{X} \subset \mathbb{R}^d$ be a subset, and let $k: \mathcal{X} \times \mathcal{X}: \to \mathbb{R}$ be a positive definite kernel on $\mathcal{X}$. 
The kernel $k$ is called {\em dot product kernel}, if there exists a function $f: \mathbb{R} \to \mathbb{R}$ such that 
\begin{equation} \label{eq:dot-prod-kernel}
k(\mat{x}, \mat{y}) = f(\dotprod{x}{y}) \quad \text{for all } \mat{x}, \mat{y} \in \mathcal{X}.
\end{equation}
Examples of dot product kernels include polynomial kernels $k(\mat{x}, \mat{y}) = ( \mat{x}^\top \mat{y} + \nu )^p$ with $\nu \geq 0$ and $p \in \mathbb{N}$, which have been our focus in this paper, and exponential kernels $k(\mat{x}, \mat{y}) = \exp(\mat{x}^\top \mat{y} / l^2)$ with $l > 0$.
Other examples of dot product kernels can be found in, e.g., \citet{Smola2001}.

We focus on dot product kernels for which the function $f$ in \cref{eq:dot-prod-kernel} is an analytic function whose Maclaurin expansion has non-negative coefficients: $f(x)=\sum_{n=0}^{\infty} a_n x^n$ and $a_n \geq 0$ for $n \in \{ 0 \} \cup \mathbb{N}$.
In other words, we consider dot product kernels that can be expanded as 
\begin{equation} \label{eq:dot-prod-kernel-expand}
k(\mat{x}, \mat{y}) = \sum_{n = 0}^\infty a_n (\mat{x}^\top \mat{y})^n \quad   \text{for all } \mat{x}, \mat{y} \in \mathcal{X},
\end{equation}
with $a_n \geq 0$ for all $n \in \{ 0 \} \cup \mathbb{N}$.

Many dot product kernels can be expanded as \cref{eq:dot-prod-kernel-expand}.
In fact, \citet[Theorem 1]{Kar2012} show that, if $\mathcal{X}$ is the unit ball of $\mathbb{R}^d$, the function $k$ of the form of \cref{eq:dot-prod-kernel} is positive definite on $\mathcal{X}$ if and only if it can be written as \cref{eq:dot-prod-kernel-expand}.

We show here a few concrete examples.
The polynomial kernel $ k(\mat{x}, \mat{y}) =   ( \mat{x}^\top \mat{y} + \nu  )^p$ 
with $p \in \mathbb{N}$ and $\nu \geq 0$ can be expanded as
\begin{equation} \label{eq:poly-ker-expansion}
    ( \mat{x}^\top \mat{y} + \nu  )^p
    = \sum_{n=0}^p \binom{p}{n} \nu^{p-n} (\dotprod{x}{y})^n,
\end{equation}
and thus $a_n = \binom{p}{n} \nu^{p-n} \geq 0$ for $n \in \{0, \dots, p\}$ and $a_n=0$ for $n>p$ in  \cref{eq:dot-prod-kernel-expand}.
The exponential kernel $k(\mat{x}, \mat{y}) = \exp( \mat{x}^\top \mat{y} / l^2)$ can be expanded as
\begin{equation}
    \exp\left( \frac{\mat{x}^\top \mat{y} }{l^2} \right) 
    = \sum_{n=0}^\infty \frac{1}{n! l^{2n}} (\dotprod{x}{y})^n
    \label{eqn:exponential-dot-prod-kernel}
\end{equation}
and thus $a_n = 1/ (n! l^{2n})$ for $n \in \mathbb{N}$ in  \cref{eq:dot-prod-kernel-expand}.

\paragraph{Gaussian kernel as a weighted dot product kernel}
The  Gaussian kernel  defined as $k(\mat{x}, \mat{y}) = \exp(-\|\mat{x}-\mat{y}\|^2 / (2 l^2))$ with $l > 0$ can be written as a {\em weighted} exponential kernel:
\begin{align}
\exp\left(- \frac{ \|\mat{x}-\mat{y}\|^2} {2 l^2} \right) 
& =  \exp\left(- \frac{\|\mat{x}\|^2} {2 l^2} \right) \exp\left(-\frac{\|\mat{y}\|^2}{2 l^2} \right) \exp\left( \frac{\mat{x}^{\top} \mat{y}} {l^2} \right) \nonumber  \\
& =  \exp\left(- \frac{\|\mat{x}\|^2} {2 l^2} \right) \exp\left(-\frac{\|\mat{y}\|^2}{2 l^2} \right) 
    \sum_{n=0}^\infty \frac{1}{n! l^{2n}} (\dotprod{x}{y} )^n, \label{eq:Gaussian-kernel-expansion}
\end{align}
where the second identity uses the Maclaurin expansion of the exponential kernel in \cref{eqn:exponential-dot-prod-kernel}. For approximating the Gaussian kernel, \citet{Cotter2011} proposed a finite dimensional feature map based on a truncation of this expansion.

\subsection{Random Sketch based on the Maclaurin Expansion}
\label{sec:random-maclaurin}

We describe here the approach of \citet{Kar2012} on the unbiased 
approximation of dot product kernels based on the Maclaurin expansion in \cref{eq:dot-prod-kernel-expand}.
We discuss this approach to provide a basis and motivation for our new approach for approximating dot product kernels.

First, we define a probability measure $\mu$ on $\{ 0 \} \cup \mathbb{N}$. \citet{Kar2012} propose to define $\mu$ as 
\begin{equation} \label{eq:sampling-dist-rand-Mac}
\mu(n) \propto c^{-(n+1)}, \quad n \in \{0\} \cup \mathbb{N}, 
\end{equation}
for a constant $c > 1$ (e.g., $c = 2$). 
Using this probability measure and the Rademacher sketch, \citet{Kar2012} propose a doubly stochastic approximation of the dot product kernel in \cref{eq:dot-prod-kernel-expand}. %
This approach first generates an i.i.d.~sample of size  $D \in \mathbb{N}$ from this probability measure $\mu$
\begin{equation} \label{eq:iid-numbers-maclaurin}
n_1, \dots, n_D \stackrel{i.i.d.}{\sim} \mu
\end{equation}
and defines $D_n$ for $n \in \{0\} \cup \mathbb{N}$ as the number of times $n$ appears in $n_1, \dots, n_D$; thus $\sum_{n = 0}^\infty D_n = D$.

Then, for each $n \in \{0\} \cup \mathbb{N}$ with $D_n > 0$, construct a random feature map $\Phi_n: \mathcal{X} \to \mathbb{R}^{D_n}$ with $D_n$ features of the form in \cref{eqn:polynomial-estimator} that provide an unbiased approximation of the polynomial kernel $k_n(\mat{x}, \mat{y}) := (\mat{x}^\top \mat{y})^n$ of degree $n$:
\begin{equation} \label{eq:macularin-poly-approx}
\mathbb{E}[  \Phi_n(\mat{x})^\top \Phi_n(\mat{y}) ]  = (\mat{x}^\top \mat{y})^n.
\end{equation}
The original formulation of \citet{Kar2012} uses the Rademacher sketch as $\Phi_n$, but one can use other sketches in Sections \ref{sec:polynomial-sketches} and \ref{sec:structured-projections}, such as the Gaussian sketch and TensorSRHT. 

Finally, defining a random variable $n^* \sim \mu$, the dot product kernel in \cref{eq:dot-prod-kernel-expand} is rewritten and approximated as 
\begin{align}
 k(\mat{x}, \mat{y}) &= \sum_{n = 0}^\infty a_n (\mat{x}^\top \mat{y})^n  =  \sum_{n=0}^\infty  \frac{a_n}{\mu(n)} \mu(n) (\mat{x}^\top \mat{y})^n  =\mathbb{E}_{n^* \sim \mu} \left[ \frac{a_{n^*}}{\mu(n^*)}  (\mat{x}^\top \mat{y})^{n^*} \right] \nonumber \\
&   \approx \frac{1}{D} \sum_{n \in \{ n_1, \dots, n_D \}} D_n  \frac{a_n}{\mu(n)}  (\mat{x}^\top \mat{y})^{n} =  \frac{1}{D} \sum_{n: D_n > 0} D_n \frac{a_n}{\mu(n)} (\mat{x}^\top \mat{y})^n \nonumber \\
& \approx \frac{1}{D} \sum_{n: D_n > 0} D_n \frac{a_n}{\mu(n)}  \Phi_n(\mat{x})^\top \Phi_n(\mat{y}),  \label{eq:Maculaurin-approx}
\end{align}
where the first approximation is the Monte Carlo approximation of the expectation $\mathbb{E}_{n^* \sim \mu}$ using the i.i.d.~sample in \cref{eq:iid-numbers-maclaurin} and the second approximation is using the random feature map in  \cref{eq:macularin-poly-approx}.
The approximation in \cref{eq:Maculaurin-approx} is unbiased, since the two approximations are statistically independent and both are unbiased.

The first approximation for \cref{eq:Maculaurin-approx} can be interpreted as first selecting polynomial degrees $n \in \{0\} \cup \mathbb{N}$ and assigning the number of features $D_n$ to each selected degree, given a budget constraint $D = \sum_{n: D_n >0} D_n$. 
While performing these assignments by random sampling as in \cref{eq:iid-numbers-maclaurin} makes the approximation in \cref{eq:Maculaurin-approx} unbiased, the resulting variance of \cref{eq:Maculaurin-approx} can be large. In the next subsection, we introduce a data-driven optimization approach to this feature assignment problem, to achieve a good balance between the bias and variance.

\subsection{Optimization for a Truncated Maclaurin Approximation  }
\label{sec:improving-maclaurin}

We develop here an optimization algorithm for selecting the polynomial degrees $n$ and assigning the number of random features to each selected polynomial degree in the Maclaurin sketch in \cref{eq:Maculaurin-approx} . 
The objective function is an estimate of the expected bias and variance of the resulting approximate kernel, and we define it using the variance formulas derived in Sections \ref{sec:polynomial-sketches} and \ref{sec:structured-projections}.

We consider a biased approximation obtained by truncating the Maclaurin expansion in \cref{eq:dot-prod-kernel-expand} up to the $p$-th degree polynomials, where $p$ is to be determined by optimization.
Let $D_{\rm total} \in \mathbb{N}$ be the total number of random features, which is specified by a user. 
For each $n = 1,\dots,p$, let $D_n \in \{ 0 \} \cup \mathbb{N}$ be the number of random features for approximating the $n$-th term $(\mat{x}^\top \mat{y})^n$ of the Maclaurin expansion in \cref{eq:dot-prod-kernel-expand}, such that $\sum_{n=1}^p D_n = D_{\rm total}$.
The numbers $D_n$ are to be determined by optimization.
Let $\Phi_n : \mathbb{R}^d \rightarrow \mathbb{C}^{D_n}$ be a (possibly complex) random feature map defined in Sections \ref{sec:polynomial-sketches} and \ref{sec:structured-projections} such that $\mathbb{E}[\Phi_n(\mat{x})^{\top} \overline{\Phi_n(\mat{y})}] = (\dotprod{x}{y})^n$ for all $\mat{x}, \mat{y} \subset \mathcal{X} \subset \mathbb{R}^d$.  
Note that $\Phi_n$ can be a real-valued feature map, but we use the notation for the complex case since it subsumes the real case.

We then define an approximation to the dot product kernel in  \cref{eq:dot-prod-kernel-expand} as
\begin{equation}
    \label{eqn:approximate-kernel}
    \hat{k}(\mat{x}, \mat{y}) := a_0 + \sum_{n=1}^p a_n \Phi_n(\mat{x})^{\top} \overline{\Phi_n(\mat{y})}, \quad \mat{x}, \mat{y} \in \mathcal{X}
\end{equation}
This approximation is biased, since it ignores the polynomial terms whose degrees are higher than $p$ in the expansion of \cref{eq:dot-prod-kernel-expand}.  
One can reduce this bias by increasing $p$, but this may lead to a higher variance.   
Therefore, there is a bias-variance trade-off in the choice of $p$. 
We describe below how to choose $p$ and the number of features $D_n$ of each random feature map $\Phi_n(\mat{x}), \Phi_n( \mat{y} ) \in \mathbb{C}^{D_n}$ for $n = 1,\dots, p$.

\subsubsection{Optimization Objective}

For a given learning task, we are usually provided data points generated from an unknown probability distribution $P(\mat{x})$ on the input domain $\mathcal{X} \subset \mathbb{R}^d$. The approximate kernel $\hat{k}(\mat{x}, \mat{y})$ in \cref{eqn:approximate-kernel} should be an accurate approximation of the target kernel $k(\mat{x}, \mat{y})$ for input vectors $\mat{x}, \mat{y}$ drawn from this unknown data distribution $P(\mat{x})$.
Therefore, we consider the following {\em integrated mean squared error} as our objective function: 
\begin{align}
&   \int \int \mathbb{E} \left[ \left( k(\mat{x}, \mat{y}) - \hat{k}(\mat{x}, \mat{y}) \right)^2  \right] dP(\mat{x})dP(\mat{y}) \label{eqn:initial-objective} \\
 & =   \int \int \underbrace{ \mathbb{V}[\hat{k}(\mat{x}, \mat{y})] }_{\rm variance} dP(\mat{x})dP(\mat{y})   + \int \int \underbrace{ \left( k(\mat{x}, \mat{y}) - \mathbb{E} \left[ \hat{k}(\mat{x}, \mat{y}) \right] \right)^2  }_{{\rm bias}^2} dP(\mat{x})dP(\mat{y})  \label{eqn:bias-variance-decomposition}
\end{align}
where the expectation $\mathbb{E}[\cdot]$ and variance $\mathbb{V}[\cdot]$ are taken with respect to the random feature maps in the approximate kernel in \cref{eqn:approximate-kernel}, and the identity follows from the standard bias-variance decomposition. 

We study the variance and bias terms in \cref{eqn:bias-variance-decomposition}.
Let $\delta[D_n > 0]$ be an indicator such that $\delta[D_n > 0] = 1$  if $D_n > 0$ and  $\delta[D_n > 0] = 0$ otherwise.
Using this indicator, and since the $p$ random feature maps $\Phi_1, \dots, \Phi_p$ in   \cref{eqn:approximate-kernel} are statistically independent,  the variance term in \cref{eqn:bias-variance-decomposition}  can be written as 
\begin{align}
    \mathbb{V} \left[\hat{k} (\mat{x}, \mat{y}) \right]
    = \sum_{n=1}^{p} \delta[D_n > 0] \ a_n^2  \mathbb{V} \left[ \Phi_n(\mat{x})^{\top} \overline{\Phi_n(\mat{y})} \right].
    \label{eqn:maclaurin-variance}
\end{align}
Each individual term $\mathbb{V} [\Phi_n(\mat{x})^{\top} \overline{ \Phi_n(\mat{y})}]$ in \cref{eqn:maclaurin-variance} is the variance of the approximate kernel $\hat{k}_n(\mat{x}, \mat{y}) := \Phi_n(\mat{x})^{\top} \overline{ \Phi_n(\mat{y})}$ for approximating the polynomial kernel $k_n(\mat{x}, \mat{y}) := (\mat{x}^\top \mat{y})^n$ of degree $n = 1, \dots, p$. 
Therefore,  one can explicitly compute $\mathbb{V} [\Phi_n(\mat{x})^{\top} \overline{ \Phi_n(\mat{y})}]$ for any given $\mat{x}, \mat{y} \in \mathbb{R}^d$ using the variance formulas derived  in Sections \ref{sec:polynomial-sketches} and \ref{sec:structured-projections}. For the convenience of the reader, we summarize the variance formulas for specific cases in \cref{tbl:variances}. 
Regarding the bias term in \cref{eqn:bias-variance-decomposition},  the expectation of the  approximate kernel  \eqref{eqn:approximate-kernel}  is given by
\begin{align}
    \mathbb{E} \left[\hat{k} (\mat{x}, \mat{y}) \right]
    = \sum_{n=0}^{p} \delta[D_n > 0] \ a_n (\mat{x}^\top \mat{y})^n,
    \label{eqn:maclaurin-expectation}
\end{align}
since $\mathbb{E}  \left[ \Phi_n(\mat{x})^{\top} \overline{ \Phi_n(\mat{y}) } \right] = (\mat{x}^\top \mat{y})^n$ for  $n = 1,\dots, p$ with $D_n > 0$.

\begin{table}[t]
\begin{center}
\resizebox{0.9 \textwidth}{!}{
\begin{tabular}{l | l}
\toprule
Sketch & Variance \\
\midrule
Real Gaussian & \multirow{2}{*}{$D^{-1} \Big[ \Big( \| \mat{x} \|^2 \| \mat{y} \|^2 + 2 ( \mat{x}^\top \mat{y} )^2 \Big)^n - (\mat{x}^\top \mat{y})^{2n} \Big]$} \\
  & \\
  Complex Gaussian & \multirow{2}{*}{$ D^{-1}\Big[ \Big( \| \mat{x} \|^2 \| \mat{y} \|^2 + ( \mat{x}^\top \mat{y} )^2 \Big)^n - (\mat{x}^\top \mat{y})^{2n} \Big]$} \\
  & \\
  \midrule
  Real Rademacher & \multirow{2}{*}{$D^{-1}\Big[ \Big(  \|\mat{x}\|^2 \|\mat{y}\|^2 + 2 \Big( ( \mat{x}^\top \mat{y} )^2 - \sum_{k=1}^d x_k^2 y_k^2 \Big) \Big)^n - (\mat{x}^\top \mat{y})^{2n} \Big]$} \\
  & \\
Complex  Rademacher & \multirow{2}{*}{$D^{-1} \Big[ \Big(  \|\mat{x}\|^2 \|\mat{y}\|^2 + ( \mat{x}^\top \mat{y} )^2 - \sum_{k=1}^d x_k^2 y_k^2 \Big)^n - (\mat{x}^\top \mat{y})^{2n} \Big]$} \\
  & \\
 \midrule
Real TensorSRHT &  $\text{Real Rademacher Variance }  $ \\
  & $- \frac{c(D,d)}{D^2} \Big[ (\mat{x}^\top \mat{y})^{2n} - \Big( ( \mat{x}^\top \mat{y} )^2 - \frac{1}{d-1} \Big( \|\mat{x}\|^2 \|\mat{y}\|^2 + ( \mat{x}^\top \mat{y} )^2 - 2 \sum_{k=1}^d x_k^2 y_k^2 \Big) \Big)^n \Big]$ \\
 Complex TensorSRHT &
 $\text{Complex Rademacher Variance } $ \\
  & $ - \frac{c(D,d)}{D^2} \Big[ (\mat{x}^\top \mat{y})^{2n} - \Big( ( \mat{x}^\top \mat{y} )^2 - \frac{1}{d-1} \Big( \|\mat{x}\|^2 \|\mat{y}\|^2 - \sum_{k=1}^d x_k^2 y_k^2 \Big) \Big)^n \Big]$ \\
 \midrule
Conv. Sur. TensorSRHT &
$\left\{\begin{array}{ll} D^{-1} \Big(V_q^{(n)} + (d-1) {\rm Cov}_q^{(n)}\Big) & \mbox{if } {\rm Cov}_q^{(n)} > 0 \text{ or } D > d, \\ D^{-1} \Big( V_q^{(n)} - {\rm Cov}_q^{(n)} \Big) + {\rm Cov}_q^{(n)} & \mbox{otherwise.} \end{array} \right.$ \\
(Real case: $q=1$) & $V_q^{(n)} = \Big(\norm{x}^2 \norm{y}^2  + ((2q - 1)^2 + 1) ( (\dotprod{x}{y})^2 -  \textstyle\sum_{k=1}^d x_k^2 y_k^2)\Big)^n - (\mat{x}^\top \mat{y})^{2n}$ \\
(Complex case: $q=1/2$) & ${\rm Cov}_q^{(n)} = \Big(( \mat{x}^\top \mat{y} )^2 - \frac{V_q^{(1)}}{d-1} \Big)^n - (\mat{x}^\top \mat{y})^{2n}$ \\
 \bottomrule
\end{tabular}
}
\end{center}
\caption{
Closed-form expressions for the variance $\mathbb{V} \big[ \Phi_n(\mat{x})^{\top} \overline{ \Phi_n(\mat{y})}\big]$ for different random feature maps $\Phi_n : \mathbb{R}^d \to \mathbb{C}^D$ to approximate  polynomial kernel of order $n \in \mathbb{N}$. 
Here, $D \in \mathbb{N}$ is the number of random features
and $c(D, d) := \lfloor D/d \rfloor d(d-1) + (D \mod d)(D \mod d - 1)$. See Sections \ref{sec:polynomial-sketches} and \ref{sec:structured-projections} for details and more generic results. 
We also show convex surrogate functions in  \cref{eqn:var-convexified-2} and  \cref{eqn:var-convexified-positive-cov}  for the variance of TensorSRHT derived in \cref{sec:incremental-tensorsrht}. 
}
\label{tbl:variances}
\end{table}

Note that the integrals in \cref{eqn:bias-variance-decomposition} with respect to $P$ are not available in practice, as $P$ is the unknown data distribution.
We instead assume that an i.i.d.~sample $\mat{x}_1, \dots, \mat{x}_m$ of size $m \in \mathbb{N}$ from $P$ is available.  
This sample may be a subsample of a larger dataset from $P$.
For example, in a supervised learning problem, $\mat{x}_1, \dots, \mat{x}_m$  may be a random subsample of training input points. 

Using the i.i.d.~sample $\mat{x}_1, \dots, \mat{x}_m$, the objective function in \cref{eqn:bias-variance-decomposition} can then be unbiasedly approximated in a U-statistics form as
\begin{align}
& \frac{1}{m(m-1)} \sum_{i \not= j} 
    \mathbb{V} [\hat{k}(\mat{x}_i, \mat{x}_j)] + \frac{1}{m(m-1)}  \sum_{i \not= j} 
     \left( k(\mat{x}_i, \mat{x}_j) - \mathbb{E}[\hat{k}(\mat{x}_i, \mat{x}_j )] \right)^2   \nonumber \\
     & =  \frac{1}{m(m-1)}  \sum_{n=1}^{p} \delta[D_n > 0]~ a_n^2 \sum_{i \not= j}   \mathbb{V} \left[ \Phi_n(\mat{x}_i)^{\top} \overline{\Phi_n(\mat{x}_j)} \right]  \label{eq:maclaurin-var-estimate} \\
     & \quad + \frac{1}{m(m-1)} \sum_{i \not= j}    \left( k(\mat{x}_i, \mat{x}_j) -  \sum_{n=0}^{p} \delta[D_n > 0] \ a_n (\mat{x}_i^\top \mat{x}_j)^n \right)^2,   \label{eq:maclaurin-bias-estimate}  \\
     & =: g(p, (D_n)_{n = 1}^p )  \label{eq:maclaurin-empirical-objective}
\end{align}
where we used \cref{eqn:maclaurin-variance} and \cref{eqn:maclaurin-expectation}.

Finally, we formulate our optimization problem. 
To make the problem tractable, we search for the degree $p$ of the approximate kernel in \cref{eqn:approximate-kernel} from the range $\{p^*_{\rm min}, p^*_{\rm min} + 1, \dots, p^*_{\rm max}  \}$, where $p^*_{\rm min}, p^*_{\rm max} \in \mathbb{N}$ with $p^*_{\rm min} < p^*_{\rm max}$ are lower and upper bounds of $p$ selected by the user. 
We then define our optimization problem as follows:
\begin{align} \label{eq:full-optimization-problem}
& \min_{p ,  (D_n)_{n=1}^p }  g(p, (D_n)_{n = 1}^p )   \quad  \text{subject to }\quad  p  \in \{p^*_{\rm min}, p^*_{\rm min} + 1, \dots, p^*_{\rm max} \}, \\  
& D_n \in \{0, \dots D_{\rm total}\},  \quad \sum_{n=1}^p D_n = D_{\rm total},  \quad D_n \geq 1 \text{ if and only if } a_n > 0 \quad (n=1,\dots,p).  \nonumber
\end{align}
where $g(p, (D_n)_{n=1}^p)$ is defined in \cref{eq:maclaurin-empirical-objective}. %

To present our approach to solving \cref{eq:full-optimization-problem}, we will first define a simplified optimization problem and describe an algorithm for solving it.
We will then use this simplified problem and its solver to develop a solver for the full problem in \cref{eq:full-optimization-problem}.

\subsubsection{Solving a Simplified Problem}

We consider a simplified problem of \cref{eq:full-optimization-problem} in which the polynomial degree $p \in \mathbb{N}$ is fixed and given, and the number of random features $D_n$  is positive, $D_n \geq 1$, for every polynomial degree $n = 1,\dots,p$ with $a_n > 0$.
Note that the bias term of the objective function $g(p, (D_n)_{n=1}^n)$, i.e.~
\cref{eq:maclaurin-bias-estimate}, only depends on $(D_n)_{n=1}^n$ through the indicator function $\delta[D_n > 0]$. 
Therefore, under the constraint that $D_n \geq 1$ for all $n = 1,\dots, p$ with $a_n > 0$, \cref{eq:maclaurin-bias-estimate} becomes constant with respect to $(D_n)_{n=1}^p$.

Thus, the optimization problem
 \cref{eq:full-optimization-problem} under the additional constraint of $p$ being fixed and $D_n \geq 1$ for all $n=1,\dots,p$ with $a_n > 0$ is equivalent to the following optimization problem:
\begin{align}     \label{eqn:optimization-objective}
& \min_{ (D_n)_{n=1}^p} \frac{1}{m(m-1)}    \sum_{n=1}^{p} \ a_n^2  \sum_{i \not= j}   \mathbb{V} \left[ \Phi_n(\mat{x}_i)^{\top} \overline{\Phi_n(\mat{x}_j)} \right] \quad \text{subject to}   \\ 
 &   D_n \subset \{0, \dots D_{\rm total}\},\quad    \sum_{n = 1}^p D_n = D_{\rm total}, \quad D_n \geq 1 \text{ if and only if } a_n > 0 \quad (n = 1,\dots,p).\nonumber
\end{align}
This is a discrete optimization problem with one equality constraint, and is an instance of the so-called  \textit{Resource Allocation Problem} \citep{FloudasChristodoulosA.PardalosPanosM.2009}.

We discuss properties of the objective function in \cref{eqn:optimization-objective} and describe a solver.
To this end, we first consider the case where $\Phi_n: \mathbb{R}^d \to \mathbb{C}^{D_n}$ is one of the unstructured polynomial sketches in Section \ref{sec:polynomial-sketches}; we will later explain its extension to structured sketches from Section \ref{sec:structured-projections}. 
In this case, we have $ \mathbb{V} \left[ \Phi_n(\mat{x})^{\top} \overline{\Phi_n(\mat{y})} \right] =  C_{\mat{x},\mat{y}}^{(n)} /D_n$ for a constant $C_{\mat{x},\mat{y}}^{(n)}$ depending on $\mat{x}, \mat{y} \in \mathbb{R}^d$ and the polynomial degree $n \in \mathbb{N}$ but not on $D_n$, as summarized in \cref{tbl:variances}. 
Therefore,  
\begin{equation} \label{eq:simple-objective-individual-term}
a_n^2  \sum_{i \not= j}   \mathbb{V} \left[ \Phi_n(\mat{x}_i)^{\top} \overline{ \Phi_n(\mat{x}_j) } \right] = \frac{a_n^2}{D_n} \sum_{i \not= j} C_{\mat{x}_i, \mat{x}_j}^{(n)}
\end{equation}
is convex and monotonically decreasing with respect to $D_n$.
From this property, one can use 
the \textit{Incremental Algorithm} \citep[p. 384]{FloudasChristodoulosA.PardalosPanosM.2009} to directly solve the optimization problem \eqref{eqn:optimization-objective}.

\cref{alg:incremental-algorithm} describes the Incremental Algorithm for solving the simplified problem in \cref{eqn:optimization-objective}. 
At every iteration, the algorithm finds $n \in \{1, \dots, p\}$ such that adding one more feature to the feature map $\Phi_n$ (i.e., $D_n = D_n + 1$) decreases the objective function most, and sets $D_n = D_n + 1$.
Note again that a closed form expression for $ \mathbb{V} \left[ \Phi_n(\mat{x}_i)^{\top} \overline{\Phi_n(\mat{x}_j)} \right]$ is available from \cref{tbl:variances}.

\paragraph{Time and space complexities.}
The time and space complexities of \cref{alg:incremental-algorithm} are  $\mathcal{O} (p D_{
\rm total})$  and   $\mathcal{O}(p)$, respectively.   
Note that from \cref{eq:simple-objective-individual-term},  the objective function can be written as 
$$
f(D_1, \dots, D_p) := \sum_{n=1}^{p} \ a_n^2  \sum_{i \not= j}   \mathbb{V} \left[ \Phi_n(\mat{x}_i)^{\top} \overline{\Phi_n(\mat{x}_j)} \right] = \sum_{n=1}^{p} \frac{a_n^2}{D_n} \sum_{i \not= j} C_{\mat{x}_i, \mat{x}_j}^{(n)}
$$ 
with $a_n$ and $C_{\mat{x}_i, \mat{x}_j}^{(n)}$ not depending on the optimizing variable $D_n$.
Therefore,  one can precompute the term $\sum_{i \neq j} C_{\mat{x}_i, \mat{x}_j}^{(n)}$ for each $n = 1,\dots, p$ before starting the iterations in \cref{alg:incremental-algorithm}, and during the iterations one can use the precomputed values of  $\sum_{i \neq j} C_{\mat{x}_i, \mat{x}_j}^{(n)}$. 
Thus, while the complexity of precomputing $\sum_{i \neq j} C_{\mat{x}_i, \mat{x}_j}^{(n)}$ is $\mathcal{O}(m^2)$, where $m$ is size of the dataset $\mat{x}_1,\dots,\mat{x}_m$ defining the objective function \eqref{eqn:optimization-objective}, the time and space complexities of \cref{alg:incremental-algorithm} do not depend on $m$.

\paragraph{Structured case.}
We assumed here that $\Phi_n$ is one of the unstructured sketches studied in  Section \ref{sec:polynomial-sketches}.
This choice of $\Phi_n$ makes \cref{eq:simple-objective-individual-term} convex and monotonically decreasing with respect to $D_n$, which enables the Incremental Algorithm to solve the optimization problem in \cref{eqn:optimization-objective}.

However, if $\Phi_n$ is a structured sketch (i.e., either real or complex TensorSRHT) in Section \ref{sec:structured-projections}, \cref{eq:simple-objective-individual-term} is not convex with respect to $D_n$, and the Incremental Algorithm is not directly applicable. 
To overcome this problem,  when $\Phi_n$ is a structured sketch, we propose to use convex surrogate functions in  \cref{eqn:var-convexified-2} and \cref{eqn:var-convexified-positive-cov} derived in \cref{sec:incremental-tensorsrht}  to replace $\mathbb{V} \left[ \Phi_n(\mat{x}_i)^{\top} \overline{\Phi_n(\mat{x}_j)} \right]$ in the objective function \eqref{eqn:optimization-objective}, and then apply the Incremental Algorithm. 
We summarize the concrete form of the convex surrogate function in  \cref{tbl:variances}. 
For details, see  \cref{sec:incremental-tensorsrht}.

\begin{algorithm}[t]
    \caption{Incremental Algorithm}
    \label{alg:incremental-algorithm}
    \SetAlgoLined
    \KwResult{Optimal solution $D_1, \dots, D_p \geq  1$ to the optimization problem \eqref{eqn:optimization-objective}. } 
    {\bf Input:} Dot product kernel $k(\mat{x}, \mat{y}) = \sum_{n=0}^{\infty} a_n (\mat{x}^\top \mat{y})^n$ with $a_n \geq 0$, truncation order $p \in \mathbb{N}$,   the total number of random features $D_{\rm total} \in \mathbb{N}$ \;
    Initialize $D_1 = \dots = D_p = 1$ and $t=0$ \;
Let $f(D_1, \dots, D_p) := \sum_{n=1}^{p} \ a_n^2  \sum_{i \not= j}   \mathbb{V} \left[ \Phi_n(\mat{x}_i)^{\top} \overline{\Phi_n(\mat{x}_j)} \right]$.

    \While{$t < D_{\rm total}$}{
        Find $j^* = \argmin_{j \in \{1,\dots,p\}} f(D_1, \dots, D_j+1, \dots, D_p)$ \;
        $D_{j*} = D_{j*} + 1$ \;
        $t = t + 1$ \;
    }
\end{algorithm}

\subsubsection{Solving the Full Problem}

We now address the full problem in \cref{eq:full-optimization-problem} using \cref{alg:incremental-algorithm} developed for the simplified problem in \cref{eqn:optimization-objective}. 
Recall that, by fixing $p \in \{p^*_{\rm min}, \dots, p^*_{\rm max} \}$ and constraining $D_n \geq 1$ for all $n = 1, \dots, p$, the full problem in  \cref{eq:full-optimization-problem} becomes equivalent to the simplified problem in \cref{eqn:optimization-objective}, which can be solved by \cref{alg:incremental-algorithm}.
Thus, we propose to solve the full problem in \cref{eq:full-optimization-problem}  by i) first performing \cref{alg:incremental-algorithm} for each $p \in \{p_{\rm min}, \dots, p_{\rm max}\}$,  ii) then evaluate each solution $(D_n)_{n=1}^p$ by computing the objective function  $g(p, (D_n)_{n = 1}^p )$  in \cref{eq:maclaurin-empirical-objective}, and finally pick up $p$ that gives the smallest objective function value.

\cref{alg:extended-incremental-algorithm} summarizes the whole procedure for solving the full optimization problem in \cref{eq:full-optimization-problem}.
\cref{alg:extended-incremental-algorithm} returns the optimal truncation order $p^* \in \{ p_{\rm min}, \dots, p_{\rm max} \}$ with the corresponding feature cardinalities $D^* = (D_1, \dots, D_{p^*})$. 
Given these values, one can construct a feature map as summarized in \cref{alg:improved-maclaurin}.  
Note that the U-statistics in the empirical objective \eqref{eq:maclaurin-empirical-objective} can be precomputed for all $p_{\mathrm{min}}, \dots, p_{\mathrm{max}}$ \textit{before} running any optimization algorithm. They do not need to be re-evaluated for every execution of \cref{alg:incremental-algorithm}.

\begin{algorithm}[t]
    \caption{Extended Incremental Algorithm}
    \label{alg:extended-incremental-algorithm}
    \SetAlgoLined
    \KwResult{Optimal polynomial degree $p^* \in \{p_{\rm min}, \dots, p_{\rm max}\}$ and feature cardinalities $D^* = ( D_1, \dots, D_{p^*} ) \in \mathbb{N}^{p^*}$ to the full optimization problem \eqref{eq:full-optimization-problem}.}
    {\bf Input:} Dot product kernel $k(\mat{x}, \mat{y}) = \sum_{n=0}^{\infty} a_n (\mat{x}^\top \mat{y})^n$ with $a_n \geq 0$, upper and lower bounds  $p_{\rm min}, p_{\rm max} \in \mathbb{N} $,  the total number of random features $D_{\rm total} \in \mathbb{N}$ \;

    Set $g^* = \infty$, $p^* = p_{\rm min}$ and $D^* = \{ \}$ \;
    
    \ForAll{$p \in \{p_{\rm min}, \dots, p_{\rm max} \}$}{
        Solve \cref{alg:incremental-algorithm} to obtain $D_1, \dots, D_p$ \;
        Compute $g(p, (D_n)_{n=1}^p)$ in \cref{eq:maclaurin-empirical-objective} \;
        If $g(p, (D_n)_{n=1}^p) < g^*$, set $g^* = g(p, (D_n)_{n=1}^p)$, $D^* = (D_n)_{n=1}^p$ and $p^* = p$ \;
    }
\end{algorithm}

\begin{algorithm}[t]
    \caption{Improved Random Maclaurin (RM) Features}
    \label{alg:improved-maclaurin}
    \SetAlgoLined
    \KwResult{Feature map $\Phi(\mat{x}) \in \mathbb{C}^{D_{\rm total} + 1 }$}
    {\bf Input:} 
    Dot product kernel $k(\mat{x}, \mat{y}) = \sum_{n=0}^{\infty} a_n (\dotprod{x}{y})^n$ with $a_n \geq 0$,  polynomial degree $p^* \in \mathbb{N}$ and feature cardinalities $D_1, \dots, D_{p^*}$  from \cref{alg:extended-incremental-algorithm}  \;
    
    Initialize $\Phi(\mat{x}) := [\sqrt{a_0}]$
    
    \ForAll{$n \in \{ 1, \dots, p^* \}$}{
        Let $\Phi_n(\mat{x}) \in \mathbb{C}^{D_n}$ be an unbiased polynomial sketch of degree $n$ with $D_n$ features  (see Sections \ref{sec:polynomial-sketches} and \ref{sec:structured-projections}) \;
        Append $\sqrt{a_n} \ \Phi_n(\mat{x})$ to $\Phi(\mat{x})$ \;
    }
\end{algorithm}

\subsection{Approximating a Gaussian Kernel}
\label{sec:approx-Gauss-kernel}

Here we describe how to adapt \cref{alg:incremental-algorithm} and \cref{alg:extended-incremental-algorithm} for approximating a Gaussian kernel of the form $k(\mat{x}, \mat{y}) = \exp(-\|\mat{x}-\mat{y}\|^2 / (2 l^2))$ with $l > 0$.  
By \cref{eq:Gaussian-kernel-expansion}, this Gaussian kernel can be written as
\begin{align*}
k(\mat{x}, \mat{y}) &= \exp\left(- \frac{\|\mat{x}\|^2} {2 l^2} \right) \exp\left(-\frac{\|\mat{y}\|^2}{2 l^2} \right) 
    \sum_{n=0}^\infty a_n (\dotprod{x}{y} )^n,
\end{align*}
where $a_n := 1/ (n! l^{2n})$ for $n \in \mathbb{N} \cup \{ 0 \}$.
Notice that $\left(- \frac{\|\mat{x}\|^2} {2 l^2} \right) $ and $\left(- \frac{\|\mat{y}\|^2} {2 l^2} \right) $ are scalar values and can be computed for any given input vectors $\mat{x}, \mat{y} \in \mathbb{R}^d$.

Thus, the objective function $g(p, (D_n)_{n=1}^p)$ in \cref{eq:maclaurin-empirical-objective}, which is an empirical approximation of the bias-variance decomposition of the mean squared error in \cref{eqn:bias-variance-decomposition} using an i.i.d.~sample $\mat{x}_1,\dots,\mat{x}_m \stackrel{i.i.d.}{\sim} P$,
can be adapted as
\begin{align}
& g(p, (D_n)_{n=1}^p)    \label{eq:Gauss-objective-full} \\ 
=&   \frac{1}{m(m-1)}  \sum_{n=1}^{p} \delta[D_n > 0]~ a_n^2 \sum_{i \not= j}    \exp\left(- \frac{\|\mat{x}_i\|^2} {l^2} \right) \exp\left(-\frac{\|\mat{x}_j\|^2}{l^2} \right)   \mathbb{V} \left[ \Phi_n(\mat{x}_i)^{\top} \overline{\Phi_n(\mat{x}_j)} \right]  \nonumber  \\
     + &  \frac{1}{m(m-1)} \sum_{i \not= j}    \left( k(\mat{x}_i, \mat{x}_j) -  \sum_{n=0}^{p} \delta[D_n > 0] \ a_n \exp\left(- \frac{\|\mat{x}_i\|^2} {2 l^2} \right) \exp\left(-\frac{\|\mat{x}_j\|^2}{2 l^2} \right)  (\mat{x}_i^\top \mat{x}_j)^n \right)^2. \nonumber
\end{align}
Accordingly, the objective function of the simplified problem in \cref{eqn:optimization-objective} is adapted as
$$
f(D_1,\dots,D_p) := \frac{1}{m(m-1)}    \sum_{n=1}^{p} \ a_n^2   \sum_{i \not= j}   \exp\left(- \frac{\|\mat{x}_i\|^2} {l^2} \right) \exp\left(-\frac{\|\mat{x}_j\|^2}{l^2} \right) \mathbb{V} \left[ \Phi_n(\mat{x}_i)^{\top} \overline{\Phi_n(\mat{x}_j)} \right].
$$

By these modifications, \cref{alg:incremental-algorithm} and \cref{alg:extended-incremental-algorithm} can be used to obtain the  optimal truncation order $p^* \in \{ p_{\rm min}, \dots, p_{\rm max} \}$ and the corresponding feature cardinalities $D_1, \dots, D_{p^*}$. 
Lastly, \cref{alg:improved-maclaurin} can be adapted by multiplying the scalar value $\exp\left(- \frac{\|\mat{x}\|^2} {2 l^2} \right)$  to the feature map $\Phi(\mat{x})$ obtained from \cref{alg:improved-maclaurin}:  the new feature map is defined as
$\Phi'(\mat{x}) := \exp\left(- \frac{\|\mat{x}\|^2} {2 l^2} \right) \Phi(\mat{x})$.

\subsection{Numerical Illustration of the Objective Function}
 
\label{sec:illustration-objective-optim}

\begin{figure}[t]
\centering
\includegraphics[width=1\textwidth]{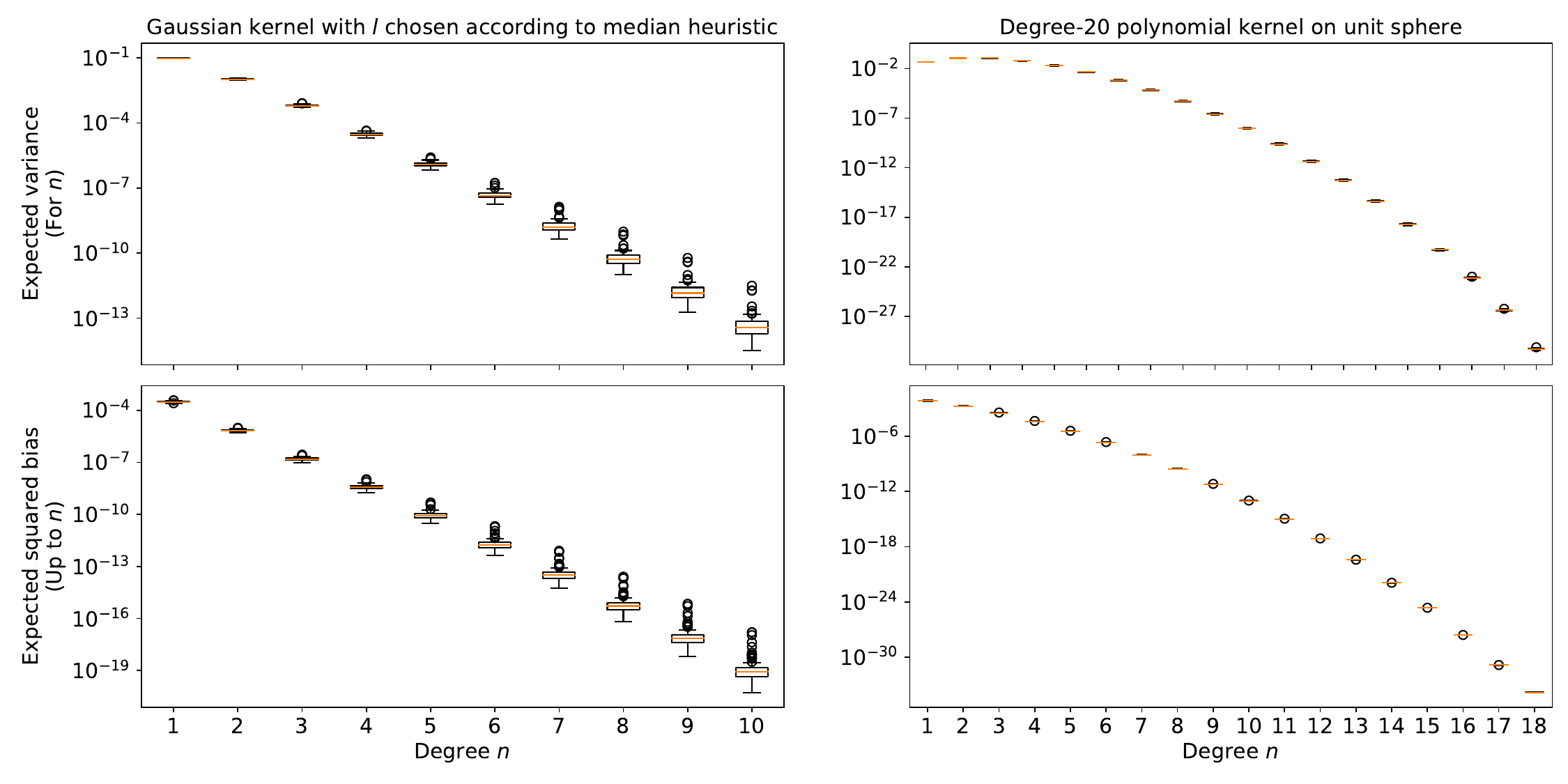}
\caption{Numerical illustration of Section \ref{sec:illustration-objective-optim}. 
The left two figures are box plots for the Gaussian kernel (i), and the right two figures are those for the polynomial kernel (ii). 
The top figures show the variance terms (a), and the bottom figures show the bias terms (b). 
See Section \ref{sec:illustration-objective-optim} for details.
}
\label{fig:bootstrap-boxplot}
\end{figure}

To gain an insight about the behavior of \cref{alg:extended-incremental-algorithm}, we provide a numerical illustration of the bias and variance terms in the objective function $g(p, (D_n)_{n=1}^p)$ in \cref{eq:maclaurin-empirical-objective} (or its version adapted for the Gaussian kernel in \cref{eq:Gauss-objective-full}). 
To this end, we used the Fashion MNIST dataset \citep{xiao2017fashionmnist} and randomly sampled data points $\mat{x}_1,\dots, \mat{x}_m$ with $m = 500$ from the entire dataset of size $60,000$.
As a target kernel to approximate, we consider (i) a polynomial kernel $k(\mat{x}, \mat{y}) = (\dotprod{x}{y} / 8 + 7/8)^{20}$ of degree $p = 20$; and  (ii) the Gaussian kernel  $k(\mat{x}, \mat{y}) = \exp(-\|\mat{x}-\mat{y}\|^2 / (2 l^2))$, where the length scale $l > 0$ is given by the median heuristic \citep{garreau2017large}, i.e., the median of the pairwise Euclidean distances of $\mat{x}_1, \dots, \mat{x}_m$.

For the polynomial kernel (i), we computed (a)  $ \frac{a_n^2 }{m(m-1)}  \sum_{i \not= j}   \mathbb{V} \left[ \Phi_n(\mat{x}_i)^{\top} \overline{\Phi_n(\mat{x}_j)} \right] $ for each  $n = 1, \dots, p~(=20)$, which is the variance component of the objective function in \cref{eq:maclaurin-empirical-objective}; and (b)  $ \frac{1}{m(m-1)} \sum_{i \not= j}    \left( k(\mat{x}_i, \mat{x}_j) -  \sum_{\nu  =0}^{n}  \ a_\mu (\mat{x}_i^\top \mat{x}_j)^\nu \right)^2$ for each $n = 1, \dots,  p~(=20)$, which is the bias component in \cref{eq:maclaurin-empirical-objective}  computed up to $n$-th order. 
For the Gaussian kernel (ii), we computed corresponding quantities from the objective function in  \cref{eq:Gauss-objective-full}: (a) $\frac{ a_n^2}{m(m-1)}    \sum_{i \not= j}    \exp\left(- \frac{\|\mat{x}_i\|^2} {l^2} \right) \exp\left(-\frac{\|\mat{x}_j\|^2}{l^2} \right)   \mathbb{V} \left[ \Phi_n(\mat{x}_i)^{\top} \overline{\Phi_n(\mat{x}_j)} \right] $
for $n = 1, \dots, 10$ and (b)  $\frac{1}{m(m-1)} \sum_{i \not= j}    \left( k(\mat{x}_i, \mat{x}_j) -  \sum_{\nu=0}^{n} \ a_\nu \exp\left(- \frac{\|\mat{x}_i\|^2} {2 l^2} \right) \exp\left(-\frac{\|\mat{x}_j\|^2}{2 l^2} \right)  (\mat{x}_i^\top \mat{x}_j)^\nu \right)^2$ for $n = 1, \dots, 10$.
We used the real Gaussian sketch for the feature map $\Phi_n$, for which \cref{eqn:normal-polynomial-est-variance} gives a closed form expression of the variance  $\mathbb{V} \left[ \Phi_n(\mat{x}_i)^{\top} \overline{\Phi_n(\mat{x}_j)} \right]$; see also \cref{tbl:variances}. 
We set $D_n = 1$ for each $n$ to be evaluated (i.e., $\Phi_n(\mat{x}) \in \mathbb{R}$.)  

To compute the means and standard deviations of the above quantities (a) and (b), we repeated this experiment 100 times by independently subsampling $\mat{x}_1, \dots, \mat{x}_m$  with $m=500$  from the entire dataset each time.
\cref{fig:bootstrap-boxplot} describes the results. 
First, we can see that the standard deviations of the quantities (a) and (b) are relatively small, and thus a subsample $\mat{x}_1,\dots,\mat{x}_m$ of size $m=500$ is sufficient for providing accurate approximations of the respective population quantities of (a) and (b) (where the empirical average with respect to $\mat{x}_1, \dots, \mat{x}_m$ is replaced by the corresponding expectation) in this setting.

Regarding the polynomial kernel (i), the variance terms (a) for polynomial degrees up to $n = 3$ have similar magnitudes, and they decay exponentially fast for polynomial degrees larger than $n = 3$ (notice that the vertical axis of the plot is in log scale).  On the other hand, the bias term (b) decays exponentially fast as the polynomial degree $n$ increases.  These trends suggest that \cref{alg:extended-incremental-algorithm} would assign more features to lower order degrees $n$, in particular to the degree 3 or less. 
One explanation of these trends is that the parametrization of the kernel $k(\mat{x}, \mat{y}) = (\dotprod{x}{y} / 8 + 7/8)^{20}$ gives larger coefficients to lower polynomial degrees in the Maclaurin expansion  (see \cref{eq:poly-ker-expansion}), and that the distribution of pairwise inner products of the data points $\mat{x}_1, \dots \mat{x}_m$ is centered around zero in this experiment.

Regarding the Gaussian kernel (ii), both the variance term (a) and the bias term (b) decay exponentially fast as the polynomial degree $n$ increases.  
This trend suggests that \cref{alg:extended-incremental-algorithm} would assign more features to lower order polynomial degrees $n$. 

To summarize, these observations suggest that, to minimize the mean squared error of the approximate kernel, it is more advantageous to assign more features to lower degree polynomial approximations.
\cref{alg:extended-incremental-algorithm} automatically achieves such feature assignments. 
Additional basic experiments for \cref{alg:extended-incremental-algorithm} are reported in \cref{app:opt-maclaurin}.

\subsection{Gaussian Process Regression Toy Example}

\label{sec:GP-toy-experiment}

\begin{figure}[t]
\centering
\includegraphics[width=0.95\textwidth]{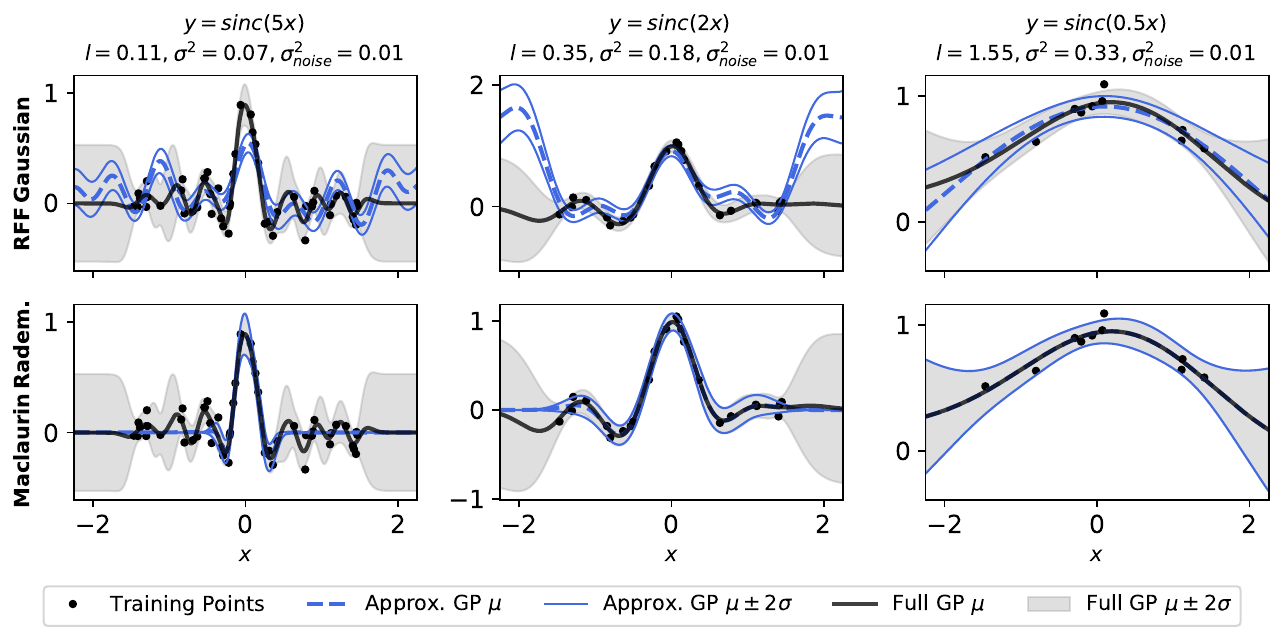}
\caption{One-dimensional GP regression experiment in Section \ref{sec:GP-toy-experiment}.
The top row shows the results of random Fourier features (Gaussian RFF), and the bottom row those of the optimized Maclaurin approach.
The left, middle, and right columns correspond to the ground-truth sinc functions with frequencies of $5$, $2$, and $0.5$, respectively. 
The values of $l$ and $\sigma^2$ are the kernel hyperparameters obtained by maximizing the log likelihood of training data in the full GP (i.e., without approximation). 
Dashed black curves represent approximate GP posterior mean functions; black curves represent the posterior means plus and minus 2 times approximate posterior standard deviations; black curves represent the posterior mean functions of the full GP; and the shaded areas are the full GP posterior means plus and minus 2 times the full GP posterior deviations. 
}
\label{fig:gaussian-1d-regression}
\end{figure}

We performed a toy experiment on one-dimensional Gaussian process (GP) regression, whose results are described in \cref{fig:gaussian-1d-regression}. 
The purpose is to gain a qualitative understanding of the optimized Maclaurin approximation in Section \ref{sec:improving-maclaurin} (\cref{alg:extended-incremental-algorithm}).
For comparison, we also used Random Fourier Features (RFF) of \citet{Rahimi2007} in this experiment.
We use the real Rademacher sketch in the optimized Maclaurin approach. 

We define the ground-truth function as a sinc function,  $f(x) =  \sin(ax) / x$, with $a > 0$, for which we consider three settings: $a \in \{5, 2, 0.5\}$.
We generated training data by adding independent Gaussian noises of variance $\sigma^2_{\mathrm{noise}} = 0.01$ to the ground-truth function $f(x)$. %
With this value of noise variance $\sigma^2_{\rm noise}$, we then fit a GP regressor using the Gaussian kernel $k(x,y) = \sigma^2 \exp (- (x-y)^2 / (2l^2))$ to the training data, where we determined the hyperparameters $l, \sigma^2 > 0$  by maximizing the log marginal likelihood \citep[e.g.,][Chapter 2]{Rasmussen2006}. 
We used the resulting posterior GP as a ground-truth and call it ``full GP'', treating it as a reference for assessing the quality of approximate GPs.
As such, we used the same hyperparameters as the full GP in approximate GPs; this enables evaluating  the effects of the approximation in the resulting GP predictive distributions.

We set the number of random features as $D = 10$.
In this case, the optimized Maclaurin approach in \cref{alg:extended-incremental-algorithm} selects the truncation degree $p^*=9$ %
and simply allocates the feature cardinalities as $D_{1} = \dots = D_{9} = 1$. (Note that one feature is always allocated to the degree $n=0$). 
This behavior is because the variance of the Rademacher sketch in \cref {eqn:rademacher-variance} is zero for all polynomial degrees $n$, as the input dimension is one ($d=1$) in this experiment.\footnote{Thus, the error of the optimized Maclaurin approach stems solely from the finite truncation of the Maclaurin expansion in \cref{eqn:approximate-kernel}.}

We can make the following observations from \cref{fig:gaussian-1d-regression}.
First, with the optimized Maclaurin approach, the approximate GP posterior mean function approximates the full GP posterior mean function around $x = 0$ more accurately than RFF. Moreover, the range of $x$ on which the Maclaurin approach is accurate becomes wider for a lower frequency $a$ of the ground-truth sinc function (for which the length scale $l$ is larger).  This tendency suggests that the Maclaurin approach may be more advantageous than RFF in approximating around $x = 0$ and when the length scale $l$ is relatively large.  Experiments in the next section, in particular those with high dimensional datasets, provide further support for this observation. 
 
 One issue with the Maclaurin approximation is that, as can be seen from \cref{fig:gaussian-1d-regression},  the approximate GP posterior variance tends to collapse for an input location $x$ far from $0$.  This behavior may be explained as follows. 
 Recall that in general, the GP posterior variance at location $\mat{x}$ with an approximate kernel $\hat{k}$ can be written in the form 
 \begin{equation} \label{eq:GP-posterior-var-2585}
 \hat{k}(\mat{x}, \mat{x}) - \hat{\ell}_N(\mat{x}, \mat{x}), 
 \end{equation}
 where $\hat{\ell}_N(\mat{x}, \mat{x}) \geq 0$ is a data-dependent term \citep[see e.g.,][Chapter 2]{Rasmussen2006}.
 Since $\hat{\ell}_N(\mat{x}, \mat{x})$ is non-negative, the GP posterior variance is thus upper-bounded by $\hat{k}(\mat{x}, \mat{x})$.   
Note that the expectation of the Maclaurin-approximate kernel in \cref{eqn:approximate-kernel}  for the Gaussian kernel (see also \cref{eq:Gaussian-kernel-expansion}) is given by $$\mathbb{E} [\hat{k}(\mat{x}, \mat{x})] =  \exp\left(- \left\| \frac{\mat{x} }{l} \right\|^2 \right) \cdot \sum_{n=0}^p \frac{1}{n!} \left\|\frac{ \mat{x} }{ l } \right\|^{2n},$$ 
which decays to $0$ when $\|\mat{x} / l\|$ is large (because of the finite truncation of the Maclaurin expansion).
Therefore,  when $\|\mat{x} / l\|$ is large, the approximate GP posterior variance would decay to $0$ accordingly.

One possible (and easy) way of fixing this issue is 
to add a bias correction term $k(\mat{x}, \mat{x}) - \mathbb{E} [\hat{k}(\mat{x}, \mat{x})] \geq 0$ to the posterior variance in \cref{eq:GP-posterior-var-2585}. 
In this way, we can prevent the underestimation of the posterior variances with the Maclaurin approach where $\| \mat{x}/l \|$ is large, which is where the approximate GP posterior mean function may not be accurate and thus preventing the underestimation is desirable.

\section{Experiments}
\label{sec:experiments}

In this section, we perform systematic experiments to evaluate the various approaches to approximating dot product kernels discussed in this paper.
These approaches include real and complex polynomial sketches in Sections \ref{sec:polynomial-sketches} and \ref{sec:structured-projections}, as well as the optimized Maclaurin approach in Section \ref{sec:approx-dot-prod-kernels}.
We consider approximations of both polynomial kernels and Gaussian kernels. 

We evaluate the performance of each approximation approach in terms of both i) the accuracy in kernel approximation and ii) the performance in downstream tasks. 
The downstream tasks we consider are Gaussian process regression and classification. 
For completeness, we explain how to use complex-valued random features in  Gaussian process inference and discuss the resulting computational costs in \cref{sec:appendix-gp-identities}.

In Section \ref{sec:experiment-setup}, we first explain the setup of the experiments.  
In Section \ref{sec:exp-poly-kernel-frobenius}, we describe experiments on polynomial kernel approximation, comparing various approximation approaches.  
In Section \ref{sec:gp-convergence-experiment}, we report the results of the wall-clock time comparison of real and complex random features, focusing on the downstream task performance of GP classification.  
In Section \ref{sec:exp-opt-Macl}, we present detailed evaluations of the optimized Maclaurin approach for polynomial and Gaussian kernel approximations in GP classification and regression.
Additional experiments are reported in \cref{sec:additional-experiments}.

\subsection{Experimental Setup}

\label{sec:experiment-setup}

 We explain here the common setup for the experiments in this section. 

\begin{table}[t]
\begin{center}
\resizebox{0.9 \textwidth}{!}{
\begin{tabular}{l r r | l r r}
    \toprule
    Classification & Num.~data points $N$ & Dimensionality $d$ & Regression & Num.~data points $N$ & Dimensionality $d$ \\
    \midrule
    Adult & 48,842 & 128 & Boston & 506 & 16 \\
    Cod\_rna & 331,152 & 8 & Concrete & 1,030 & 8 \\
    Covertype & 581,012 & 64 & Energy & 768 & 8 \\
    EEG & 14,980 & 16 & kin8nm & 8,192 & 8 \\
    FashionMNIST & 70,000 & 1,024 & Naval & 11,934 & 16 \\
    Magic & 19,020 & 16 & Powerplant & 9,568 & 4 \\
    MNIST & 70,000 & 1,024 & Protein & 45,730 & 16 \\
    Mocap & 78,095 & 64 & Yacht & 308 & 8 \\
    \bottomrule
\end{tabular}
}
\end{center}
\caption{Datasets used in the experiments. The left and right columns are datasets for classification and regression, respectively. 
}
\label{tbl:datasets}
\end{table}

\subsubsection{Datasets}

\cref{tbl:datasets} shows an overview of the datasets used in the experiments. All the datasets come from the UCI benchmark \citep{Dua:2019} except for Cod\_rna \citep{DBLP:journals/bmcbi/UzilovKM06}, FashionMNIST \citep{xiao2017fashionmnist}, and MNIST \citep{lecun1998}. We pad input vectors with zeros so that the input dimensionality $d$ becomes a power of two to support Hadamard projections in TensorSRHT. %
The train/test split is 90/10 and is recomputed for every random seed for the UCI datasets; otherwise it is predefined.

For each dataset, we use its random subsets of size $m=\min(5000, N_{\mathrm{train}})$ and $m_* = \min(5000, N_{\mathrm{test}})$ to define training and test data in an experiment, respectively, where $N_{\rm train}$ and $N_{\rm test}$ are the sizes of the original training and test datasets. 
Denote by  $X_{\mathrm{sub}} = \{\mat{x}_{1}, \dots, \mat{x}_{m}\}$ and $X_{*, \mathrm{sub}} = \{\mat{x}_{*,1}, \dots, \mat{x}_{*,m_*}\}$ those subsets for training and test, respectively. 
We repeat each experiment 10 times independently using 10 different random seeds, and hence with 10 different subset partitions.

\subsubsection{Target Kernels to Approximate} 

We consider approximation of (i) polynomial kernels and (ii) Gaussian kernels. 

\paragraph{(i) Polynomial kernel approximation.}
We consider  a polynomial kernel of the form
\begin{equation}
    \label{eqn:poly-target-kernel}
    k(\mat{x}, \mat{y})
    = \sigma^2 \left(\left(1-\frac{2}{a^2}\right) + \frac{2}{a^2} \mat{x}^\top \mat{y} \right)^p
    = \sigma^2 \left(1 - \frac{\|\mat{x} - \mat{y}\|^2}{a^2} \right)^p
\end{equation}
with $p \in \mathbb{N}$, $a \geq 2$, $\sigma^2 > 0$,  and $\norm{x}=\norm{y}=1$. 
We choose this form of polynomial kernels because we use {\em Spherical Random Features (SRF)} of \citet{Pennington2015} as one of our baselines, and because SRF approximates the polynomial kernel in \cref{eqn:poly-target-kernel} defined on the unit sphere of $\mathbb{R}^d$.
We follow a similar experimental setup to the one of \citet{Pennington2015}, by setting $a=2$ and  $p \in \{3, 7, 10\}$ in \cref{eqn:poly-target-kernel}. 
Compared to \citet{Pennington2015} we drop the case $p=20$ focusing on more realistic cases of smaller $p$.
To make SRF applicable, we unit-normalize the input vectors in each dataset so that they lie on the unit sphere in $\mathbb{R}^d$.
In an experiment where we zero-centralize the input vectors, we unit-normalize after applying the zero-centering.
We set $\sigma^2$ as the variance of the labels of training subset $X_{\rm sub}$.

\paragraph{(ii) Gaussian kernel approximation.}
We consider the approximation of the Gaussian kernel $k(\mat{x}, \mat{y}) = \sigma^2 \exp(-\|\mat{x} - \mat{y}\|^2 / (2 l^2))$, where we choose the length scale $l > 0$ by the median heuristic \citep{garreau2017large}, i.e., as the median of pairwise Euclidean distances of input vectors in the training subset $X_{\mathrm{sub}}$. We set $\sigma^2 > 0$ as the variance of the labels of~$X_{\rm sub}$.

\subsubsection{Error Metrics}
\label{sec:error-metrics}

We define several error metrics for studying the quality of each approximation approach.

\paragraph{Relative Frobenius norm error.}
To define this error metric, we need to define some notation.
Let $\Phi:\mathbb{R}^d \to \mathbb{C}^D$ be the (either real or complex) feature map of a given approximation method. 
For test input vectors $\mat{X}_{*, \mathrm{sub}} = \{\mat{x}_{*,1}, \dots, \mat{x}_{*,m_*}\}$, let $\hat{\mat{K}} \in \mathbb{C}^{m_* \times m_*}$ be the approximate kernel matrix such that $\hat{\mat{K}}_{i,j} = \Phi(\mat{x}_i)^\top \overline{\Phi(\mat{x}_j)}$.
 Similarly, let  $\mat{K} \in \mathbb{R}^{m_* \times m_*}$ be the exact kernel matrix such that $\mat{K}_{i,j} = k(\mat{x}_{*,i}, \mat{x}_{*,j})$ with $k$ being the target kernel.

We then define the {\em relative Frobenius norm error} of $\hat{\mat{K}}$ against $\mat{K}$ as:
\begin{align}
    \label{eqn:rel-frob-error}
    \|\mat{K} - \hat{\mat{K}} \|_F / \| \mat{K} \|_F :=
    \sqrt{\sum_{i=1}^m \sum_{j=1}^m | \mat{K}_{i,j} - \mat{\hat{K}}_{i,j}|^2} \; \Big/ \;
    \sqrt{\sum_{i=1}^m \sum_{j=1}^m \mat{K}_{i,j}^2} .
\end{align}
This error quantifies the quality of the feature map $\Phi$ in terms of the resulting approximation accuracy of the kernel matrix. 
As the target kernel matrix $\mat{K}$ is real-valued, we discard the imaginary part of  $\hat{\mat{K}}$ if it is complex-valued, unless otherwise specified. \\[5pt]

We define other error metrics in terms of two downstream tasks: Gaussian process (GP) regression and classification (see \cref{sec:appendix-gp-identities} for details of these GP tasks).

\paragraph{Kullback-Leibler (KL) divergence.}
We measure the {\em KL divergence} between two posterior predictive distributions at test input points: one is that of an approximate GP and the other is that of the exact GP without approximation; see \cref{eqn:kl-div} in \cref{sec:appendix-gp-identities} for details.
For GP classification, we measure the KL divergence between the corresponding latent GPs before transformation. %
Since there are as many GPs as the number of classes, we report the KL divergence averaged over those classes.

\paragraph{Prediction performance.}
For GP classification, we use the {\em test error rate} (i.e., the percentage of misclassified examples)  for measuring the prediction performance.
For GP regression, we report the {\em normalized mean squared error (norm.~MSE)} between the posterior predictive outputs and true outputs, normalized by the variance of the test outputs. 
Here, we use the full training data of size $N_{\rm train}$ for computing the approximate GP posterior and the full test data of size $N_{\rm test}$ for evaluating the prediction performance.\footnote{We did not use the full training and test datasets for evaluating the KL divergence, since it requires computing the exact GP posterior on the full training data of size $N_{\rm train}$, which costs $\bigO(N_{\rm train}^3)$ and is not feasible for datasets with large $N_{\rm train}$. }

\paragraph{Mean negative log likelihood (MNLL).}
We compute the {\em mean negative log likelihood (MNLL)} of the test data for the approximate GP predictive distribution. 
MNLL can capture the quality of prediction uncertainties of the approximate GP model \citep[e.g.][p. 23]{Rasmussen2006}.
We use the full training and test data for computing the MNLL, as for the prediction performance.

\subsubsection{Other Settings}
\label{sec:other-settigns}
\paragraph{Optimized Maclaurin approach.} 
For the optimized Maclaurin approach in  \cref{alg:extended-incremental-algorithm}, we set $p_{\mathrm{min}}=2$ and $p_{\mathrm{max}}=10$. 
We use the training subset $X_{\mathrm{sub}} = \{\mat{x}_{1}, \dots, \mat{x}_{m}\}$  to precompute the U-statistics in \cref{eq:maclaurin-var-estimate} and \cref{eq:maclaurin-bias-estimate}.

\paragraph{Regularization parameters.}

We select the regularization parameter in  GP classification and regression by a training-validation procedure. 
That is, we use the 90 \% of training data for training and the remaining 10 \% for validation, and select the regularization parameter that maximizes the MNLL on the validation set. 
For GP classification, we choose the regularization parameter from the range $\alpha \in \{10^{-5}, \dots, 10^{-0}\}$. 
For GP regression, we choose the noise variance from the range $\sigma^2_{\mathrm{noise}} \in \{2^{-15}, \dots, 2^{15}\}$. 
See \cref{sec:appendix-gp-identities} for the definition of these parameters.

Importantly, we perform this selection procedure using a baseline approach,\footnote{More specifically, we use the Spherical Random Features (SRF) \citep{Pennington2015} when the target kernel is a polynomial kernel, and Random Fourier Features \citep{Rahimi2007} when the target kernel is Gaussian, for selecting the regularization parameter.} and after selecting the regularization parameter, we set the {\em same} regularization parameter for all the approaches (including our optimized Maclaurin approach) for computing error metrics.  
In this way, we make sure that the selected regularization parameter is not in favour of our approaches (and in this sense we give an advantage to the baseline).

\subsection{Polynomial Kernel Approximation} \label{sec:exp-poly-kernel-frobenius}

We first study the approximation of the polynomial kernels in \cref{eqn:poly-target-kernel}, comparing different polynomial sketches in terms of the relative Frobenius norm error in \cref{eqn:rel-frob-error} on  FashionMNIST.
\cref{fig:poly-frob-comparison-revised} describes the results.
We consider the following polynomial sketches in this experiment:

 \paragraph{(i) Gaussian and Rademacher sketches (Section \ref{sec:polynomial-sketches}).} 
 We use the real Gaussian and Rademacher sketches, i.e., the unstructured polynomial sketches in  \cref{eqn:polynomial-estimator} with Gaussian and Rademacher weights (``Gaussian'' and ``Rademacher'', respectively, in \cref{fig:poly-frob-comparison-revised}), along with their complex counterparts (``Gaussian Comp.'' and ``Rademacher Comp.'' in \cref{fig:poly-frob-comparison-revised}).

\paragraph{(ii) TensorSRHT (Section  \ref{sec:structured-projections}).}
 We consider the real TensorSRHT in  \cref{eqn:real-struct-polynomial-sketch} with Rademacher weights (``TensorSRHT'' in \cref{fig:poly-frob-comparison-revised}), and the complex TensorSRHT in \cref{eqn:complex-struct-polynomial-sketch} with complex Rademacher weights (``TensorSRHT Comp.'' in the figure); see also \cref{alg:tensor-srht-algorithm}.

\paragraph{(iii) Random Maclaurin  (Section \ref{sec:approx-dot-prod-kernels}).}
We use the Random Maclaurin approach explained in Section \ref{sec:random-maclaurin}.
To improve its performance, we truncate the support of the importance sampling measure $\mu(n)=2^{-(n+1)}$ in \cref{eq:sampling-dist-rand-Mac} to degrees $n \in \{1, \dots, p\}$.\footnote{Without this restriction of the support, the randomized Maclaurin approach may sample polynomial degrees $n$ such that  $n > p$ from $\mu(n)$, for which the associated coefficient in the Maclaurin expansion in \cref{eq:poly-ker-expansion}  is zero.  
Therefore, the resulting feature maps may contain zeros,  which are redundant and make the kernel approximation inefficient.}  
Note that the term $n = 0$ in \cref{eq:dot-prod-kernel-expand} associated with coefficient $a_0$ does not need to be approximated, as we append $\sqrt{a_0}$ to the feature map. %
We consider the Random Maclaurin approach using the real Rademacher sketch (``Rnd.~Macl.~Radem.'' in \cref{fig:poly-frob-comparison-revised}).

\paragraph{(iv) Optimized Maclaurin (Section \ref{sec:approx-dot-prod-kernels}).}
We consider the optimized Maclaurin approach in Section \ref{sec:improving-maclaurin} using the Rademacher approach (``Opt.~Macl.~Radem.'' in \cref{fig:poly-frob-comparison-revised}). 
We also include the real and complex versions involving TensorSRHT (``Opt.~Macl.~TensorSRHT'' and ``Opt.~Macl.~TensorSRHT~Comp.'', respectively, in \cref{fig:poly-frob-comparison-revised}).

\paragraph{(v) TensorSketch}
For completeness, we also include in this experiment {\em TensorSketch} of \citet{Pham2013}, a state-of-the-art polynomial sketch (``TensorSketch'' in %
\cref{fig:poly-frob-comparison-revised}).

\paragraph{Setting.}
We perform the experiments using 
FashionMNIST (``Non-centered data'' in \cref{fig:poly-frob-comparison-revised}) and its centered version for which we subtract the mean of the input vectors from each input vector (``Centered data'' in \cref{fig:poly-frob-comparison-revised}).
For each approach, the number of random features is $D \in \{ d, 3d, 5d\}$, where $d = 1,024$ for FashionMNIST.
\\

\begin{figure}[t]
    \centering
    \includegraphics[width=1\textwidth]{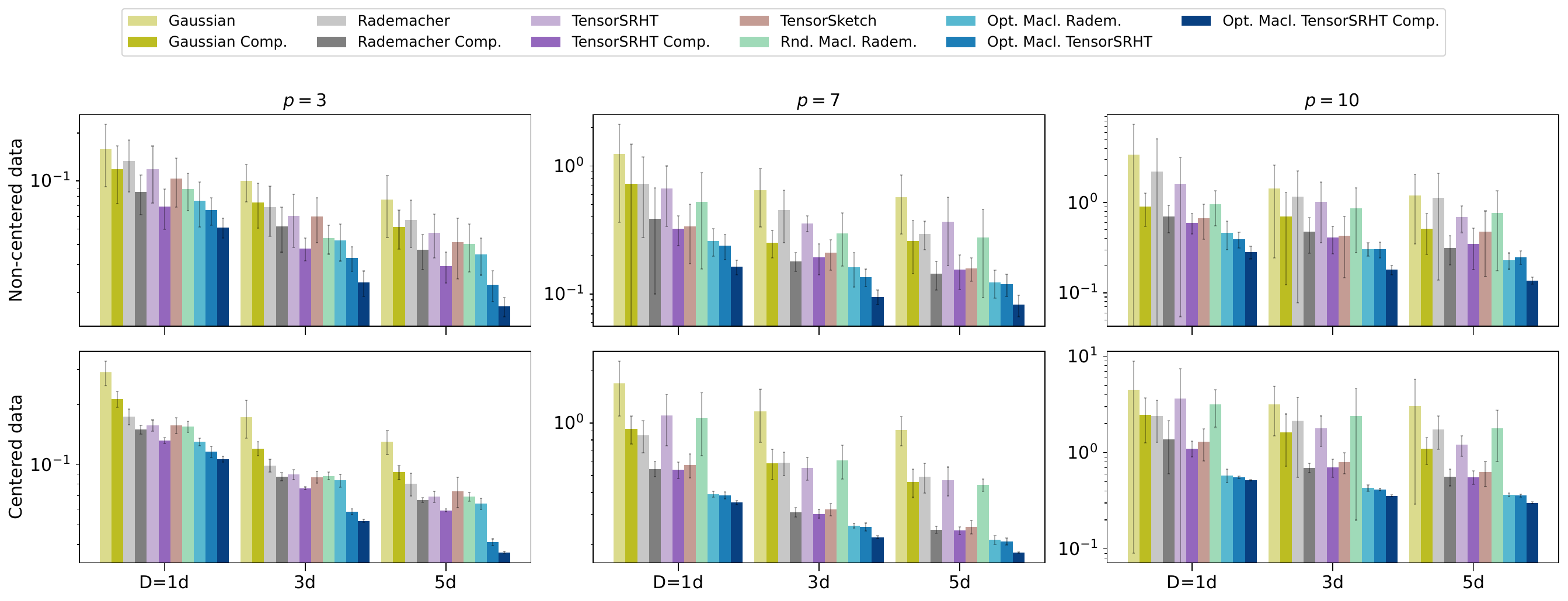}
    \caption{
    Results of the experiments in Section \ref{sec:exp-poly-kernel-frobenius} using FashionMNIST.
    Each plot shows the relative Frobenius norm errors in \cref{eqn:rel-frob-error} of different  sketches for approximating the polynomial kernel in \cref{eqn:poly-target-kernel} with $p \in \{3, 7, 10\}$ and $D \in \{1d, 3d,5d\}$.
    The top and bottom rows show results without and with zero-centring the data, respectively.
}
    \label{fig:poly-frob-comparison-revised}
\end{figure}

From the results in 
\cref{fig:poly-frob-comparison-revised}, we can make the following observations.

\paragraph{Effectiveness of the optimization approach.}
 The optimized Maclaurin approach with the Rademacher sketch (``Opt.~Macl.~Radem.'') achieves smaller errors than the  corresponding random Maclaurin approach (``Rnd.~Macl.~Rad.'') for all cases, and with a large margin for $p=7$  and $p=10$.  
 This improvement demonstrates the effectiveness of the proposed optimization approach that allocates more features to polynomial degrees with larger variance reduction. %

\paragraph{Variance reduction by complex features.}
Complex TensorSRHT achieves significantly smaller errors than the real TensorSRHT (``TensorSRHT''), in particular for small polynomial degrees $p$.
These improvements show the effectiveness of complex features in variance reduction, corroborating the preliminary results shown in  Figures  \ref{fig:comparison-comp-real-over-p} and \ref{fig:var-real-complex-TensorSRHT}.
 The optimized Maclaurin approach using complex features (``Opt.~Macl.~TensorSRHT~Comp.'') also achieves smaller errors than the optimized Maclaurin approach using real features (``Opt.~Macl.~TensorSRHT'') and is quite significant across all methods.

 \paragraph{Effectiveness of complex features on non-negative data.}
The improvements by complex features are more significant for the non-centered data than those for the centered-data. 
The non-centered data here consist of {\em non-negative} input vectors,  as FashionMNIST consists of such vectors.
This observation agrees with the discussion in Section \ref{sec:var-comp-poly} suggesting that complex features yield an approximate kernel whose variance is smaller than that of real features, if the input vectors are non-negative.

\paragraph{TensorSRHT v.s.~TensorSketch.}
While the real TensorSRHT produces larger errors than TensorSketch for all the cases except $p=3$,  the complex TensorSRHT outperforms TensorSketch for all the cases.
This comparison shows that the use of complex features can make TensorSRHT competitive to the state-of-the-art (and one can further improve its performance by using it in the optimized Maclaurin approach).

\subsection{Wall-Clock Time Comparison of Real and Complex Random Features  in GP Classification   }

\label{sec:gp-convergence-experiment}

We consider GP classification using the polynomial kernel in \cref{eqn:poly-target-kernel}, and compare the approximation quality of real and complex random features, in terms of both the number of features and wall-clock time.
As explained in \cref{sec:GP-efficient-implement}, the cost of computing an approximate GP posterior using $D$ complex random features is higher than that using $D$ real features.\footnote{Specifically, if one uses $D$ complex features, then the inversion of the matrix in \cref{eqn:gp-bottleneck}  requires 4 times as many floating point operations as the case of using $D$ real features. Note that, if one instead uses $2D$ real features, then the inversion of the matrix in \cref{eqn:gp-bottleneck} requires $8$ times as many operations as the case of using $D$ real features. 
Thus, doubling the number of real features is 2 times more expensive than using complex features. See  \cref{sec:GP-efficient-implement} for details.}
Therefore, to evaluate the relevance of complex features in practice, we investigate here the approximation quality of complex random features and that of real random features in GP classification, when both are given the {\em same} computational budget (in wall-clock time).

\paragraph{Setting.}
We use the Rademacher sketch and TensorSRHT, and their respective complex versions. 
For each polynomial sketch, we compute the KL divergence \eqref{eqn:kl-div}  between the approximate and exact GP posteriors (see  \cref{sec:appendix-gp-identities} for details), and record wall-clock time (in seconds) spent on constructing random features and on computing the approximate GP posterior.\footnote{We recorded the time measurements on an NVIDIA P100 GPU and PyTorch version 1.10 with native complex linear algebra support.} 
We use FashionMNIST for this experiment. 

 \paragraph{Results.}
\cref{fig:poly-kl-div-comparison} describes the results. 
The approximate GPs using complex random features achieve equal or lower KL-divergences than those using real features of the same computation time, for all the cases.  %
In particular, the improvements of complex features are larger for higher polynomial degrees $p$ and for the non-centered (and thus non-negative) data. 
These observations agree with the corresponding observation in Section \ref{sec:exp-poly-kernel-frobenius} and the discussion in Section \ref{sec:var-comp-poly} on when complex features yield lower variances than real features.

\begin{figure}[t]
    \centering
    \includegraphics[width=1\textwidth]{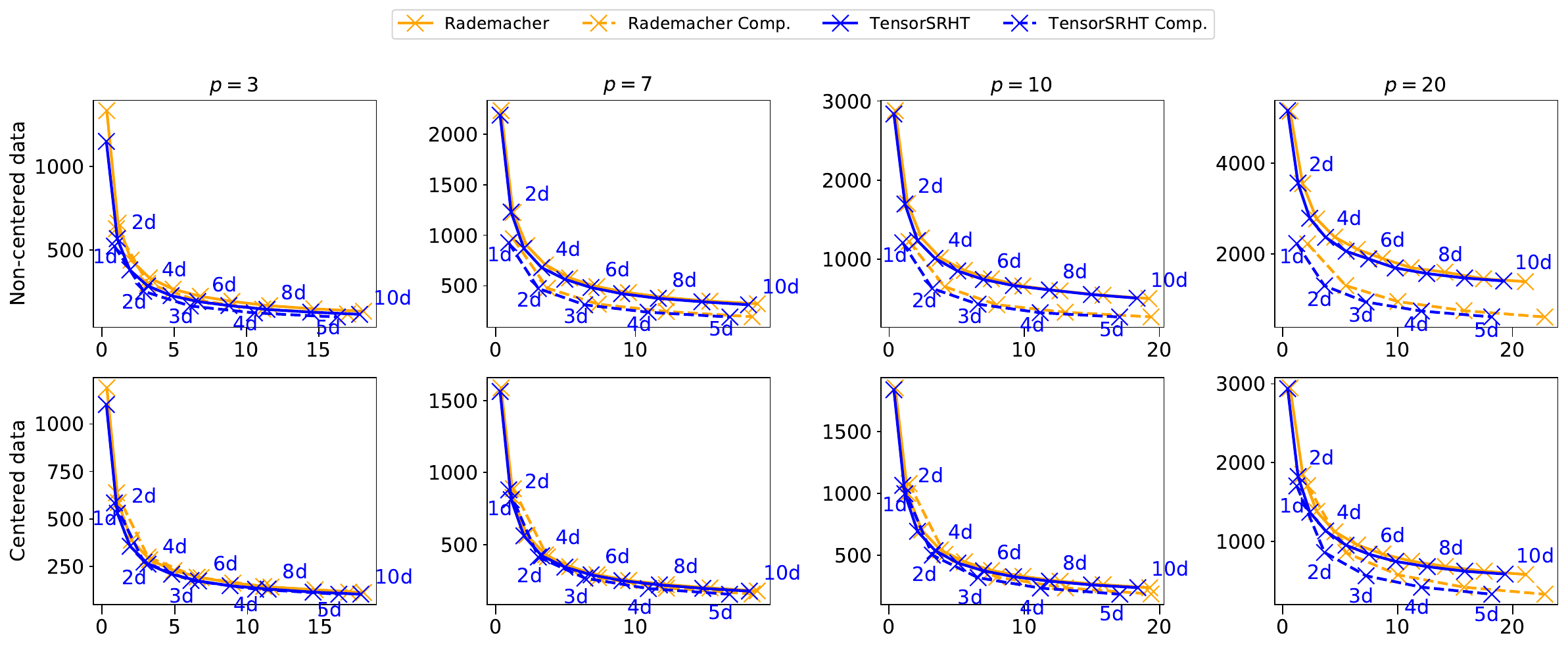}
    \caption{Results of the experiments in Section \ref{sec:gp-convergence-experiment} on wall-clock time comparison of real and complex random features in GP classification on  FashionMNIST.
 In each plot, the vertical axis shows the KL divergence \eqref{eqn:kl-div} between the approximate and the exact GP posteriors for each polynomial sketch, and the horizontal axis is wall-clock time (in seconds) spent on constructing random features and on computing the approximate GP posterior.  
Each column corresponds to a different degree $p \in \{3, 7, 10, 20\}$ of the polynomial kernel in  \cref{eqn:poly-target-kernel}.
The top row shows results on the non-centered (thus non-negative) data, and the bottom row to those on the zero-centered data. 
 The number of random features is $D \in \{1d, \dots, 10d\}$ for real features, and $D \in \{1d, \dots, 5d\}$ for complex features, as annotated next to the respective measurements in each plot.}
    \label{fig:poly-kl-div-comparison}
\end{figure}

\subsection{Systematic Evaluation of the Optimized Maclaurin Approach}

\label{sec:exp-opt-Macl}

Lastly, we systematically evaluate the performance of the optimized Maclaurin approach in  Section \ref{sec:approx-dot-prod-kernels}. 
We run experiments on approximate GP classification and regression on a variety of datasets, using a high-degree polynomial kernel and the Gaussian kernel.

\paragraph{Optimized Maclaurin approach.}
We consider the optimized Maclaurin approach in Section \ref{sec:improving-maclaurin}. 
Similarly to the previous experiments, we compare the optimized version for the Rademacher sketch (``Opt.~Macl.~Radem.'') and the real and complex versions of the optimized Maclaurin method using TensorSRHT  (``Opt.~Macl.~TensorSRHT'' and ``Opt.~Macl.~TensorSRHT~Comp.'').

\paragraph{Baselines.}
We use here approximation approaches based on Random Fourier Features (RFF)   \citep{Rahimi2007} and their extensions such as  {\em Spherical Random Features} (SRF) \citep{Pennington2015} and  {\em Structured Orthogonal Random Features} (SORF) \citep{Yu2016}  as baselines. 
The latter two approaches constitute the state-of-the-art.

These approaches generate a set of frequency samples $\omega_1, \dots, \omega_{D/2} \in \mathbb{R}^d$ (suppose $D$ is even for simplicity) from a certain spectral density, and construct a feature map\footnote{There is another popular version of the feature map in \cref{eq:RFF} defined as $\Phi_\mathcal{R}( \mat{x} ) = \sqrt{\frac{2}{D}} \left[  \cos( \mat{w}_1^\top \mat{x} + b_1 ), \dots, \cos(\mat{w}_D^\top \mat{x} + b_D )  \right]^\top \in \mathbb{R}^D$ with $b_1,\dots,b_D$ uniformly sampled on $[0, 2\pi]$. Following \citet{Sutherland2015} who suggested the superiority of  \cref{eq:RFF}, we use \cref{eq:RFF} here in all the methods using RFF, including SRF and SORF.} of dimension $D$ as, for any $\mat{x} \in \mathbb{R}^d$,  
\begin{equation} \label{eq:RFF}
\Phi_\mathcal{R}( \mat{x} ) =  \sqrt{\frac{2}{D}}  \left[ \cos( \mat{w}_1^\top \mat{x}), \dots, \cos( \mat{w}_{D/2}^\top \mat{x}), \sin( \mat{w}_1^\top \mat{x}), \dots, \sin( \mat{w}_{D/2}^\top \mat{x})  \right]^\top \in \mathbb{R}^{D}.
\end{equation}
Each approach has its own way of generating the frequency samples $\mat{\omega}_1, \dots, \mat{\omega}_{D/2}$: the original RFF generates them in an i.i.d.~manner from the spectral density of a kernel,  SORF uses structured orthogonal matrices (thus we may call it ``RFF Orth.''), and SRF uses a certain optimized spectral density. 

For a thorough comparison, we also consider a complex version of these RFF-based approaches. 
By generating frequency samples $\mat{\omega}_1, \dots, \mat{\omega}_D \in \mathbb{R}^d$ in the specific way of each approach, one can define  a corresponding complex feature map as, for any $\mat{x}\in \mathbb{R}^d$, 
\begin{equation} \label{eq:complex-RFF}
\Phi_\mathcal{C}(\mat{x}) := \sqrt{\frac{1}{D}} \left[  \exp(\iu \mat{\omega}_1^\top \mat{x} ), \dots,  \exp(\iu \mat{\omega}_D^\top \mat{x} )  \right]^\top  \in \mathbb{C}^D. 
\end{equation}
One can see\footnote{
Define an approximate kernel with \cref{eq:complex-RFF} as
  $  \hat{k} (\mat{x}, \mat{y}) := \Phi_\mathcal{C}(\mat{x})^{\top} \overline{\Phi_{\mathcal{C}}(\mat{y})} = \frac{1}{D} \sum_{i=1}^D \exp(\iu \mat{\omega}_i^{\top} (\mat{x} - \mat{y}))
    = \frac{1}{D} \sum_{i=1}^D \exp(\iu \mat{\omega}_i^{\top} \mat{x}) \overline{\exp(\iu \mat{\omega}_i^{\top} \mat{y})}$.
By taking its real part, we have
  $  \mathcal{R} \{ \hat{k} (\mat{x}, \mat{y}) \}
   = \frac{1}{D} \sum_{i=1}^D \cos(\mat{w}_i^{\top} (\mat{x} - \mat{y})) 
    = \frac{1}{D} \sum_{i=1}^D \left( \cos(\mat{w}_i^{\top} \mat{x}) \cos(\mat{w}_i^{\top} \mat{y})
    + \sin(\mat{w}_i^{\top} \mat{x}) \sin(\mat{w}_i^{\top} \mat{y}) \right) =: \Phi_{\mathcal{R}}(\mat{x})^\top \Phi_{\mathcal{R}}(\mat{y})$,
  where $\Phi_{\mathcal{R}}(\mat{x}) :=  \sqrt{\frac{1}{D}}  \left[ \cos( \mat{w}_1^\top \mat{x}), \dots, \cos( \mat{w}_{D}^\top \mat{x}), \sin( \mat{w}_1^\top \mat{x}), \dots, \sin( \mat{w}_{D}^\top \mat{x})  \right]^\top \in \mathbb{R}^{2D}$ is the $2D$-dim.~version of  \cref{eq:RFF}.
}
that \cref{eq:complex-RFF} is a complex version of \cref{eq:RFF}  by defining an approximate kernel with $\Phi_\mathcal{C}(\mat{x})$ and taking its real part, which recovers \cref{eq:RFF} of dimension $2D$.

\subsubsection{Approximate GP Inference with Polynomial Kernels} 

\label{sec:GP-inference-poly}

We first consider approximate GP classification and regression with polynomial kernels.

\paragraph{Setting.}
We set the polynomial degree to $p=3$ and $p=7$ to test various approaches on low and moderate degrees. 
We apply zero-centering to each dataset (i.e., we subtract the mean of input vectors from each input vector), as it improves the MNLL values on most datasets (see \cref{sec:additional-experiments} for supplementary experiments). %
We evaluate all the four error metrics in Section \ref{sec:error-metrics}, including the relative Frobenius norm error in \cref{eqn:rel-frob-error}.
For each approach, the number of random features is $D \in \{ d, 3d, 5d \}$ with $d$ being the dimensionality of input vectors.

\paragraph{Baselines.}
As a baseline, we use SRF \citep{Pennington2015}, a state-of-the-art approach to approximating polynomial kernels defined on the {\em unit sphere} in $\mathbb{R}^d$. 
 \citet{Pennington2015} show that  SRF works particularly well for approximating high degree polynomial kernels, and significantly outperforms the Random Maclaurin approach \citep{Kar2012} and TensorSketch \citep{Pham2013} for such kernels.

We also consider two other extensions of SRF for baselines.  
SRF  generates the frequency samples $\mat{\omega}_1, \dots, \mat{\omega}_{D/2}$ in \cref{eq:RFF}  from an optimized spectral density, by first drawing samples from the unit sphere in $\mathbb{R}^d$.   
Therefore, by replacing these samples on the unit sphere by structured orthogonal projections of SORF \citep{Yu2016}, one can construct a structured version of SRF.  We use this structured SRF as another baseline (``SRF Orth.''). 
Moreover, we consider a complex extension of the structured SRF in the form of  \cref{eq:complex-RFF}  ( ``SRF~Orth.~Comp.'').
While these extensions are themselves novel, we include them in the experiments, as they improve over the vanilla SRF and make the experiments more competitive. 
Finally, we also include the method in \citet{Ahle2020} and its complex version.

\cref{fig:poly-classification-revised-codrna-magic} and \cref{fig:poly-classification-revised-mocap-fmnist} show the results of approximate GP classification on four datasets from \cref{tbl:datasets}. 
We present the results on the other four datasets as well as the results of GP regression in \cref{sec:additional-experiments} to save space.
We can make the following observations from these results.

\paragraph{Relative Frobenius norm error.}
For most cases, the optimized Maclaurin approaches with TensorSRHT achieve lower relative Frobenius norm errors than the SRF approaches and the method by \citet{Ahle2020}. 
Across all methods, there are cases where the improvements offered by the complex approach are quite large.

\paragraph{KL divergence.}
While the optimized Maclaurin approaches achieve lower KL divergences than the SRF approaches for most cases, the margins are smaller than those for the relative Frobenius norm errors.
One possible reason is that the Maclaurin approaches in general (either random or optimized) can be inaccurate in approximating the GP posterior variances at test inputs far from $\mat{x} = \mat{0}$, as discussed in Section \ref{sec:GP-toy-experiment}.
While we suggested a way of fixing this issue in Section \ref{sec:GP-toy-experiment}, we do not implement it to conduct a direct comparison with the SRF approaches.

\paragraph{Classification errors and mean negative log likelihood (MNLL).}
The optimized Maclaurin approaches with TensorSRHT achieve equal or lower classification errors and MNLL than the SRF approaches. 
These results suggest that the optimized Maclaurin approaches are promising not only in kernel approximation accuracy but also in downstream task performance. 
Recall that we selected the regularization parameter in GP classification by maximizing the MNLL of SRF (on the validation set), and used the same regularization parameter in the other approaches (See Section \ref{sec:other-settigns}).
Therefore, the results of \cref{fig:poly-classification-revised-codrna-magic} and \cref{fig:poly-classification-revised-mocap-fmnist} are in favor of the SRF approaches, and the optimized Maclaurin approaches may perform even better if we choose the regularization parameter for them separately.

\begin{figure}[t]
    \centering
    \includegraphics[width=1\textwidth]{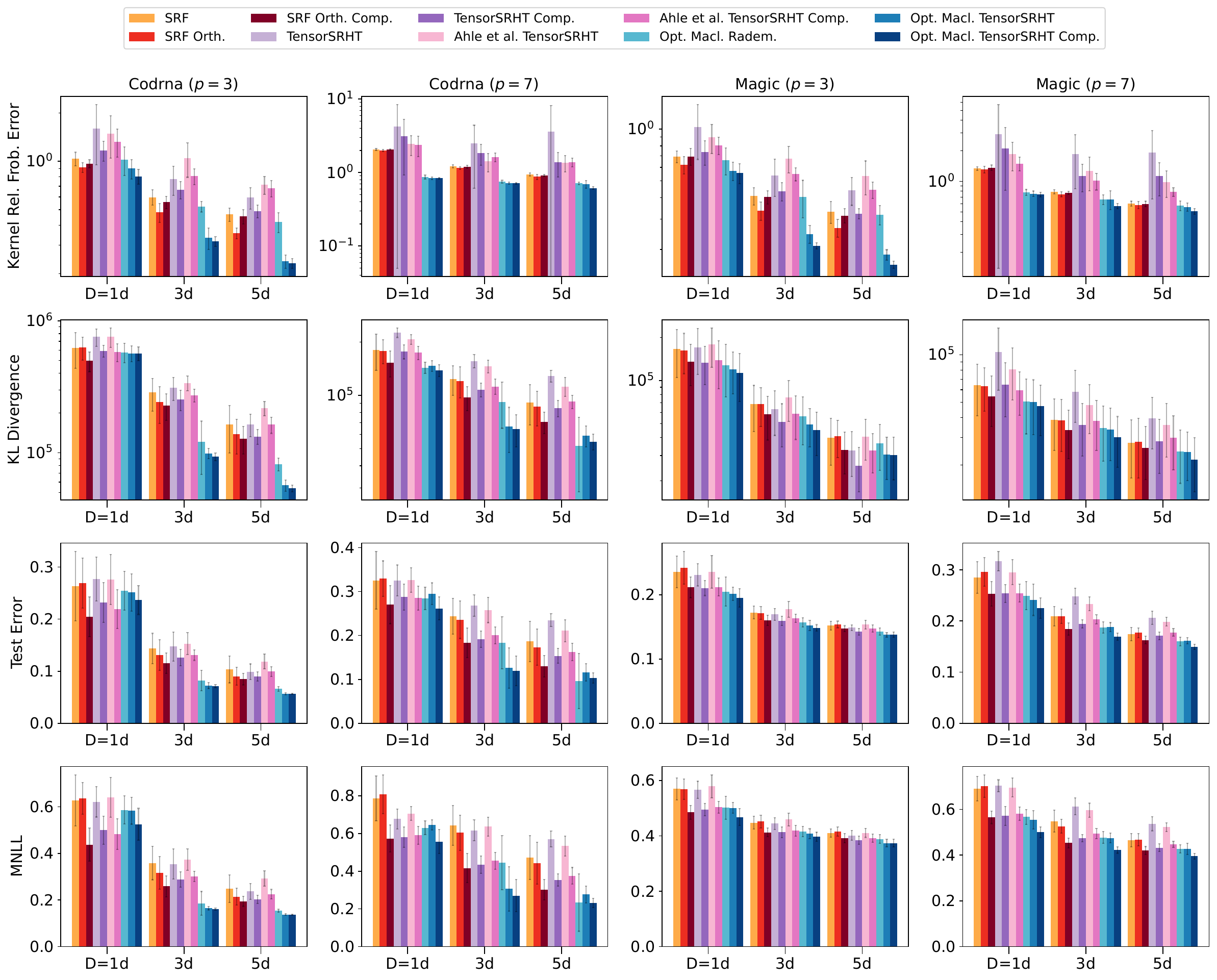}
    \caption{
    Codrna and Magic results of the experiments in Section \ref{sec:GP-inference-poly} on approximate GP classification with polynomial kernels of degree $p=3$ and $p=7$.
    Lower values are better for all the metrics. 
    For each dataset, we show the number of random features $D \in \{1d, 3d, 5d\}$ used in each method on the horizontal axis, with $d$ being the input dimensionality of the dataset.
    }
    \label{fig:poly-classification-revised-codrna-magic}
\end{figure}

\begin{figure}[t]
    \centering
    \includegraphics[width=1\textwidth]{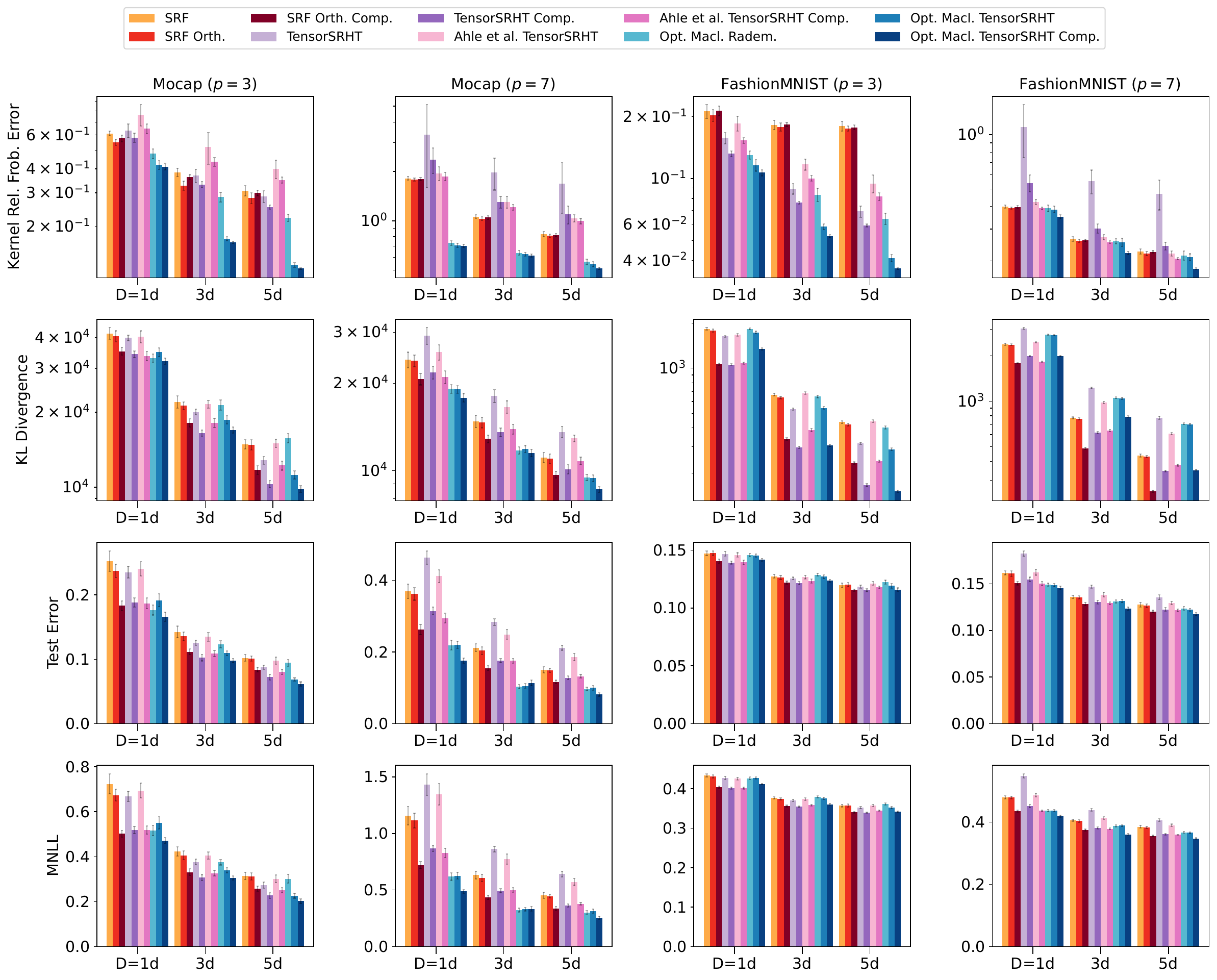}
    \caption{
Mocap and FashionMNIST results of the experiments in Section \ref{sec:GP-inference-poly} on approximate GP classification with polynomial kernels of degree $p=3$ and $p=7$.
    Lower values are better for all the metrics. 
    For each dataset, we show the number of random features $D \in \{1d, 3d, 5d\}$ used in each method on the horizontal axis, with $d$ being the input dimensionality of the dataset.
}
    \label{fig:poly-classification-revised-mocap-fmnist}
\end{figure}

\subsubsection{Approximate GP Inference with a Gaussian kernel}
\label{sec:gaussian-approximation}

\begin{figure}[t]
    \centering
    \includegraphics[width=1\textwidth]{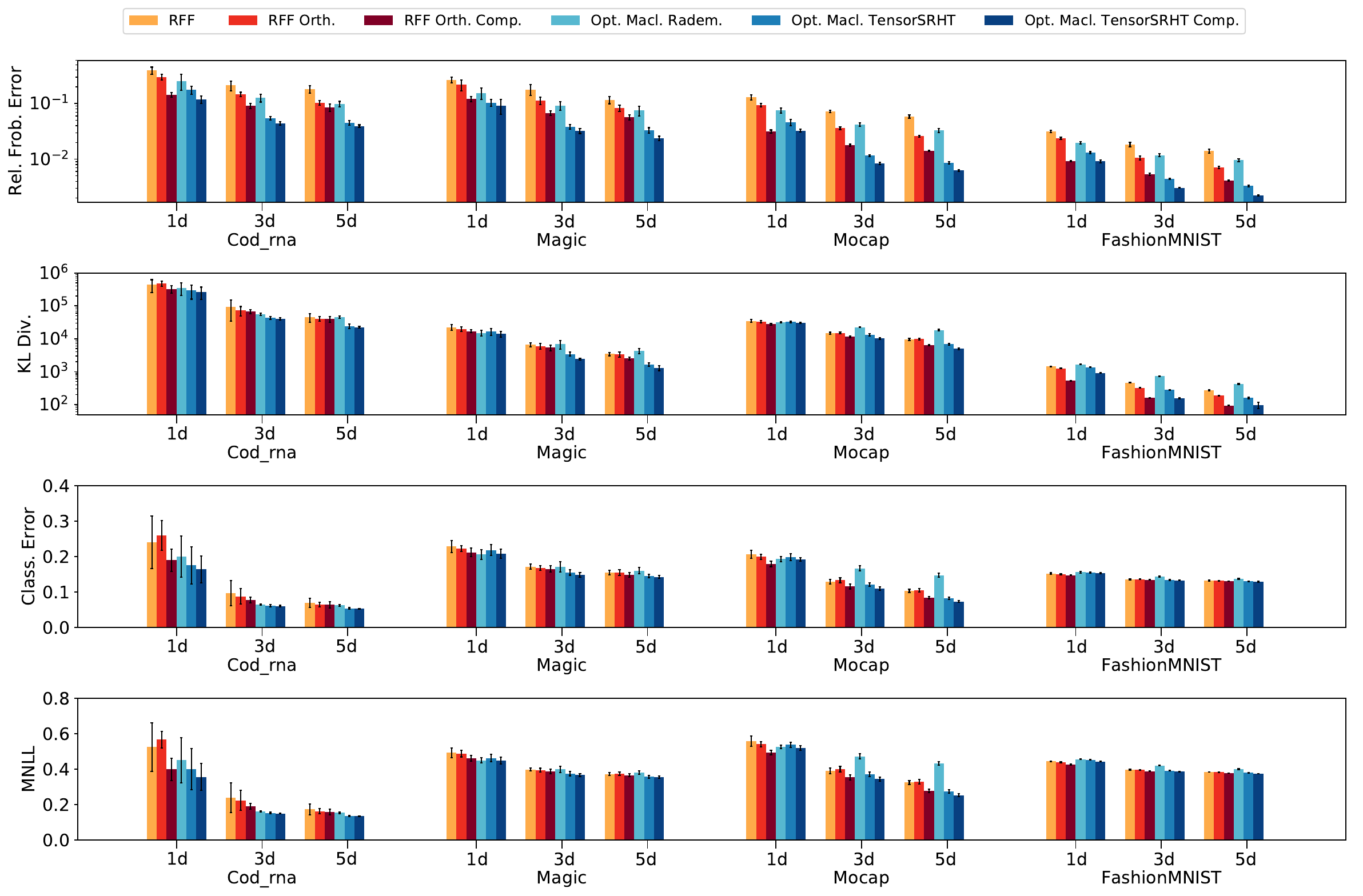}
    \caption{Results of the experiments in Section \ref{sec:gaussian-approximation} on approximate GP classification with a Gaussian kernel.
        Lower values are better for all the metrics. 
    For each dataset, we show the number of random features $D \in \{1d, 3d, 5d\}$ used in each method on the horizontal axis, with $d$ being the input dimensionality of the dataset. We put the legend labels and the bars in the same order.
    }
    \label{fig:rbf-classification}
\end{figure}

We next consider GP classification using a Gaussian kernel. 
As in Section \ref{sec:GP-inference-poly}, we apply zero-centring to the input vectors of each dataset.

\paragraph{Baselines.} 
We use RFF, SORF (``RFF Orth.'') and a complex extension of SORF (``RFF Orth.\ Comp.'') as baselines (see the beginning of Section \ref{sec:exp-opt-Macl} for details).  
SORF  is a state-of-the-art approach to approximating a Gaussian kernel \citep[e.g.][]{Choromanski2018}.
As in Section \ref{sec:GP-inference-poly}, we consider its complex extension to make the experiments more competitive.

\paragraph{Results.}
\cref{fig:rbf-classification} summarizes the results on four datasets  from \cref{tbl:datasets}. 
We show the results on the rest of the datasets as well as the results of GP regression in \cref{sec:additional-experiments}. 
We can make similar observations for \cref{fig:rbf-classification}  as for the polynomial kernel experiments in Section \ref{sec:GP-inference-poly} (and thus we omit explaining them).
The results suggest the effectiveness of the optimized Maclaurin approach with TensorSRHT in approximating the Gaussian kernel.

\subsubsection{Influence of the Data Distribution on the Kernel Approximation}
\label{sec:influence-data-dist}
\begin{figure}[t]
    \centering
    \includegraphics[width=0.95\textwidth]{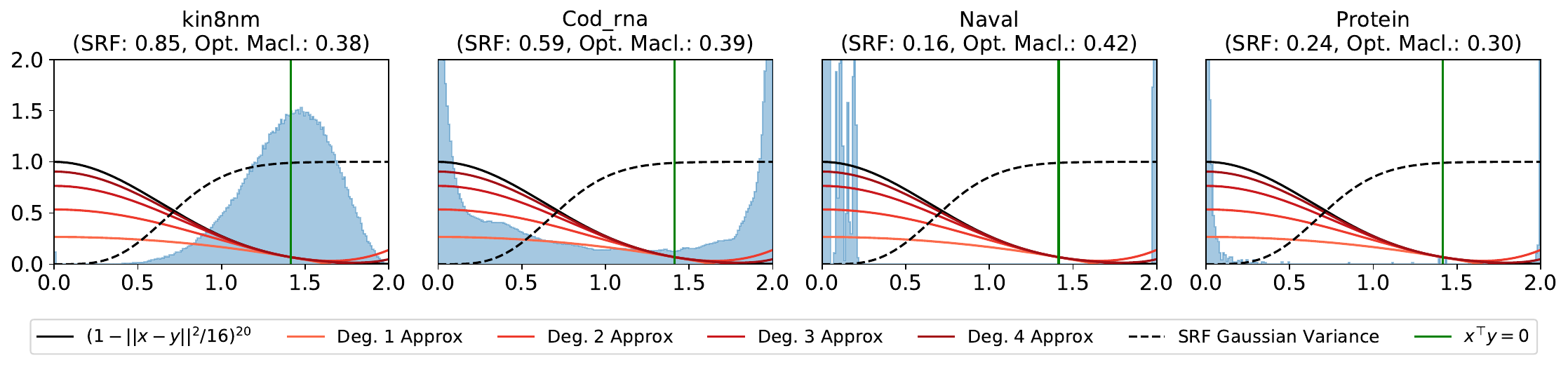}
    \caption{Histograms of pairwise Euclidean distances $\{ \| \mat{x}_{*,i} - \mat{x}_{*,j} \| \}_{i \not= j} $  for test subsets of four datasets (Section \ref{sec:influence-data-dist}).  
 On the top of each figure, we show the relative Frobenius norm errors \eqref{eqn:rel-frob-error} of the optimized Maclaurin approach with real TensorSRHT and of  SRF with structured orthogonal projections.
 The black curve represents the polynomial kernel in  \cref{eqn:poly-target-kernel} with $p=20$ as a function of  $\tau := \|\mat{x} - \mat{y}\|$ (the horizontal axis); the orange curves describe its degree $n \in \{1, 2, 3, 4\}$ approximations (i.e., the truncation of the  Maclaurin expansion \eqref{eq:poly-ker-expansion} of the polynomial kernel up to the $n$-th degree terms.). The dashed curve represents the variance of the SRF approximation as a function of $\tau = \|\mat{x} - \mat{y}\|$.
 The green vertical line shows the value of $\tau = \|\mat{x} - \mat{y}\| = \sqrt{2}$ for which the input vectors $\mat{x}, \mat{y}$ are orthogonal, $\mat{x}^\top \mat{y} = 0$.
 }
    \label{fig:data-distributions-srf}
\end{figure}
 
Lastly, we investigate a characterization of datasets for which the optimized Maclaurin approach performs well.
We focus on polynomial kernel approximation, and make a comparison with SRF as in Section \ref{sec:GP-inference-poly}.

\cref{fig:data-distributions-srf} describes a histogram of pairwise distances $\{ \| \mat{x}_{*, i} - \mat{x}_{*, j} \| \}_{i \not= j}$ of the input vectors in a test subset $X_{*, {\rm sub}} = \{ \mat{x}_{*, 1}, \dots, \mat{x}_{*, m_*} \}$, obtained after zero-centering and unit-normalization, of each of four representative datasets (kin8nm, Cod\_rna, Naval, and Protein). 
For these datasets, the optimized Macluarin approach and SRF show stark contrasts in their performances; see Section \cref{sec:GP-inference-poly} and  \cref{sec:additional-experiments}.   
Note that the polynomial kernel in \cref{eqn:poly-target-kernel} is a shift-invariant kernel on the unit sphere of $\mathbb{R}^d$, and thus its value depends only on the distance $\tau := \|\mat{x} - \mat{y}\|$ between the input vectors $\mat{x}, \mat{y}$ as long as $\norm{x}=\norm{y}=1$. 
This motivates us to study here the distribution of pairwise distances and its effects on approximating the polynomial kernel in  \cref{eqn:poly-target-kernel}.

In \cref{fig:data-distributions-srf}, the optimized Maclaurin approach yields lower relative Frobenius norm errors \eqref{eqn:rel-frob-error} than SRF for the left two plots, while the optimized Maclaurin approach is less accurate than SRF for the right two plots. 
For the datasets of the right two plots (Naval and Protein),  the pairwise distances $\{ \| \mat{x}_{*, i} - \mat{x}_{*, j} \| \}_{i \not= j}$ concentrate around $\tau = 0$ (and there is a smaller mass around $\tau =2$).
In comparison, for the datasets of the left two plots (kin8nm and Cod\_rna), the pairwise distances  are relatively more evenly distributed across the possible range  $\tau \in [0,2]$.

The above observation suggests that the optimized Maclaurin approach is more suitable for datasets in which the pairwise distances $\{ \| \mat{x}_{*, i} - \mat{x}_{*, j} \| \}_{i \not= j}$ are not concentrating around $0$, i.e., datasets in which there is a diversity in the input vectors $\{\mat{x}_{*,1}, \dots, \mat{x}_{*,m_*} \}$. 
In fact, for approximating the polynomial kernel (black curve in \cref{fig:data-distributions-srf}), the finite-degree Maclaurin approximations (orange curves) tend to be less accurate for input vectors $\mat{x}, \mat{y}$ close to each other, $\tau = \|\mat{x} - \mat{y} \| \approx 0$, and become relatively more accurate as input vectors $\mat{x}, \mat{y}$  approach orthogonality, i.e. $\mat{x}^\top \mat{y} = 0$ (or $\tau = \| \mat{x} - \mat{y} \| = \sqrt{2}$; the vertical green line); see also the Maclaurin expansion \eqref{eq:poly-ker-expansion} of the polynomial kernel.
On the other hand, the variance of SRF is the lowest around $\tau = \| \mat{x} - \mat{y} \| = 0$ and increases as $\tau$ tends to $2$. 
Therefore, the SRF performs well if the pairwise distances $\{ \| \mat{x}_{*, i} - \mat{x}_{*, j} \| \}_{i \not= j}$ concentrate around $0$, and may become inaccurate if they do not.

\section{Conclusion}

We made several contributions for understanding and improving random feature approximations for dot product kernels. 
First, we studied polynomial sketches, i.e., random features for polynomial kernels, such as the Rademacher sketch and TensorSRHT, and discussed their generalizations using complex-valued features. 
We derived closed form expressions for the variances of these polynomial sketches, which are useful in both theory and practice.

On the theoretical side, these variance formulas provide novel insights into these polynomial sketches, such as conditions for a structured sketch to have a lower variance than the corresponding unstructured sketch, and conditions for a complex sketch to have a lower variance than the corresponding real sketch.
Our systematic experiments support these findings.
On the practical side, these variance formulas can be evaluated in practice, and therefore enable us to estimate the mean squared errors of the approximate kernel for given input points.

Based on the derived variance formulas, we developed a novel optimization algorithm for data-driven random feature approximations of dot product kernels, which is also applicable to the Gaussian kernel.
This approach uses a finite Maclaurin approximation of the kernel, which approximates the kernel as a finite sum of polynomial kernels of different degrees. 
Given a total number of random features, our optimization algorithm determines how many random features should be used for each polynomial degree in the Maclaurin approximation.
We defined the objective function of this optimization algorithm as an estimate of the averaged mean squared error regarding the data distribution, and used the variance formulas for this purpose.
We empirically demonstrated that this optimized Maclaurin approach achieves state-of-the-art performance on a variety of datasets, both in terms of the kernel approximation accuracy and downstream task performance.

As described in the introduction, dot product kernels have been actively used in many domains of applications, such as genomic data analysis, recommender systems, computer vision, and natural language processing.
In these applications, interactions among input variables have significant effects on the output variables of interest, and thus dot product kernels offer an appropriate modeling tool.
In particular, dot product kernels are being used in an inner-loop of larger neural network models, such as the dot product attention mechanism used in Transformer architectures \citep{Vaswani2017,Choromanski21a}. 

One major challenge of using dot product kernels is the computational efficiency, and random feature approximations offer a promising solution. 
Our contributions improve the efficiency of random feature approximations, and we hope that these contributions make dot product kernels even more useful in the above application domains.

\acks{The Authors wish to thank the Associate Editor and the anonymous Reviewers for their insightful reviews and comments.
This work has been in part supported by the French government through the 3IA Cote d’Azur Investment in the Future Project, which is managed by the National Research Agency (ANR) and has the reference number ANR-19-P3IA-0002.
MF also gratefully acknowledges support from the AXA Research Fund and ANR (grant ANR-18-CE46-0002).
}

\appendix

\newpage

\section{Proofs for Section \ref{sec:polynomial-sketches}}

\subsection{Proof of Theorem \ref{thm:expression-var-complex-aprox-kernel}}
\label{sec:appendix-complex-variance}

We first show 
\begin{align} \label{eq:variance-expression-complex}
\mathbb{V}[ \hat{k}_\mathcal{C}(\mat{x}, \mat{y}) ] &=  \bigg( \sum_{k=1}^d \mathbb{E}[|z_k|^4] x_k^2 y_k^2 +  \| \mat{x} \|^2 \| \mat{y} \|^2 - 2 \sum_{k=1}^d x_k^2 y_k^2   +    (\dotprod{x}{y})^2   \nonumber  \\ 
& \quad \quad + \sum_{i=1}^d \sum_{\substack{j=1 \\ j \neq i}}^d \mathbb{E} [z_i^2] \mathbb{E}[ \overline{z_j}^2] x_i x_j y_i y_j \bigg)^p - (\mat{x}^\top \mat{y})^{2p}  .
\end{align}
where $z_i^2 := z_i z_i$ and $\overline{z_i}^2 := \overline{z_i} \overline{z_i}$ are in general different from $|z_i|^2 = z_i \overline{z_i}$.
We have
\begin{align} 
     \mathbb{V} [ \hat{k}_\mathcal{C}(\mat{x}, \mat{y})  ] &  = \mathbb{E}[ | \hat{k}_\mathcal{C}(\mat{x}, \mat{y})|^2 ] -  | \mathbb{E}[  \hat{k}_\mathcal{C}(\mat{x}, \mat{y} )  ]|^2   = \mathbb{E}[ | \prod_{i=1}^p \mat{z}_i^\top \mat{x} \overline{\mat{z}_i^\top \mat{y}}  |^2 ] - ( \mat{x}^\top \mat{y} )^{2p} \nonumber \\
    & = \prod_{i=1}^p \mathbb{E}[ | \mat{z}_i^\top \mat{x} \overline{\mat{z}_i^\top \mat{y}}  |^2 ] - ( \mat{x}^\top \mat{y} )^{2p} =  ( \mathbb{E}[ | \mat{z}^\top \mat{x} \overline{\mat{z}}^\top \mat{y}  |^2 ] )^p - ( \mat{x}^\top \mat{y} )^{2p}.  \label{eq:proof-var-complex-expression}
\end{align}

Henceforth we focus on $ \mathbb{E}[ | \mat{z}^\top \mat{x} \overline{\mat{z}}^\top \mat{y} |^2 ]$ in the last expression \eqref{eq:proof-var-complex-expression}.
Write $\mat{z}= (z_1, \dots, z_d)^\top$, $\mat{x} = (x_1, \dots, x_d)^\top$, and $\mat{y} = (y_1, \dots, y_d)^\top$. 
Since $\mathbb{E}[\mat{z} \overline{\mat{z}}^\top] = \mat{I}_d$, we have $\mathbb{E}[z_i \overline{z_j}] = 1$ if $i = j$ and  $\mathbb{E}[z_i \overline{z_j}] = 0$ if $i \not= j$.
Recall also that $z_1, \dots, z_d \in \mathbb{C}$ are i.i.d, and $\mathbb{E}[z_i] = 0$ for $i = 1,\dots,d$.
Then
\begin{align}
    &  \mathbb{E}[ | \mat{z}^\top \mat{x} \overline{\mat{z}}^\top \mat{y}  |^2 ] = \mathbb{E}\left[  (\sum_{i=1}^d z_i x_i)  (\sum_{j=1}^d \overline{z_j} y_j) ( \sum_{k=1}^d\overline{z_k} x_k ) ( \sum_{l = 1}^d z_l y_l ) \right] \nonumber \\
     & = \sum_{i=1}^d \sum_{j=1}^d \sum_{k=1}^d \sum_{l=1}^d \mathbb{E} [z_i\overline{z_j} \overline{z_k} z_l] x_i y_j x_k y_l . \label{eq:complex-weights-expansion}
\end{align}
The expected value $\mathbb{E} \big[ z_i\overline{z_j}\overline{z_k} z_l \big]$ is different from $0$, only if:
\begin{enumerate}[label={(\alph*)}]
\item $i=j=k=l$, for which there are $d$ terms and $\mathbb{E} \big[ z_i\overline{z_j}\overline{z_k} z_l \big]  x_i y_j x_k y_l = \mathbb{E}[| z_i |^4]  x_i^2 y_i^2$.

\item $i=j \neq k=l$, for which there are $d(d-1)$ terms and $\mathbb{E} \big[ z_i\overline{z_j}\overline{z_k} z_l \big] x_i y_j x_k y_l = \mathbb{E}[ |z_i|^2] \mathbb{E}[|z_k|^2 ]  x_i x_k y_i y_k$.

\item $i=k \neq j=l$, for which there are $d(d-1)$ terms and $\mathbb{E} \big[ z_i\overline{z_j}\overline{z_k} z_l \big] x_i y_j x_k y_l = \mathbb{E}[ |z_i|^2] \mathbb{E}[ |z_j|^2 ] x_i^2 y_j^2$.

\item $i=l \neq j=k $, for which there are $d(d-1)$ terms and $\mathbb{E} \big[ z_i\overline{z_j}\overline{z_k} z_l \big] x_i y_j x_k y_l  = \mathbb{E}[ z_i^2  ] \mathbb{E}[\overline{z_j}^2  ] x_i x_j y_i y_j$.
\end{enumerate}
Therefore,
\begin{align*}
    \eqref{eq:complex-weights-expansion}
    &= \underbrace{\sum_{i=1}^d \mathbb{E}[|z_i|^4] x_i^2 y_i^2}_{\text{case (a)}} + \underbrace{\sum_{i=1}^d \sum_{\substack{j=1 \\ j \neq i}}^d \mathbb{E}[|z_i|^2] \mathbb{E}[|z_j|^2] x_i^2 y_j^2}_{\text{case (c)}} + \underbrace{ \sum_{i=1}^d \sum_{\substack{j=1 \\ j \neq i}}^d \mathbb{E}[|z_i|^2] \mathbb{E}[|z_j|^2] x_i x_j y_i y_j}_{\text{case (b)}} \\
    & 
    + \underbrace{
     \sum_{i=1}^d \sum_{\substack{j=1 \\ j \neq i}}^d \mathbb{E} [z_i^2] \mathbb{E}[ \overline{z_j}^2] x_i x_j y_i y_j  }_{\text{case (d)}} \\
    & = \sum_{i=1}^d \mathbb{E}[|z_i|^4] x_i^2 y_i^2 + \sum_{i=1}^d \sum_{\substack{j=1 \\ j \neq i}}^d x_i^2 y_j^2 +  \sum_{i=1}^d \sum_{\substack{j=1 \\ j \neq i}}^d x_i x_j y_i y_j + \sum_{i=1}^d \sum_{\substack{j=1 \\ j \neq i}}^d \mathbb{E} [z_i^2] \mathbb{E}[ \overline{z_j}^2] x_i x_j y_i y_j \\
    &= \sum_{i=1}^d \mathbb{E}[|z_i|^4] x_i^2 y_i^2 + \left[ \| \mat{x} \|^2 \| \mat{y} \|^2 - \sum_{i=1}^d x_i^2 y_i^2 \right] +  \left[ (\dotprod{x}{y})^2 - \sum_{i=1}^d x_i^2 y_i^2 \right]   \\
    & \quad + \sum_{i=1}^d \sum_{\substack{j=1 \\ j \neq i}}^d \mathbb{E} [z_i^2] \mathbb{E}[ \overline{z_j}^2] x_i x_j y_i y_j 
\end{align*}
The proof of \cref{eq:variance-expression-complex} completes by using this expression of $ \mathbb{E}[ ( \mat{z}^\top \mat{x} \overline{\mat{z}^\top \mat{y}}  )^2 ]$ in  \cref{eq:proof-var-complex-expression}.
\cref{eq:complex-poly-var-coro-1} follows from \cref{eq:variance-expression-complex} and $\mathbb{E}[z_k^2] =\mathbb{E}[\overline{z_k}^2] = 2q-1$, which uses \cref{eq:coro-complex-var-condition}.

\subsection{Proof of Theorem \ref{trm:real-bernstein-bound}}
\label{sec:appendix-rademacher-bound}

We make use of Bernstein's inequality \citep[e.g.,][Theorem 2.8.4]{vershynin2018high}:
For independent random variables $X_1, \dots, X_D \in \mathbb{R}$  such that $\mathbb{E}[X_i] = 0$ and $|X_i| \leq R$ almost surely for a constant $R > 0$, we have for any $t > 0$:
\begin{align}
    \label{eqn:bernstein}
    \mathrm{Pr}\left[\left| \sum_{i=1}^D X_i \right| \geq t \right]
    \leq 2 \exp \left( \frac{- t^2 / 2}{\sum_{i=1}^D \mathbb{V} \left[ X_i \right] + R t / 3} \right)
\end{align}
We define $X_i := \Phi_{\mathcal{C}} (\mat{x})_i \overline{\Phi_{\mathcal{C}} (\mat{y})_i} - (\dotprod{x}{y})^p / D \in \mathbb{R}$, where  $\Phi_\mathcal{C} (\mat{x}) \in \mathbb{C}^D$ is defined in \cref{eqn:comp-rad-polynomial-sketch}: $\Phi_{\mathcal{C}}(\mat{x})
    =  \frac{1}{\sqrt{D}} \left[ (\prod_{i=1}^p \dotprodi{z}{x}{i,1}), \dots, (\prod_{i=1}^p \dotprodi{z}{x}{i,D}) \right]^{\top}$.
Then we have $\mathbb{E}[X_i] = 0$.
Moreover,  
\begin{align*}
    |X_i| &\leq |  \Phi_{\mathcal{C}} (\mat{x})_i \overline{\Phi_{\mathcal{C}} (\mat{y})_i} | + |(\dotprod{x}{y})^p / D| = \frac{1}{D} \left( \prod_{j=1}^p | \dotprodi{z}{x}{j}| | \overline{\dotprodi{z}{y}{j}} | + |(\dotprod{x}{y})^p| \right)   \\
    &\leq \frac{1}{D} ( \norm{x}_1^p \norm{y}_1^p + \norm{x}_2^p \norm{y}_2^p)  \leq \frac{2}{D} \norm{x}_1^p \norm{y}_1^p =: R
\end{align*}
where the first inequality is the triangle inequality. The second inequality uses H\"older's inequality (and that the absolute value of each element of $\mat{z}_j$ is $1$) as well as the upper bound $\dotprod{x}{y} \leq \|\mat{x}\|_2 \|\mat{y}\|_2$.
Furthermore, by assumption we have 
$$
\mathbb{V} [X_i] = \frac{\sigma^2 \norm{x}_2^{2p} \norm{y}_2^{2p} }{D^2}  \leq \frac{\sigma^2 \norm{x}_1^{2p} \norm{y}_1^{2p} }{D^2}
$$
for some $\sigma^2 \geq 0$.
Therefore, using \cref{eqn:bernstein} and setting $t := \norm{x}_1^p \norm{y}_1^p \epsilon$, we have
\begin{equation*}
    \mathrm{Pr}\left[\left| \sum_{i=1}^D X_i \right| \geq \epsilon \norm{x}_1^p \norm{y}_1^p \right]
    \leq 2 \exp \left( \frac{- D \epsilon^{2} / 2}{\frac{2}{3} \epsilon + \sigma^2} \right)
\end{equation*}
Setting $D \geq 2(\frac{2}{3\epsilon}  + \frac{\sigma^2}{\epsilon^2} ) \log ( \frac{2}{\delta})$ and taking the complementary probability gives the desired result.

\section{Proofs for Section \ref{sec:structured-projections}}

\subsection{Key Lemma}
First, we state a key lemma that is needed for deriving the variance of real and complex TensorSRHT.
This result is essentially given in \citet[Proof of Proposition 8.2]{Choromanski2017}.
However, their proof contains a typo missing the negative sign, and they use a different definition of the Hadamard matrix from ours. 
Therefore, for completeness, we state the result formally and provide a proof. 

\begin{lemma} \label{lemma:key-lemma-hadamard}
Let $d = 2^m$ for some $m \in \mathbb{N}$ and  $\mat{H}_d = (\mat{h}_1, \dots, \mat{h}_d) \in \{1, - 1\}^{d \times d}$ be the unnormalized Hadamard matrix defined in \cref{eq:hadamard}, where $\mat{h}_\ell = ( h_{\ell, 1}, \dots, h_{\ell, d} )^\top  \in \{ 1, -1 \}^d$ for $\ell \in \{1,\dots,d\}$.
Let $\pi: \{1,\dots, d\} \to \{1,\dots,d\}$ be a uniformly random permutation. 
Then for any $\ell, \ell' \in \{ 1,\dots, d \}$ with $\ell \not= \ell'$ and $t, u \in \{ 1, \dots, d\}$ with $t \not= u$, we have
$$
\mathbb{E} [h_{\pi(\ell), t}  h_{\pi(\ell'), t}  h_{\pi(\ell) ,u} h_{\pi(\ell'), u}] = -\frac{1}{d-1},
$$
where the expectation is with respect to the random permutation $\pi$.
\end{lemma}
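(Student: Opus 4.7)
The plan is to reduce the expectation over the random permutation to a deterministic combinatorial sum, and then evaluate that sum in closed form using the orthogonality relation $\mat{H}_d \mat{H}_d^{\top} = d\mat{I}_d$.

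First, since $\pi$ is a uniformly random permutation of $\{1,\dots,d\}$, for any fixed indices $\ell \neq \ell'$ the pair $(\pi(\ell), \pi(\ell'))$ is uniformly distributed over the $d(d-1)$ ordered pairs $(i,j)$ with $i \neq j$. All other values of $\pi$ are irrelevant because the integrand depends only on $\pi(\ell)$ and $\pi(\ell')$. Hence
\[
\mathbb{E}\bigl[h_{\pi(\ell),t}\, h_{\pi(\ell'),t}\, h_{\pi(\ell),u}\, h_{\pi(\ell'),u}\bigr]
=\frac{1}{d(d-1)} \sum_{\substack{i,j=1 \\ i\neq j}}^{d} h_{i,t}\, h_{j,t}\, h_{i,u}\, h_{j,u}.
\]

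Second, I would recognize that the corresponding \emph{unrestricted} double sum factors as a square:
\[
\sum_{i,j=1}^{d} h_{i,t}\, h_{j,t}\, h_{i,u}\, h_{j,u}
=\Bigl(\sum_{i=1}^{d} h_{i,t}\, h_{i,u}\Bigr)^{\!2}.
\]
The inner sum is precisely the $(t,u)$-entry of $\mat{H}_d \mat{H}_d^{\top}$ (noting that $h_{i,t}=(\mat{h}_i)_t=(\mat{H}_d)_{t,i}$), which equals $0$ for $t\neq u$. Hence the full unrestricted sum vanishes, and the $i\neq j$ piece equals the negative of the diagonal contribution: $\sum_{i\neq j} = -\sum_{i=1}^d h_{i,t}^2\, h_{i,u}^2 = -d$, using that every entry of $\mat{H}_d$ lies in $\{-1,+1\}$.

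Combining these two ingredients yields $\mathbb{E}[\cdots] = -d/[d(d-1)] = -1/(d-1)$, which is the desired identity. There is no real obstacle: the argument is essentially a one-line calculation once one has (i) the uniform distribution of $(\pi(\ell),\pi(\ell'))$ over ordered distinct pairs and (ii) the factorization trick combined with row-orthogonality. The only point that requires a touch of care is to verify that the relevant orthogonality is between the \emph{rows} of $\mat{H}_d$ indexed by $t$ and $u$ (not the columns indexed by $\ell$, $\ell'$), which is exactly what $\mat{H}_d \mat{H}_d^{\top}=d\mat{I}_d$ provides.
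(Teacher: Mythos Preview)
Your proof is correct. Both your argument and the paper's rest on the same key fact—the row orthogonality $\sum_{i=1}^d h_{i,t}h_{i,u}=0$ for $t\neq u$—but you deploy it differently. The paper defines $\alpha_\ell:=h_{\ell,t}h_{\ell,u}$, observes that $\sum_\ell\alpha_\ell=0$ together with $\alpha_\ell\in\{-1,+1\}$ forces exactly $d/2$ of each sign, and then computes $\mathbb{E}[\alpha_{\pi(\ell)}\alpha_{\pi(\ell')}]$ via conditional probabilities $\Pr(\alpha_{\pi(\ell')}=a\mid\alpha_{\pi(\ell)}=b)$. You bypass the sign-counting and conditioning entirely by writing the expectation as the deterministic average $\frac{1}{d(d-1)}\sum_{i\neq j}\alpha_i\alpha_j$ and using the algebraic identity $\sum_{i\neq j}\alpha_i\alpha_j=\bigl(\sum_i\alpha_i\bigr)^2-\sum_i\alpha_i^2=0-d$. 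Your route is shorter and arguably cleaner; the paper's route makes the probabilistic structure (sampling without replacement) more explicit. Both are valid and your indexing check that $\sum_i h_{i,t}h_{i,u}=(\mat{H}_d\mat{H}_d^\top)_{t,u}$ is the correct orientation.
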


\begin{proof}
We first derive a few key identities needed for our proof. 
For simplicity of notation, define
$$
\alpha_\ell := h_{\ell, t} h_{\ell, u}, \quad \ell \in \{ 1, \dots, d\}.
$$
Since any two distinct rows (and any two distinct columns) of $\mat{H}_d$ are orthogonal, we have 
$$
\sum_{\ell = 1}^d \alpha_\ell = \sum_{\ell=1}^d h_{\ell, t} h_{\ell, u} = 0.
$$
Since $\alpha_\ell \in \{-1, 1\}$, this identity implies that exactly $d/2$ elements in $\{ \alpha_1, \dots, \alpha_d \}$ are $1$, and the rest  are $-1$. 
Note that for each $\ell \in \{1,\dots, d \}$ the randomly permuted index $\pi(\ell)$ takes values in $\{1, \dots, d\}$ with equal probabilities. 
Therefore, the probability of $\alpha_{\pi(\ell)}$ being $1$ and that of $\alpha_{\pi(\ell)}$ being $-1$ are equal:
\begin{align*}
    {\rm Pr}( \alpha_{\pi(\ell)} = 1)
    = {\rm Pr}( \alpha_{\pi(\ell)} = -1)
    = 0.5.
\end{align*}
Note that $\pi^b(\ell) \not= \pi^b(\ell')$ since $\ell \not= \ell'$ and $\pi$ is a (random) permutation.  
Therefore, we have the following conditional probabilities:
$$
{\rm Pr}(\alpha_{\pi(\ell')} = a \mid \alpha_{\pi(\ell)} = b ) = 
\begin{cases}
\frac{d/2-1}{d-1} \quad \text{ if } a = b = 1 \text{ or } a = b = -1  \\
\frac{d/2}{d-1} \quad \text{ if } a = 1, b = -1 \text{ or } a = -1, b = -1
\end{cases}
$$

Using the above identities, we now prove the assertion:
\begin{align*}
    & \mathbb{E} [h_{\pi^b(\ell), t} h_{\pi^b(\ell'), t} h_{\pi^b(\ell), u} h_{\pi^b(\ell'), u}] = \mathbb{E} [ \alpha_{\pi(\ell)} \alpha_{\pi(\ell')} ] \\
    & = {\rm Pr}( \alpha_{\pi(\ell)} = 1 ) \mathbb{E}[ \alpha_{\pi(\ell)} \alpha_{\pi(\ell')} \mid \alpha_{\pi(\ell)} = 1 ] + {\rm Pr}( \alpha_{\pi(\ell)} = -1 ) \mathbb{E}[ \alpha_{\pi(\ell)} \alpha_{\pi(\ell')} \mid \alpha_{\pi(\ell)} = -1 ]  \\
    & = \frac{1}{2} \mathbb{E}[  \alpha_{\pi(\ell') } \mid \alpha_{\pi(\ell)} = 1 ] - \frac{1}{2} \mathbb{E}[  \alpha_{\pi(\ell') } \mid \alpha_{\pi(\ell)} = -1 ] \\
    &  = \frac{1}{2} \left( \frac{d/2-1}{d-1} - \frac{d/2}{d-1} \right)
    - \frac{1}{2} \left(\frac{d/2}{d-1} - \frac{d/2-1}{d-1} \right)  = -\frac{1}{d-1}.
\end{align*}
 
\end{proof}

\subsection{Proof of Theorem \ref{thm:variance-complexTensorSRHT}}

\label{sec:proof-var-comp-TensorSRHT}

We first clarify the notation we use.  
Recall that our  feature map $\Phi(\mat{x}) \in \mathbb{C}^D$ is given by 
\begin{align*}
    \Phi (\mat{x}) = \frac{1}{\sqrt{D}} \left[ (\prod_{i=1}^p \mat{s}_{i, 1}^{\top} \mat{x}), \dots, (\prod_{i=1}^p \mat{s}_{i, D}^{\top} \mat{x}) \right]^{\top} \in \mathbb{C}^D.
\end{align*}
The random vectors $\mat{s}_{i, \ell} \in \mathbb{C}^d$ are independently generated blockwise, and there are $B :=  \lceil D/d \rceil$ blocks in total (and note that $D = (B-1)d + {\rm mod}(D,d)$): 
For each $i = 1, \dots, p$,
\begin{align*}
  &  \underbrace{(\mat{s}_{i,1}, \dots,\mat{s}_{i,d})}_{\text{Block }1},\  \underbrace{ (\mat{s}_{i,d+1}, \dots,\mat{s}_{i,2d}) }_{\text{Block }2},\ \dots,
 \\  
  & \quad \quad \quad \quad  \underbrace{(\mat{s}_{i,(B-2)d + 1}, \dots, \mat{s}_{i, (B-1)d} )}_{\text{Block }B-1},\ \underbrace{(\mat{s}_{i,(B-1)d + 1}, \dots, \mat{s}_{i, (B-1)d + {\rm mod}(D,d)} )}_{\text{Block }B} \\
  &=:  \underbrace{(\mat{s}^1_{i,1}, \dots,\mat{s}^1_{i,d})}_{\text{Block }1},\  \underbrace{ (\mat{s}^2_{i,1}, \dots,\mat{s}^2_{i,d}) }_{\text{Block }2},\ \dots,\
  \underbrace{(\mat{s}^{B-1}_{i,1}, \dots, \mat{s}^{B-1}_{i,d})}_{\text{Block }B-1}, \
  \underbrace{(\mat{s}^B_{i,1}, \dots, \mat{s}^B_{i,{\rm mod}(D,d)})}_{\text{Block }B},
\end{align*}
where we introduced in the second line a new notation:
$$
\mat{s}^b_{i, \ell} := \mat{s}_{i, (b-1)d + \ell} \quad (b = 1,\dots,B, \ \ell = 1,\dots, d.).
$$
Here $b$ serves as the indicator of the $b$-th block. Thus, using this notation,
$$ 
\mat{s}^b_{i, \ell} = \mat{z}_i^b \circ \mat{h}_{\pi^b(\ell)} \in \mathbb{C}^d \quad (\ell = 1,\dots,d), 
$$ 
where $\mat{z}_{i}^b = (z_{i,1}^b, \dots, z_{i,d}^b)^\top \in \mathbb{C}^d$ is a random vector whose elements $z_{i,1}^b, \dots, z_{i,d}^b$ are i.i.d., and $\pi^b: \{1,\dots, d\} \to \{1,\dots,d\}$ is a random permutation of the indices.
Note that $\mat{z}_{i}^b$ and $\pi^b$ are generated independently for each $b \in \{1, \dots, B\}$.
Therefore,  the random vectors $\mat{s}^b_{i, \ell}$ and $\mat{s}^{b'}_{i, \ell'}$ are statistically independent if they are from different blocks, i.e., if $b \not= b'$.

For each $b=1,\dots,B$, define $\mat{z}^b = (z^b_1, \dots, z^b_d)^\top \in \mathbb{C}^d$ as a random vector independently and identically distributed as $ \mat{z}^b_1, \dots, \mat{z}^b_p$. 
Define
\begin{align} \label{eq:def-s-vector-proof}
    \mat{s}^b_\ell := \mat{z}^b \circ \mat{h}^b_{\pi(\ell)} 
    &= (z^b_1 h_{\pi^b(\ell),1}, \dots,  z^b_d h_{\pi^b(\ell),d})^\top =: (s^b_{\ell,1}, \dots, s^b_{\ell, d})^\top \in \mathbb{C}^d.  
\end{align}
Then $\mat{s}^b_\ell$ is independently and identically distributed as $\mat{s}_{1, \ell}^b, \dots, \mat{s}_{p, \ell}^b$. 
Moreover, given the permutation $\pi^b$ fixed, $\mat{s}^b_\ell$ is identically distributed as $\mat{z}^b$. 
This is because 1) $z^b_1, \dots, z^b_d$ are i.i.d., 2) each $z_t^b$ is symmetrically distributed ($t=1,\dots,d$), and 3) $h_{\pi^b(\ell),1} ,\dots, h_{\pi^b(\ell),d} \in \{1, -1 \}$.

Now let us start proving the assertion.   We first have
\begin{align*}
    \mathbb{V}[ \hat{k}( \mat{x}, \mat{y} )] =     \mathbb{E} [ | \hat{k}(\mat{x}, \mat{y}) |^2] -  |\mathbb{E}[\hat{k}( \mat{x}, \mat{y} ) ]   |^2 =     \mathbb{E} [ | \hat{k}(\mat{x}, \mat{y}) |^2] - ( \mat{x}^\top \mat{y} )^{2p},
\end{align*}
where the second identity follows from the approximate kernel being unbiased for both real and complex TensorSRHT.
Thus, from now on we study the term $\mathbb{E} [| \hat{k}(\mat{x}, \mat{y}) |^2]$.

For simplicity of notation, define $I_b := \{1,\dots, d\}$ for $b = 1,\dots, B-1$ and $I_b := \{ 1, \dots, {\rm mod}(D,d) \}$ for $b = B$.
Since the approximate kernel can be written as
$$\hat{k} (\mat{x}, \mat{y}) := \Phi (\mat{x})^\top \overline{\Phi (\mat{y})} = \frac{1}{D} \sum_{\ell = 1}^D \prod_{i=1}^p \mscpi{s}{x}{i, \ell}  \overline{\mscpi{s}{y}{i, \ell}} = \frac{1}{D} \sum_{b=1}^B \sum_{\ell \in I_b} \prod_{i=1}^p \left(\mat{s}_{i,\ell}^{b \top} \mat{x}  \right)  \overline{ \left( \mat{s}_{i,\ell}^{b \top} \mat{y} \right) }
$$ 
its second moment can be written as
\begin{align}
    \mathbb{E} [ | \hat{k}(\mat{x}, \mat{y}) |^2] 
    &= \frac{1}{D^2}
    \sum_{b, b'=1}^B
    \sum_{\ell \in I_b} \sum_{\ell \in I_{b'}} 
    \mathbb{E} \left[ \prod_{i=1}^p 
    \left(\mat{s}^{b \top}_{i,\ell} \mat{x} \right)  \overline{ \left(\mat{s}^{b \top}_{i,\ell} \mat{y} \right) }
   \overline{ \left(\mat{s}^{b' \top}_{i,\ell'}\mat{x}\right) }  \left(\mat{s}^{b' \top}_{i,\ell'}\mat{y}\right)  \right] \nonumber \\
    &= \frac{1}{D^2}
    \sum_{b, b'=1}^B
    \sum_{\ell \in I_b} \sum_{\ell \in I_{b'}}  \prod_{i=1}^p 
    \mathbb{E} \left[ 
    \left(\mat{s}^{b \top}_{i,\ell} \mat{x} \right) \overline{ \left(\mat{s}^{b \top}_{i,\ell} \mat{y} \right) }
   \overline{ \left(\mat{s}^{b' \top}_{i,\ell'}\mat{x}\right)  }\left(\mat{s}^{b' \top}_{i,\ell'}\mat{y}\right)  \right] \nonumber \\
    &= \frac{1}{D^2}
    \sum_{b, b'=1}^B
    \sum_{\ell \in I_b} \sum_{\ell \in I_{b'}} 
    \left( \mathbb{E} \left[ 
    \left(\mat{s}^{b \top}_{\ell} \mat{x} \right) \overline{ \left(\mat{s}^{b \top}_{\ell} \mat{y} \right) }
   \overline{ \left(\mat{s}^{b' \top}_{\ell'}\mat{x}\right) }\left(\mat{s}^{b' \top}_{\ell'}\mat{y}\right)  \right] \right)^p. \label{eq:second-moment-real-SRHT} 
\end{align}

Now we study individual terms in \eqref{eq:second-moment-real-SRHT}, categorizing the indices $b, b' \in \{1,\dots,B\}$ and $\ell, \ell' \in \{1, \dots, d\}$ of indices into the following 3 cases:
\begin{enumerate}
    \item \underline{$b = b'$ and $\ell=\ell'$ ($D$ terms):} 
    As mentioned earlier, conditioned on the permutation $\pi^b$, $\mat{s}^b_\ell$ is identically distributed as $\mat{z}^b$ (see the paragraph following \cref{eq:def-s-vector-proof}).
        Thus,
    \begin{align*}
       &  \mathbb{E} \left[ 
        \left(\mat{s}_\ell^{b \top} \mat{x}\right)^2 \left(\mat{s}_\ell^{b \top} \mat{y}\right)^2  \right]  = \mathbb{E}_{\pi^b} \left[ \mathbb{E} \left[ 
        \left(\mat{s}_\ell^{b \top} \mat{x}\right)^2 \left(\mat{s}_\ell^{b \top} \mat{y}\right)^2 \mid \pi^b  \right] \right] \\
        & =  \mathbb{E}_{\pi^b} \left[ \mathbb{E} \left[ 
        \left(\mat{z}^{b \top} \mat{x}\right)^2 \left(\mat{z}^{b \top} \mat{y}\right)^2   \right] \right]  =    \mathbb{E} \left[ 
        \left(\mat{z}^{b \top} \mat{x}\right)^2 \left(\mat{z}^{b \top} \mat{y}\right)^2   \right] =  \mathbb{E} \left[ 
        \left(\mat{z}^{\top} \mat{x}\right)^2 \left(\mat{z}^{\top} \mat{y}\right)^2   \right],
    \end{align*}
where $\mathbb{E}_{\pi^b}$ denotes the expectation with respect to $\pi^b$ and $\mat{z} \in \mathbb{C}^d$ is a random vector identically distributed as $\mat{z}^1, \dots, \mat{z}^B$.

    \item \underline{$b = b'$ and $\ell \neq \ell'$ ($c(D,d)$ terms, where $c(D,d)$ is defined in \cref{eq:non-zero-pairs}):} This case requires a  detailed analysis, which we will do below. 
    
    \item \underline{$b \not= b'$  (The rest of terms $= D^2 - D - c(D,d)$ terms):}
    Since $\mat{s}^{b}_{\ell}$ and $\mat{s}^{b'}_{\ell'}$ are independent in this case, we have
    \begin{align*}
   &  \mathbb{E} \left[ 
    \left(\mat{s}^{b \top}_{\ell} \mat{x} \right) \overline{ \left(\mat{s}^{b \top}_{\ell} \mat{y} \right) }
   \overline{ \left(\mat{s}^{b' \top}_{\ell'}\mat{x}\right) }\left(\mat{s}^{b' \top}_{\ell'}\mat{y}\right)  \right] 
    = \mathbb{E} \left[   \left(\mat{s}^{b \top}_{\ell} \mat{x} \right)  \overline{ \left(\mat{s}^{b \top}_{\ell} \mat{y} \right) } \right] \mathbb{E} \left[ \overline{ \left(\mat{s}^{b' \top}_{\ell'}\mat{x}\right) } \left(\mat{s}^{b' \top}_{\ell'}\mat{y}\right) \right] \\
    & = \mathbb{E}[ \hat{k}( \mat{x}, \mat{y} ) ] \overline{  \mathbb{E}[ \hat{k}( \mat{x}, \mat{y} ) ] } = (\mat{x}^\top \mat{y})^{2},
    \end{align*}
\end{enumerate}
where the last equality follows from the approximate kernel being unbiased.

We now analyze the case 2:
\begin{align*}
    &\mathbb{E} \left[ 
    \left(\mat{s}^{b \top}_{\ell} \mat{x} \right) \overline{ \left(\mat{s}^{b \top}_{\ell} \mat{y} \right) }
    \overline{ \left(\mat{s}^{b \top}_{\ell'}\mat{x}\right) } \left(\mat{s}^{b \top}_{\ell'}\mat{y}\right)  \right]
    = \sum_{t, u, w, v=1}^d   \mathbb{E} [s_{\ell, t}^b \overline{s_{\ell, u}^b} \overline{s_{\ell', v}^b} s_{\ell', w}^b ] x_t y_u x_v y_w \\
    &= \sum_{t, u, w, v=1}^d 
    \mathbb{E} [z^b_t \overline{z^b_u} \overline{z^b_v} z^b_w]~ \underbrace{ \mathbb{E} [h_{\pi^b(\ell),t} h_{\pi^b(\ell), u}  h_{\pi^b(\ell'), v} h_{\pi^b(\ell'),w} ]}_{=: E}~ x_t y_u x_v y_w
\end{align*}
Note that we have $\mathbb{E} [z^b_t \overline{z^b_u} \overline{z^b_v} z^b_w] = 0$ unless:
\begin{enumerate}[label={(\alph*)}]
\item $t=u=v=w$:  $\mathbb{E} [z^b_t \overline{z^b_u} \overline{z^b_v} z^b_w] = \mathbb{E} [ | z^b_t |^4 ] = 1$ and  $E = \mathbb{E} [h_{\pi^b(\ell),t}^2  h_{\pi^b(\ell'), t}^2  ] = 1.$

\item $t=u \neq v=w$: $\mathbb{E} [z^b_t \overline{z^b_u} \overline{z^b_v} z^b_w] =  \mathbb{E}[ | z^b_t |^2  | z^b_v |^2  ] = 1$ and $E = \mathbb{E} [h_{\pi^b(\ell),t}^2 h_{\pi^b(\ell'), v}^2] = 1$.

\item $t=v \neq u=w $:  $\mathbb{E} [z^b_t \overline{z^b_u} \overline{z^b_v} z^b_w] =  \mathbb{E}[ | z^b_t |^2  | z^b_u |^2  ] = 1$ and $E= \mathbb{E} [h_{\pi^b(\ell),t} h_{\pi^b(\ell), u}  h_{\pi^b(\ell'), t} h_{\pi^b(\ell'),u} ]$.

\item $t=w \neq u=v$: 
$\mathbb{E} [ z^b_t \overline{z^b_u} \overline{z^b_v} z^b_w]  = \mathbb{E}[ (z^b_t)^2 (\overline{z^b_u} )^2 ] = (2q - 1)^2$ and $E= \mathbb{E} [h_{\pi^b(\ell),t} h_{\pi^b(\ell), u}  h_{\pi^b(\ell'), u} h_{\pi^b(\ell'),t} ]$.
\end{enumerate}
Therefore, we have
\begin{align*}
    &\mathbb{E} \left[ \mscpi{s}{x}{\ell} \mscpi{s}{y}{\ell} \mscpi{s}{x}{\ell'} \mscpi{s}{y}{\ell'} \right] \\
    & = \sum_{t=1}^d  x_t^2 y_t^2 + \sum_{t \not= v} x_t y_t x_v y_v + \sum_{t \not= u}  \mathbb{E} [h_{\pi^b(\ell),t} h_{\pi^b(\ell'), t} h_{\pi^b(\ell), u}   h_{\pi^b(\ell'),u} ] \left( x_t^2 y_u^2 + (2q-1)^2 x_t y_t x_u y_u \right) \\
    & = ( \mat{x}^\top \mat{y} )^2 - \frac{1}{d-1} \sum_{t \not= u} \left( x_t^2 y_u^2 + (2q-1)^2 x_t y_t x_u y_u \right) \quad (\because \text{Lemma~\ref{lemma:key-lemma-hadamard}}) \\ 
    & = ( \mat{x}^\top \mat{y} )^2 - \frac{V^{(1)}_q}{d-1}   ,
\end{align*}
where $V_q^{(1)} := \sum_{t \not= u} \left( x_t^2 y_u^2 + (2q-1)^2 x_t y_t x_u y_u \right)$ is \cref{eq:complex-poly-var-coro-2}
 with $p = 1$, which is the variance of the unstructured polynomial sketch \eqref{eqn:comp-rad-polynomial-sketch} with a single feature.

Now, using these identities in \cref{eq:second-moment-real-SRHT}, the variance of the approximate kernel can be expanded as
\begin{align*}
    & \mathbb{V}[ \hat{k}( \mat{x}, \mat{y} )]  =     \mathbb{E} [\hat{k}(\mat{x}, \mat{y})^2] - ( \mat{x}^\top \mat{y} )^{2p} \\
    = &\frac{1}{D} \left(  \mathbb{E} \left[
        \left(\mat{z}^{\top} \mat{x}\right)^2 \left(\mat{z}^{\top} \mat{y}\right)^2   \right] \right)^p  + \frac{c(D,d)}{D^2} \left(  ( \mat{x}^\top \mat{y} )^2 - \frac{V^{(1)}_q}{d-1}  \right)^p \\
    &\, + \frac{D^2 - D - c(D,d)}{D^2}   ( \mat{x}^\top \mat{y} )^{2p}  - ( \mat{x}^\top \mat{y} )^{2p} \\
    = & \frac{1}{D} \left[ \left(  \mathbb{E} \left[
        \left(\mat{z}^{\top} \mat{x}\right)^2 \left(\mat{z}^{\top} \mat{y}\right)^2   \right] \right)^p - ( \mat{x}^\top \mat{y} )^{2p} \right]   - \frac{c(D,d)}{D^2} \left[ ( \mat{x}^\top \mat{y} )^{2p} -  \left(  ( \mat{x}^\top \mat{y} )^2 - \frac{V^{(1)}_q}{d-1}  \right)^p \right] \\
    = &  \frac{1}{D} V_q^{(p)}  - \frac{c(D,d)}{D^2}  \left[ (\mat{x}^\top \mat{y})^{2p}  -  \left( ( \mat{x}^\top \mat{y} )^2 - \frac{V_q^{(1)}}{d-1}   \right)^p \right]
\end{align*}
where $V_q^{(p)} \geq 0$ is \cref{eq:complex-poly-var-coro-2} with the considered value of the polynomial degree $p$, which is the  variance of the unstructured polynomial sketch \eqref{eqn:comp-rad-polynomial-sketch} with a single feature. 
This completes the proof.

\section{Convex Surrogate Functions for TensorSRHT Variances}
\label{sec:incremental-tensorsrht}

To extend the applicability of the Incremental Algorithm in  \cref{alg:incremental-algorithm} to TensorSRHT, we derive here convex surrogate functions for the variances of TensorSRHT, 
To this end, we first analyze the variances of TensorSRHT in  \cref{sec:analysis-variances-tensorSRHT}.
We then derive convex surrogate functions in  \cref{sec:covex-surrogate-TensorSRHT}.

\subsection{Analyzing the Variances of TensorSRHT}
\label{sec:analysis-variances-tensorSRHT}

We first derive another form of the variance of TensorSRHT given in \cref{eqn:complex-tensor-srht-variance} of \cref{thm:variance-complexTensorSRHT}, which we will use in a later analysis.
Let $\Phi_n : \mathbb{R}^d \to \mathbb{C}^D$ be a complex TensorSRHT sketch of degree $n \in \mathbb{N}$ satisfying the assumptions in  \cref{thm:variance-complexTensorSRHT} with $0 \leq q \leq 1$.
For $q=1$ we recover the real TensorSRHT and for $q=1/2$ the complex one.

As shown in \cref{sec:proof-var-comp-TensorSRHT}, the approximate kernel of the complex TensorSRHT can be written as
\begin{align*}
    \hat{k} (\mat{x}, \mat{y}) := \Phi_n (\mat{x})^\top \overline{\Phi_n (\mat{y})} = \frac{1}{D} \sum_{b=1}^B \sum_{\ell \in I_b} \prod_{i=1}^n \left(\mat{s}_{i,\ell}^{b \top} \mat{x}  \right)  \overline{ \left( \mat{s}_{i,\ell}^{b \top} \mat{y} \right) },
\end{align*}
where $B := \lceil D/d \rceil$, $I_b := \{1,\dots, d\}$ for $b = 1,\dots, B-1$ and $I_b := \{ 1, \dots, {\rm mod}(D,d) \}$ for $b = B$, and  $\mat{s}_{i, \ell}^b \in \mathbb{C}^d$ are the structured random weights defined in \cref{eq:def-s-vector-proof}.
We can then write the variance of the approximate kernel as
\begin{align}
    \mathbb{V} [\hat{k} (\mat{x}, \mat{y})]
     = & \frac{1}{D^2} \sum_{b=1}^B \mathbb{V} \left[ \sum_{\ell \in I_b} \prod_{i=1}^n \left(\mat{s}_{i,\ell}^{b \top} \mat{x}  \right)  \overline{ \left( \mat{s}_{i,\ell}^{b \top} \mat{y} \right) } \right] \nonumber \\
     = & \frac{1}{D^2} \sum_{b=1}^B \sum_{\ell \in I_b} \underbrace{\mathbb{V} \left[ \prod_{i=1}^n \left(\mat{s}_{i,\ell}^{b \top} \mat{x}  \right)  \overline{ \left( \mat{s}_{i,\ell}^{b \top} \mat{y} \right) } \right]}_{= \ V_q^{(n)}}  \nonumber \\
   & +  \frac{1}{D^2} \sum_{b=1}^B \sum_{\substack{\ell, \ell' \in I_b, \\ \ell \neq \ell'}} \underbrace{{\rm Cov} \left( \prod_{i=1}^n \left(\mat{s}_{i,\ell}^{b \top} \mat{x}  \right)  \overline{ \left( \mat{s}_{i,\ell}^{b \top} \mat{y} \right) }, \prod_{i=1}^n \left(\mat{s}_{i,\ell'}^{b \top} \mat{x}  \right)  \overline{ \left( \mat{s}_{i,\ell'}^{b \top} \mat{y} \right) } \right)}_{=:  \ {\rm Cov}_q^{(n)}} \nonumber \\ 
   = &  \frac{V^{(n)}_q}{D} 
    + \frac{c(D,d)}{D^2} {\rm Cov}^{(n)}_q = {\rm  \cref{eqn:complex-tensor-srht-variance}}, \label{eqn:var-tensor-srht-rewritten}
\end{align}
where   $c(D,d) = \lfloor D/d \rfloor d(d-1) + {\rm mod}(D,d)({\rm mod}(D, d) - 1)$ and the last line follows from that the values of $V_q^{(n)}$ and ${\rm Cov}_q^{(n)}$ do not depend on the choice of $\ell, \ell'$ and $b$ (which can be shown from the arguments in \cref{sec:proof-var-comp-TensorSRHT}).
Here, $V^{(n)}_q$ is the variance of the unstructured Rademacher sketch with a single feature in \cref{eq:complex-poly-var-coro-2} with $p = n$, and ${\rm Cov}_q^{(n)}$ is the covariance for distinct indices $\ell, \ell'$ inside each block $b$. 
By comparing  \cref{eqn:complex-tensor-srht-variance} and \cref{eqn:var-tensor-srht-rewritten}, the concrete form of ${\rm Cov}_q^{(n)}$ is given by
$$
{\rm Cov}_q^{(n)} = - \left[ (\mat{x}^\top \mat{y})^{2n}  -  \left( ( \mat{x}^\top \mat{y} )^2 - \frac{V_q^{(1)}}{d-1}   \right)^n \right]
$$

\cref{eqn:var-tensor-srht-rewritten} is a useful representation of the variance of TensorSRHT in \cref{eqn:complex-tensor-srht-variance}  for studying its (non-)convexity with respect to $D$.
The following result shows  a range of values of $D$ for which \cref{eqn:complex-tensor-srht-variance}  is convex.

\begin{theorem}
\label{thrm:tensor-srht-convexity}
The variance of the TensorSRHT  sketch in  \cref{eqn:complex-tensor-srht-variance}  is convex and monotonically decreasing with respect to $D \in \{ 1, \dots, d \}$  and with respect to $D \in \{ kd \mid k \in \mathbb{N} \}$. 
\end{theorem}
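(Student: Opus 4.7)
The plan is to start from the alternative form of the variance derived in \cref{eqn:var-tensor-srht-rewritten}, namely
\[
\mathbb{V}[\hat k(\mat{x},\mat{y})] \;=\; \frac{V_q^{(n)}}{D} \;+\; \frac{c(D,d)}{D^2}\,\mathrm{Cov}_q^{(n)},
\]
and to analyze the two ranges of $D$ separately, since in each range the combinatorial factor $c(D,d)=\lfloor D/d\rfloor d(d-1)+\mathrm{mod}(D,d)(\mathrm{mod}(D,d)-1)$ collapses to something simple. In each case I will reduce the variance to the canonical form $A/D+B$ (or $C/D$), and then the convexity and monotonic decrease in $D$ follow immediately from the convexity and decrease of $1/D$ on the positive reals, provided the leading coefficient is non-negative. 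Checking this non-negativity is the only nontrivial step.

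For $D \in \{1,\dots,d\}$, I have $\lfloor D/d\rfloor=0$ and $\mathrm{mod}(D,d)=D$ when $D<d$, and for $D=d$ both conventions give $c(d,d)=d(d-1)$, so that $c(D,d)=D(D-1)$ on the whole range. Substituting yields
\[
\mathbb{V}[\hat k(\mat{x},\mat{y})] \;=\; \frac{V_q^{(n)}-\mathrm{Cov}_q^{(n)}}{D} \;+\; \mathrm{Cov}_q^{(n)}.
\]
The key observation is that $\mathrm{Cov}_q^{(n)}$ is the covariance of two random variables each of which has variance exactly $V_q^{(n)}$ (namely $\prod_{i=1}^n(\mat{s}_{i,\ell}^{b\top}\mat{x})\overline{(\mat{s}_{i,\ell}^{b\top}\mat{y})}$ for two distinct indices $\ell\ne\ell'$ in the same block). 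Cauchy–Schwarz therefore gives $|\mathrm{Cov}_q^{(n)}|\le V_q^{(n)}$, in particular $V_q^{(n)}-\mathrm{Cov}_q^{(n)}\ge 0$, so the function above is of the form $A/D+B$ with $A\ge 0$, hence convex and monotonically decreasing on $D>0$, and a fortiori on the integers $\{1,\dots,d\}$.

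For $D\in\{kd\mid k\in\mathbb{N}\}$, I have $\lfloor D/d\rfloor=k=D/d$ and $\mathrm{mod}(D,d)=0$, so $c(D,d)=kd(d-1)=D(d-1)$. Substituting gives
\[
\mathbb{V}[\hat k(\mat{x},\mat{y})] \;=\; \frac{V_q^{(n)}+(d-1)\,\mathrm{Cov}_q^{(n)}}{D}.
\]
To show the numerator is non-negative I use that the variance itself is non-negative: evaluated at $D=d$ the above formula reads $(V_q^{(n)}+(d-1)\mathrm{Cov}_q^{(n)})/d\ge 0$, so $V_q^{(n)}+(d-1)\mathrm{Cov}_q^{(n)}\ge 0$. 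Hence the function is $C/D$ with $C\ge 0$, which is again convex and monotonically decreasing in $D>0$, and therefore in $k$ along the sequence $D=kd$.

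The main obstacle is really just making sure the right sign inequality is used in the right case: in the first range the leading coefficient involves $V_q^{(n)}-\mathrm{Cov}_q^{(n)}$, which is controlled by Cauchy–Schwarz, whereas in the second range the leading coefficient is $V_q^{(n)}+(d-1)\mathrm{Cov}_q^{(n)}$, which is controlled only by the non-negativity of the variance at $D=d$ (Cauchy–Schwarz alone would be too weak here, since it only gives $\mathrm{Cov}_q^{(n)}\ge -V_q^{(n)}$). Apart from that, the argument reduces to the elementary fact that $D\mapsto 1/D$ is convex and decreasing on $(0,\infty)$.
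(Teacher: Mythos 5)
Your proposal is correct and follows essentially the same route as the paper's proof: the same rewriting of the variance as $V_q^{(n)}/D + c(D,d)\,\mathrm{Cov}_q^{(n)}/D^2$, the same Cauchy--Schwarz bound $|\mathrm{Cov}_q^{(n)}|\le V_q^{(n)}$ for the range $D\in\{1,\dots,d\}$, and the same appeal to non-negativity of the variance at $D\in\{kd\}$ to control $V_q^{(n)}+(d-1)\mathrm{Cov}_q^{(n)}$. No gaps to report.
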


\begin{proof}
If $D \in \{1, \dots, d\}$, we have $c(D,d) = D(D-1)$ in \cref{eqn:var-tensor-srht-rewritten}. Therefore, \cref{eqn:var-tensor-srht-rewritten} is equal to 
\begin{align}
    \label{eqn:variance-tensor-srht-small-D}
    \frac{1}{D} V_q^{(n)} + \left(1 - \frac{1}{D} \right) {\rm Cov}_q^{(n)}
    = \frac{1}{D} \left( V_q^{(n)} - {\rm Cov}_q^{(n)} \right) + {\rm Cov}_q^{(n)}.
\end{align}
For two random variables $X, Y$ it generally holds that $|{\rm Cov}(X,Y)| \leq \sqrt{\mathbb{V}[X] \mathbb{V}[Y]}$ by the Cauchy-Schwarz inequality.
Hence, we have $|{\rm Cov}_q^{(n)}| \leq V_q^{(n)}$ and thus $V_q^{(n)} - {\rm Cov}_q^{(n)} \geq 0$. Therefore,  \cref{eqn:variance-tensor-srht-small-D} is proportional to $1/D$ with a non-negative coefficient, and thus it is convex and monotonically decreasing for $D \in \{1,\dots, d\}$.

Next, suppose $D = kd$ for some $k \in \mathbb{N}$, in which case we have $c(D, d) = kd(d-1)$ in \cref{eqn:var-tensor-srht-rewritten}. 
Therefore \cref{eqn:var-tensor-srht-rewritten} is equal to
\begin{align} \label{eq:proof-convex-var-TSRHT}
  \frac{1}{kd} \left( V_q^{(n)} + (d-1) {\rm Cov}_q^{(n)} \right) =   \frac{1}{D} \left( V_q^{(n)} + (d-1) {\rm Cov}_q^{(n)} \right).
\end{align}
The term in the parenthesis is non-negative, because  \eqref{eqn:var-tensor-srht-rewritten} is the variance of TensorSRHT and thus non-negative. 
Therefore,  \eqref{eqn:var-tensor-srht-rewritten} is convex and monotonically decreasing with respect to $D \in \{ kd \mid k \in \mathbb{N} \}$.

\end{proof}

As we do next,
\cref{thrm:tensor-srht-convexity} is useful for designing a convex surrogate function for \cref{eqn:complex-tensor-srht-variance}, as it shows the range of $D$ on which  \cref{eqn:complex-tensor-srht-variance}  is already convex and does not need to be modified.

\subsection{Convex Surrogate Functions}
\label{sec:covex-surrogate-TensorSRHT}

Based on \cref{eqn:complex-tensor-srht-variance}, we now propose a convex surrogate function for the variance of TensorSRHT in \cref{eqn:complex-tensor-srht-variance}.
We consider the following two cases separately: i)  $ {\rm Cov}_q^{(n)} \leq 0 $ and ii)   $ {\rm Cov}_q^{(n)} > 0 $.
For each case, we propose a convex surrogate function.

\paragraph{\underline{ i) Case  $ {\rm Cov}_q^{(n)} \leq 0 $.}}
 We define a surrogate function of  \cref{eqn:complex-tensor-srht-variance} by concatenating the two expressions of  \cref{eqn:var-tensor-srht-rewritten}  for $D \in \{1,\dots,d\}$ and $D \in \{kd \mid k \in \mathbb{N} \}$ given in \cref{eqn:variance-tensor-srht-small-D} and \cref{eq:proof-convex-var-TSRHT}, respectively, and extend their ranges to the entire domain $D \in \mathbb{N}$:
\begin{align}
    V^{(n)}_{\rm Surr.}(D)
    := \left\{
    \begin{array}{ll}
        \frac{1}{D} \left( V_q^{(n)} - {\rm Cov}_q^{(n)} \right) + {\rm Cov}_q^{(n)} & \mbox{if }  D \leq d \\
        \frac{1}{D} \left(V_q^{(n)} + (d-1) {\rm Cov}_q^{(n)}\right) & \mbox{if }  D > d .
    \end{array} 
\right.
\label{eqn:var-convexified-2}
\end{align}

\paragraph{\underline{ii) Case $ {\rm Cov}_q^{(n)} >  0$.}}
We use the expression \eqref{eq:proof-convex-var-TSRHT} to define a surrogate function on $D \in \mathbb{N}$:
\begin{align}
    V^{(n)}_{\rm Surr.}(D)
    :=   \frac{1}{D} \left(V_q^{(n)} + (d-1) {\rm Cov}_q^{(n)}\right)
\label{eqn:var-convexified-positive-cov}
\end{align}
The convexity of \cref{eqn:var-convexified-positive-cov} immediately follows from $V_q^{(n)} + (d-1) {\rm Cov}_q^{(n)} \geq 0$, which holds as we show in the proof of \cref{thrm:tensor-srht-convexity}.
Note that ${\rm Cov}_q^{(n)} > 0$ can only occur when $n$ is even, as shown in Corollary \ref{coro:complex-tensor-srht-odd-p} of Section~\ref{sec:structured-projections}.

We defined the surrogate function in \cref{eqn:var-convexified-2}  by interpolating  the variances of TensorSRHT in \cref{eqn:complex-tensor-srht-variance} for $D \in \{ 1, \dots, d \}$ and 
 $D \in \{ kd \mid k \in \mathbb{N} \}$ and extending the domain to $\mathbb{N}$.
In fact, for $D \in \{ 1, \dots, d \}$ and 
 $D \in \{ kd \mid k \in \mathbb{N} \}$,  \cref{eqn:var-convexified-2}  is equal to \cref{eqn:complex-tensor-srht-variance}, as shown in the proof of \cref{thrm:tensor-srht-convexity}.
\cref{fig:convex-approx} illustrates the convex surrogate function in \cref{eqn:var-convexified-2} and the variance of  TensorSRHT in \eqref{eqn:complex-tensor-srht-variance} when  ${\rm Cov}_q^{(n)} \leq 0$ holds.

Note that, as mentioned later in Remark \ref{remark:non-convexity-pos-covariance}, the surrogate function in \cref{eqn:var-convexified-2}  may not be convex over $D \in \mathbb{N}$ if the condition ${\rm Cov}_q^{(n)} \leq 0$ does not hold.
This is why we defined another convex surrogate function as in \cref{eqn:var-convexified-positive-cov} for the case ${\rm Cov}_q^{(n)} > 0$.

\begin{figure}[t]
\centering
\includegraphics[width=0.9\textwidth]{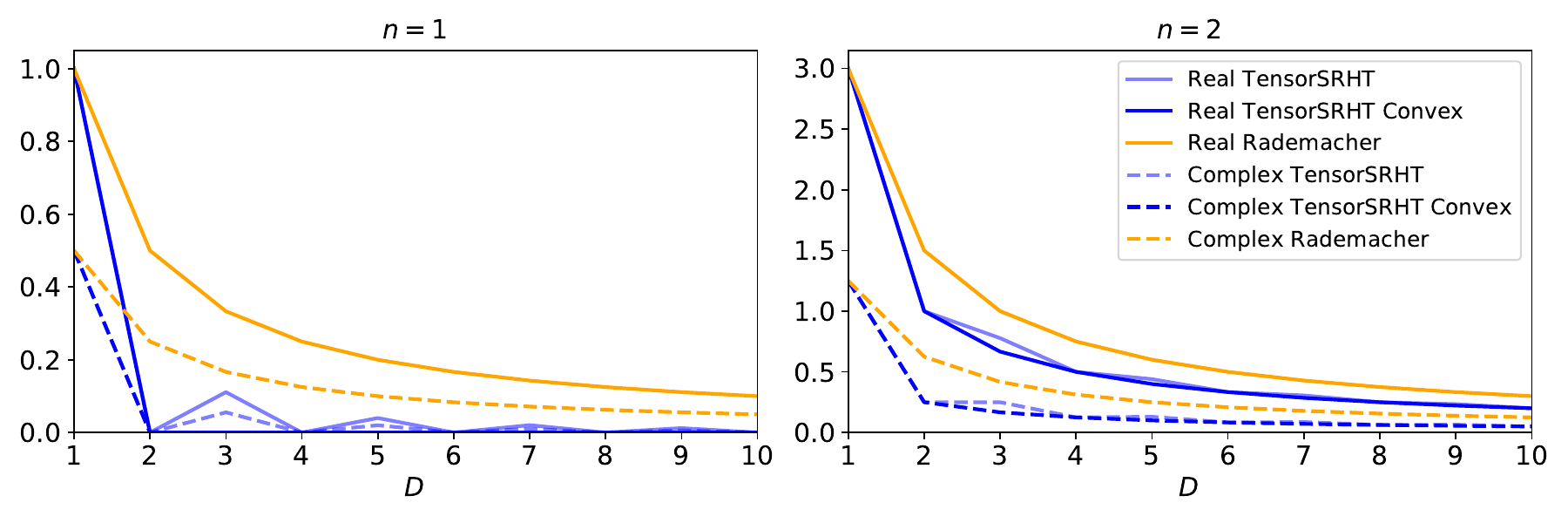}
\caption{Convex surrogate functions in \cref{eqn:var-convexified-2} and the variances of  TensorSRHT in \eqref{eqn:complex-tensor-srht-variance} as a function of the number of random features $D$, with polynomial degrees $n=1,2$ and input vectors $\mat{x} = \mat{y} = [\sqrt{1/2}, \sqrt{1/2}]^{\top}$ ($d = 2$). 
For comparison, we also plot the variances of the real Rademacher sketch in \cref{eqn:rademacher-variance} and the complex Rademacher sketch in \cref{eqn:complex-rademacher-variance}.
}
\label{fig:convex-approx}
\end{figure}

The following theorem shows that the surrogate function in \cref{eqn:var-convexified-2}  is convex in the considered case of i)  $ {\rm Cov}_q^{(n)} \leq 0 $.
\begin{theorem}
\label{thrm:surrogate-convexity}
If $ {\rm Cov}_q^{(n)} \leq 0 $,  \cref{eqn:var-convexified-2}  is convex with respect to $D \in \mathbb{N}$.
\end{theorem}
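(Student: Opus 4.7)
The plan is to view $V^{(n)}_{\rm Surr.}$ as a piecewise function whose two branches are each of the form $a/D + b$, verify that each branch is individually convex and decreasing, and then argue that under the hypothesis ${\rm Cov}_q^{(n)} \leq 0$ the two pieces glue together in a convexity-preserving way at the seam $D = d$. Since any map $D \mapsto a/D + b$ with $a \geq 0$ is convex and monotonically decreasing on $(0, \infty)$, almost all of the work reduces to a single seam condition.

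Concretely, I would proceed in three steps. \emph{Step 1.} Read off the leading coefficients from \cref{eqn:var-convexified-2}: on $\{1,\dots,d\}$ it is $V_q^{(n)} - {\rm Cov}_q^{(n)}$, and on $\{d+1, d+2, \dots\}$ it is $V_q^{(n)} + (d-1) {\rm Cov}_q^{(n)}$. Non-negativity of the first was already established in the proof of Theorem~\ref{thrm:tensor-srht-convexity} via the Cauchy--Schwarz bound $|{\rm Cov}_q^{(n)}| \leq V_q^{(n)}$, and non-negativity of the second follows because it equals $d$ times the TensorSRHT variance at $D = d$, cf.~\eqref{eq:proof-convex-var-TSRHT}. \emph{Step 2.} Extend the surrogate to a continuous-variable function $\widetilde{V}:(0,\infty) \to \mathbb{R}$ via the same piecewise formula. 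By Step 1 each piece is smooth and convex on its domain, and continuity at $D = d$ was already checked in the proof of Theorem~\ref{thrm:tensor-srht-convexity} (both branches give $V_q^{(n)}/d + (d-1){\rm Cov}_q^{(n)}/d$). \emph{Step 3.} Verify the one-sided slope condition $\widetilde{V}'(d^+) \geq \widetilde{V}'(d^-)$. Computing the two derivatives and multiplying by $-d^2 < 0$, this inequality rearranges to $V_q^{(n)} - {\rm Cov}_q^{(n)} \geq V_q^{(n)} + (d-1){\rm Cov}_q^{(n)}$, which simplifies to $-d \cdot {\rm Cov}_q^{(n)} \geq 0$, i.e.\ exactly the hypothesis ${\rm Cov}_q^{(n)} \leq 0$. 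A standard gluing lemma then certifies convexity of $\widetilde{V}$ on $(0,\infty)$, and convexity of its restriction to $\mathbb{N}$ is immediate.

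The main (and essentially only) nontrivial step is the slope-matching calculation at the seam $D = d$; everything else is either bookkeeping or has already been carried out in Theorem~\ref{thrm:tensor-srht-convexity}. It is worth noting that this same calculation shows sharpness of the hypothesis: if instead ${\rm Cov}_q^{(n)} > 0$, the right slope at $D = d$ is strictly smaller than the left slope, so convexity across the seam fails. This is precisely why the alternative surrogate \cref{eqn:var-convexified-positive-cov} must be introduced in that regime.
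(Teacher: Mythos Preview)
Your proof is correct and follows the same overall structure as the paper's: establish convexity of each branch separately (reusing the non-negativity facts from Theorem~\ref{thrm:tensor-srht-convexity}), then verify that the two pieces join convexly at $D=d$. The only substantive difference is how you handle the seam. The paper checks the discrete midpoint inequality
\[
\tfrac{1}{2}\bigl(V^{(n)}_{\rm Surr.}(d-1)+V^{(n)}_{\rm Surr.}(d+1)\bigr)\ \ge\ V^{(n)}_{\rm Surr.}(d)
\]
directly, reducing it after some algebra to $(d-2)\,{\rm Cov}_q^{(n)} \ge (d-1)\,{\rm Cov}_q^{(n)}$, which holds precisely when ${\rm Cov}_q^{(n)}\le 0$. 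You instead extend to a continuous variable and compare one-sided derivatives, which collapses immediately to $-d\,{\rm Cov}_q^{(n)}\ge 0$. Your route is a little cleaner algebraically and makes the sharpness of the hypothesis transparent; the paper's route is fully discrete and self-contained, avoiding any appeal to a gluing lemma for continuous convex functions. Either way the content is the same.
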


\begin{proof}
As shown in \cref{thrm:tensor-srht-convexity},  $V^{(n)}_{\rm Surr.}(D) = \frac{1}{D} ( V_q^{(n)} - {\rm Cov}_q^{(n)} ) + {\rm Cov}_q^{(n)}$ is convex over $D \in \{1,\dots,d\}$.
Likewise, 
$V^{(n)}_{\rm Surr.}(D) = \frac{1}{D} (V_q^{(n)} + (d-1) {\rm Cov}_q^{(n)})$ is convex over $D \in  [d, \infty) \cap  \mathbb{N}$, since  $V_q^{(n)} + (d-1) {\rm Cov}_q^{(n)} \geq 0$ holds as we show in the proof of \cref{thrm:tensor-srht-convexity}.

Therefore, the proof completes by showing that the concatenated function $V^{(n)}_{\rm Surr.}(D)$ in \cref{eqn:var-convexified-2} is also convex over $D \in \{d-1, d, d+1\}$, i.e.,
\begin{equation} \label{eq:convexity-surrogate-proof-inequality}
\frac{1}{2} \left( V^{(n)}_{\rm Surr.}(d-1) + V^{(n)}_{\rm Surr.}(d+1)  \right) \geq V^{(n)}_{\rm Surr.}(d) .
\end{equation}

By using the definition in  \cref{eqn:var-convexified-2}, this inequality is equivalent to
\begin{align}
 &   \frac{1}{2} \left(\frac{1}{d-1} \left( V_{q}^{(n)} + (d-2) {\rm Cov}_q^{(n)} \right) + \frac{1}{d+1} \left( V_{q}^{(n)} + (d-1) {\rm Cov}_q^{(n)} \right)\right) \nonumber \\
&    \geq \frac{1}{d} \left( V_{q}^{(n)} + (d-1) {\rm Cov}_q^{(n)} \right). \label{eq:inequality-3898}
\end{align}
Note that we have $V_q^{(n)} + (d-1) {\rm Cov}_q^{(n)} \geq 0$ , as mentioned earlier. %
If  $V_q^{(n)} + (d-1) {\rm Cov}_q^{(n)} = 0$ holds, then we have  $V^{(n)}_{\rm Surr.}(D) = 0$ for $D \geq d$ by the definition in  \cref{eqn:var-convexified-2}, and thus \cref{eq:convexity-surrogate-proof-inequality} holds (which concludes the proofs).
Therefore,  we assume the inequality to be strict, i.e.,  $V_q^{(n)} + (d-1) {\rm Cov}_q^{(n)} > 0$.

Dividing the both sides of \cref{eq:inequality-3898} by $(V_q^{(n)} + (d-1) {\rm Cov}_q^{(n)})$, we  obtain
\begin{align*}
    \frac{1}{2} \left( \frac{1}{d-1} \frac{V_q^{(n)} + (d-2) {\rm Cov}_q^{(n)}}{V_q^{(n)} + (d-1) {\rm Cov}_q^{(n)}} + \frac{1}{d+1}\right) \geq \frac{1}{d},
\end{align*}
which after some rearrangement gives
\begin{align} \label{eq:proof-convexity-3917}
    \frac{V_q^{(n)} + (d-2) {\rm Cov}_q^{(n)}}{V_q^{(n)} + (d-1) {\rm Cov}_q^{(n)}} \geq 1 - \frac{2}{d^2 + d}.
\end{align}
This inequality holds  because we have $(d-2) {\rm Cov}_q^{(n)} \geq (d-1) {\rm Cov}_q^{(n)}$, which follows from our assumption ${\rm Cov}_q^{(n)} \leq 0$. 
Therefore  \cref{eq:inequality-3898} holds.

\end{proof}

\begin{remark} \label{remark:non-convexity-pos-covariance}
\cref{thrm:surrogate-convexity} shows the convexity of the surrogate function in  \cref{eqn:var-convexified-2}, assuming  ${\rm Cov}_q^{(n)} \leq 0$. 
If this condition does not hold, i.e., if  ${\rm Cov}_q^{(n)} > 0$,  then the surrogate function in \cref{eqn:var-convexified-2} may not be convex.  
To see this,  let $d=2, \mat{x} = (a, 0)^\top$ with $a > 0$, $\mat{y} = (0, b)^\top$ with $b > 0$,  and $n$ be even; then we have $V_q^{(n)} = {\rm Cov}_q^{(n)} = a^{2n} b^{2n} > 0$, and the inequality in \cref{eq:proof-convexity-3917} in the proof of \cref{thrm:surrogate-convexity} does not hold, which implies that the surrogate function in  \cref{eqn:var-convexified-2} is not convex.
\end{remark}

As mentioned in Section \ref{sec:structured-projections}, the variance of TensorSRHT in \cref{eqn:complex-tensor-srht-variance} becomes zero if $ n = 1$ and $D \in \{ kd \mid k \in \mathbb{N} \}$, i.e., $\mathbb{V}[\Phi_1(\mat{x})^{\top} \overline{\Phi_1(\mat{y})}] = 0$ holds.
Therefore, because the convex surrogate functions in \cref{eqn:var-convexified-2} and \cref{eqn:var-convexified-positive-cov} are equal to the variance of TensorSRHT in \cref{eqn:complex-tensor-srht-variance} for $D  \in \{ kd \mid k \in \mathbb{N} \}$, these surrogate functions also become zero for  $n=1$ and $D \in \{ kd \mid k \in \mathbb{N} \}$.
Thus, the Incremental Algorithm (\cref{alg:incremental-algorithm}), when used with the surrogate functions in  \cref{eqn:var-convexified-2} and \cref{eqn:var-convexified-positive-cov}, will not assign more than $D=d$ random features to the polynomial degree $n = 1$. 
Note that assigning $D=d$ random features is equivalent to appending the input vectors $\mat{x}$ and $\mat{y}$ to the approximate kernel \eqref{eqn:approximate-kernel}, which is called {\em H0/1 heuristic} in \citet{Kar2012}.
Therefore, the Incremental Algorithm with the surrogate functions in  \cref{eqn:var-convexified-2} and \cref{eqn:var-convexified-positive-cov} automatically achieve the H0/1 heuristic.

Finally, we describe briefly how to use the convex surrogate functions in  \cref{eqn:var-convexified-2} and \cref{eqn:var-convexified-positive-cov}  in the Incremental Algorithm in \cref{alg:incremental-algorithm}. 
To this end, we rewrite \cref{eq:simple-objective-individual-term} using the surrogate functions as follows:
(Here, we make the dependence of $ V_q^{(n)}$ and ${\rm Cov}_q^{(n)}$ on the input vectors $\mat{x}, \mat{y} \in \mathbb{R}^d$ explicit and write them as  $ V_q^{(n)} (\mat{x}, \mat{y})$ and ${\rm Cov}_q^{(n)} (\mat{x}, \mat{y})$, respectively.)  
\begin{align*}
    {\rm \cref{eq:simple-objective-individual-term}}
    = \left\{
    \begin{array}{ll}
        \frac{a_n^2}{D_n} \left( \sum_{i \neq j} V_q^{(n)}(\mat{x}_i, \mat{x}_j) + (d-1) \sum_{i \neq j} {\rm Cov}_q^{(n)}(\mat{x}_i, \mat{x}_j) \right) \\
        \quad\quad \mbox{if } \sum_{i \neq j} {\rm Cov}_q^{(n)}(\mat{x}_i, \mat{x}_j) > 0 \text{ or } D_n > d, \\
        \frac{a_n^2}{D_n} \left( \sum_{i \neq j} V_q^{(n)}(\mat{x}_i, \mat{x}_j) - \sum_{i \neq j} {\rm Cov}_q^{(n)}(\mat{x}_i, \mat{x}_j) \right) + a_n^2 \sum_{i \neq j} {\rm Cov}_q^{(n)}(\mat{x}_i, \mat{x}_j) \\
        \quad\quad \mbox{otherwise.}
    \end{array} 
    \right.
\end{align*}
After precomputing the constants $\sum_{i \neq j} V_q^{(n)}(\mat{x}_i, \mat{x}_j)$ and $\sum_{i \neq j} {\rm Cov}_q^{(n)}(\mat{x}_i, \mat{x}_j)$ for each $n \in \{1,\dots,p\}$, which can be done in $\mathcal{O}(m^2)$ time, one can directly use the above modification of \cref{eq:simple-objective-individual-term} in the objective function in \cref{eqn:optimization-objective}. 
In this way, we adapt the objective function in \eqref{eqn:optimization-objective} to be convex, so that the Incremental Algorithm in \cref{alg:incremental-algorithm} is directly applicable.

\section{Gaussian Processes with Complex Random Features}
\label{sec:appendix-gp-identities}

We describe here how to use complex random features in Gaussian process (GP) regression and classification.
Since real random features are special cases of complex random features, all derivations for the complex case also hold for the real case as well.

For GP classification, we employ the framework of \citet{Milios2018}, which formulates GP classification using GP regression and provides a solution in closed form. 
Therefore, closed form solutions are available for both GP regression and classification, and this enables us to compare different random feature approximations directly.\footnote{If we use a formulation of GP classification that requires an optimization procedure, comparisons of random feature approximations become more involved, as we need to perform convergence verification for the optimization procedure.}

\paragraph{Notation and definitions.}
For a matrix $\mat{A} \in \mathbb{C}^{n \times m}$ with $n, m \in \mathbb{N}$, denote by $\mat{A}^H := \overline{\mat{A}}^\top \in \mathbb{C}^{m \times n}$ be its conjugate transpose.
Note that if $\mat{A} \in \mathbb{R}^{n \times m}$, then $\mat{A}^H = \mat{A}^\top \in \mathbb{R}^{m \times n}$. 
For $n \in \mathbb{N}$, $\mat{I}_n \in \mathbb{R}^{n \times n}$ be the identity matrix.
 
For $\mat{\mu} \in \mathbb{C}^n$ and  positive semi-definite\footnote{A Hermitian matrix $\mat{\Sigma} \in \mathbb{C}^{n \times n}$ is called positive semi-definite, if for all $\mat{v} \in \mathbb{C}^n$, we have $\mat{v}^H \mat{\Sigma} \mat{v} \geq 0$.} $\mat{\Sigma} \in \mathbb{C}^{n \times n}$ with $n \in \mathbb{N}$, we denote by  $\mathcal{CN}(\mat{\mu}, \mat{\Sigma})$  the $n$-dimensional {\em proper} complex Gaussian distribution with mean vector $\mat{\mu}$ and covaraince matrix $\mat{\Sigma}$, whose density function is given by \citep[e.g.,][Theorem 1]{neeser1993proper}
 $$
\mathcal{CN}(\mat{v}; \mat{\mu}, \mat{\Sigma}) := \frac{1}{\pi^n \sqrt{ | \mat{\Sigma} | }} \exp\left( -   (\mat{v}- \mat{\mu})^H \mat{\Sigma}^{-1} (\mat{v}- \mat{\mu}) \right), \quad \mat{v} \in \mathbb{C}^n,
 $$
 where $| \mat{\Sigma} |$ is the determinant of $\mat{\Sigma}$.
If a random vector $\mat{f} \in \mathbb{C}^n$ follows $\mathcal{CN}(\mat{v}; \mat{\mu}, \mat{\Sigma})$, we have $\mathbb{E}[\mat{f}] = \mat{\mu}$, $\mathbb{E}[ (\mat{f} - \mat{\mu}) (\mat{f} - \mat{\mu})^H  ] = \mat{\Sigma}$, and $\mathbb{E}[ (\mat{f} - \mat{\mu}) (\mat{f} - \mat{\mu})^\top  ] = 0$,  where the last property is the definition of $\mat{f}$ being a proper complex random variable \citep[Definition 1]{neeser1993proper}.

\subsection{Complex GP Regression}

We first describe the approach of {\em complex GP regression} \citep{Boloix-Tortosa2015}, a Bayesian nonparametric approach to complex-valued regression.

Suppose that there are training data $(\mat{x}_i, y_i)_{i=1}^N \subset \mathbb{R}^d \times \mathbb{C}$ for a complex-valued regression problem with $N \in \mathbb{N}$, and let 
$\mat{X} := (\mat{x}_1, \dots, \mat{x}_N)^{\top} \in \mathbb{R}^{N \times d}$ and $\mat{y} := ( y_1, \dots, y_N )^\top \in \mathbb{C}^N$.
We assume the following model for the training data:
\begin{equation} \label{eq:complex-regres-model}
    y_i = f(x_i) + \varepsilon_i, \quad (i=1,\dots,N),
\end{equation}
where $f: \mathbb{R}^d \to \mathbb{C}$ is an unknown complex-valued function, and $\varepsilon_i \sim \mathcal{CN}(0, \sigma_i^2)$ is an independent  complex Gaussian noise with variance $\sigma_i^2 > 0$.
Let $\mat{\sigma}^2 := (\sigma_1^2, \dots, \sigma_N^2)^{\top} \in \mathbb{R}^N$.

The task of complex-valued function is to estimate the unknown complex-valued function $f$ in \cref{eq:complex-regres-model} from the training data $(\mat{x}_i, y_i)_{i=1}^N \subset \mathbb{R}^d \times \mathbb{C}$.  
In complex GP regression, one defines a {\em complex GP prior distribution} for the unknown function $f$, and derives a {\em complex GP posterior distribution} of $f$, given the data $(\mat{x}_i, y_i)_{i=1}^N \subset \mathbb{R}^d \times \mathbb{C}$ and the likelihood function given by  \cref{eq:complex-regres-model}. 
For the prior, we focus on a {\em proper} complex GP \citep[Section II-C]{Boloix-Tortosa2015}, which we describe below.

 \paragraph{Proper complex Gaussian processes.}
A complex-valued function $k: \mathbb{R}^d \times \mathbb{R}^d \to \mathbb{C}$ is called {\em positive definite kernel}, if 1) $k(\mat{x}, \mat{x}') = \overline{ k(\mat{x}', \mat{x})  }$ for all $\mat{x}, \mat{x}' \in \mathbb{R}^d$; and ii)  for all $n \in \mathbb{N}$ and all $\mat{x}_1, \dots, \mat{x}_n \in \mathbb{R}^d$, the matrix $\mat{K} \in \mathbb{C}^{n \times n}$ with $\mat{K}_{i,j} = k(x_i, x_j)$ satisfies $\mat{v}^H \mat{K} \mat{v} \geq 0$.

Let $f : \mathbb{R}^d \to \mathbb{C}$ be a zero-mean complex-valued stochastic process, and $k: \mathbb{R}^d \times \mathbb{R}^d \to \mathbb{C}$ be a positive definite kernel.
We call $f$ a (zero-mean) {\em proper complex GP} with covariance kernel $k$, if for all $n \in \mathbb{N}$ and all $\mat{x}_1, \dots, \mat{x}_n \in \mathbb{R}^d$, the random vector $\mat{f} := (f(\mat{x}_1), \dots, f(\mat{x}_n))^\top \in \mathbb{C}^n$  follows the proper complex Gaussian distribution $\mathcal{CN}(\mat{0}, \mat{K})$ with covariance matrix $\mat{K} \in \mathbb{C}^{n \times n}$ with $\mat{K}_{i, j} = k(\mat{x}_i, \mat{x}_j)$.
If $f$ is a zero-mean proper complex GP with covariance kernel $k$, we write $f \sim \mathcal{CGP}(0,k)$.

We now describe the approach of complex GP regression. 
For the unknown  $f$ in \cref{eq:complex-regres-model}, we define a proper complex GP prior with kernel $k$, assuming that 
\begin{align}
    \label{eqn:gp-prior}
    f \sim \mathcal{CGP}(0,k)
\end{align}
Then  the observation model \eqref{eq:complex-regres-model} and the prior \eqref{eqn:gp-prior}  induce a joint distribution of the unknown function $f$ and the training observations $\mat{y} = (y_1, \dots, y_N)^\top$. 
Conditioned on $\mat{y}$, we obtain the {\em posterior distribution} of $f$, which is also a proper complex GP
\citep[Section II-C]{Boloix-Tortosa2015}:
\begin{align}
    \label{eqn:complex-gp-posterior}
   f  \mid \mat{y} & \sim \mathcal{CGP} (\mu_N, k_N ),
\end{align}
where $\mu_N : \mathbb{R}^d \to \mathbb{C}$ is the {\em posterior mean function} and $k_N : \mathbb{R}^d \times \mathbb{R}^d \to \mathbb{C}$  is the {\em posterior covariance function} given by
\begin{align}   \label{eqn:complex-gp-post-mean}
    \mu_N( \mat{x} ) &:=  \mat{k}(\mat{x})^H ( \mat{K} + \diag{\mat{\sigma}^2})^{-1} \mat{y}, \quad \mat{x} \in \mathbb{R}^d \\
    \label{eqn:complex-gp-post-cov}
   k_N(\mat{x}, \mat{x}') &:= k(\mat{x}, \mat{x}')  - \mat{k}(\mat{x})^H ( \mat{K} + \diag{\mat{\sigma}^2})^{-1} \mat{k}(\mat{x}),  \quad \mat{x}, \mat{x}'  \in \mathbb{R}^d, 
\end{align}
where $\mat{k}(\mat{x}) := (k(\mat{x}, \mat{x}_1), \dots, k(\mat{x}, \mat{x}_N))^\top \in \mathbb{C}^N$, $\mat{K} \in \mathbb{C}^{N \times N}$ with $\mat{K}_{i,j} = k(\mat{x}_i, \mat{x}_j)$, and   $\diag{\mat{\sigma}^2} \in \mathbb{R}^{d \times d}$ is the diagonal matrix with diagonal elements $\mat{\sigma}^2 = (\sigma_1^2, \dots, \sigma_N^2)^\top$.

Notice that, if the kernel $k$ is real-valued and so are the observations $\mat{y}$,   \cref{eqn:complex-gp-post-mean} and \cref{eqn:complex-gp-post-cov} reduce to the posterior mean and covariance functions of standard real-valued GP regression \citep[e.g.,][Chapter 2]{Rasmussen2006}.  
In this sense, complex GP regression with a proper GP prior is a natural complex extension of standard GP regression.

\subsection{GP Regression with Complex Features} 

\label{sec:GP-regress-comp-features}
We next describe how to use complex features in GP regression. 
Let $\Phi: \mathbb{R}^d \to \mathbb{C}^D$ be a complex-valued (random) feature map,\footnote{Again, this subsumes the case of real-valued feature maps.}  and let $\hat{k}(\mat{x}, \mat{x}') := \Phi(\mat{x})^\top \overline{\Phi(\mat{x}')}$ be the approximate kernel. Define
\begin{equation} \label{eq:notation-feat-map-kernel}
\Phi(\mat{X}) := (\Phi(\mat{x}_1), \dots, \Phi(\mat{x}_N))^{\top} \in \mathbb{C}^{N \times D},\quad \hat{\mat{K}} := \Phi(\mat{X}) \Phi(\mat{X})^H \in \mathbb{C}^{N \times N},
\end{equation}
where $\mat{x}_1, \dots, \mat{x}_N \in \mathbb{R}^D$ are training inputs.
Note that $\hat{\mat{K}}_{i,j}= \Phi(\mat{x}_i)^\top \overline{\Phi(\mat{x}_j)} = \hat{k}(\mat{x}_i, \mat{x}_j)$, i.e., $\hat{\mat{K}}$ is the kernel matrix with kernel $\hat{k}$. 

The approximate kernel $\hat{k}: \mathbb{R}^d \times \mathbb{R}^d \to \mathbb{C}$ is complex-valued, and thus induces a proper complex GP, $f \sim \mathcal{CGP}(0, \hat{k})$.
Using this GP as a prior for the unknown function $f$ in the observation model  \eqref{eq:complex-regres-model}, and conditioning on the observations $\mat{y} = (y_1, \dots, y_N)^\top$, we obtain the following approximate complex GP posterior:
\begin{align}
    \label{eqn:complex-gp-posterior-approx}
   f  \mid \mat{y} & \sim \mathcal{CGP} (\hat{\mu}_N, \hat{k}_N ),
   \end{align}
where $\hat{\mu}_N : \mathbb{R}^d \to \mathbb{C}$ is an approximate posterior mean function and $\hat{k}_N : \mathbb{R}^d \times \mathbb{R}^d \to \mathbb{C}$  is an approximate  posterior covariance function, defined as 
    \begin{align}   \label{eqn:complex-gp-post-mean-approx}
    \hat{\mu}_N( \mat{x} ) &:=  \hat{\mat{k}}(\mat{x})^H ( \hat{\mat{K}} + \diag{\mat{\sigma}^2})^{-1} \mat{y}, \quad \mat{x} \in \mathbb{R}^d \\
    \label{eqn:complex-gp-post-cov-approx}
   \hat{k}_N(\mat{x}, \mat{x}') &:= \hat{k}(\mat{x}, \mat{x}')  - \hat{\mat{k}}(\mat{x})^H ( \hat{\mat{K}} + \diag{\mat{\sigma}^2})^{-1} \hat{\mat{k}}(\mat{x}),  \quad \mat{x}, \mat{x}'  \in \mathbb{R}^d, 
\end{align}
where $\hat{\mat{k}}(\mat{x}) := (\hat{k}(\mat{x}, \mat{x}_1), \dots, \hat{k}(\mat{x}, \mat{x}_N))^\top \in \mathbb{C}^N$, and $\hat{\mat{K}} \in \mathbb{C}^{N \times N}$ with $\hat{\mat{K}}_{i,j} = \hat{k}(\mat{x}_i, \mat{x}_j)$.

Finally, we define a real-valued approximate GP posterior using the real parts of  \cref{eqn:complex-gp-post-mean-approx} and
\cref{eqn:complex-gp-post-cov-approx}.  
That is, define $\hat{\mu}_{N, \mathbb{R}}: \mathbb{R}^d \to \mathbb{R}$ as the real part of the approximate posterior mean function in \cref{eqn:complex-gp-post-mean-approx} , and $\hat{k}_{N, \mathbb{R}}$ as the real part of the approximate covariance function in \cref{eqn:complex-gp-post-cov-approx}: 
\begin{align}
\hat{\mu}_{N, \mathbb{R}}(\mat{x}) &:= \mathcal{R} \left\{ \hat{\mu}_N(\mat{x}) \right\}, \quad \mat{x} \in \mathbb{R}^d, \label{eqn:real-gp-post-mean-approx} \\
\hat{k}_{N, \mathbb{R}}  (\mat{x}, \mat{x}') &:= \mathcal{R} \left\{ \hat{k}_N (\mat{x}, \mat{x}')  \right\}, \quad \mat{x}, \mat{x}' \in \mathbb{R}^d. \label{eqn:real-gp-post-cov-approx} 
\end{align}
Then, we define a real-valued GP with mean function $\hat{\mu}_{N, \mathbb{R}}$ and covariance function  $\hat{k}_{N, \mathbb{R}}$:
$$
f | \mat{y} \sim \mathcal{GP}( \hat{\mu}_{N, \mathbb{R}}, \hat{k}_{N, \mathbb{R}} ).
$$
We use this approximate GP for prediction tasks in our experiments. 

Note that naive computations of \cref{eqn:complex-gp-post-mean-approx} and
\cref{eqn:complex-gp-post-cov-approx} require $\mathcal{O}(N^3 + N^2 D)$ complexity, and thus do not leverage the computational advantage of random features. 
We will show next how to reformulate \cref{eqn:complex-gp-post-mean-approx} and
\cref{eqn:complex-gp-post-cov-approx} to compute them in $\mathcal{O}(D^3 + N D^2)$, which is  linear in the number of training data points $N$.

\subsection{Computationally Efficient Implementation}

\label{sec:GP-efficient-implement}

We describe how to efficiently compute the approximate posterior mean and covariance functions in 
\cref{eqn:complex-gp-post-mean-approx} and
\cref{eqn:complex-gp-post-cov-approx}, respectively. 
To this end, recall the notation in \cref{eq:notation-feat-map-kernel}.
 Let $\sigma^{-1} := (\sigma_1^{-1}, \dots, \sigma_N^{-1})^\top \in \mathbb{R}^N$ and  $\sigma^{-2} := (\sigma_1^{-2}, \dots, \sigma_N^{-2})^\top \in \mathbb{R}^N$.
 
First we deal with \cref{eqn:complex-gp-post-mean-approx}.
For a matrix  $A \in \mathbb{C}^{N \times D}$,  we have $ (A^H A + \mat{I}_N) A^H = A^H (A A^H + \mat{I}_D)$, and thus $A^H (A A^H + \mat{I}_N)^{-1} = (A^H A + \mat{I}_D)^{-1} A^H$. 
By using this last identity with $A = {\rm diag}(\mat{\sigma}^{-1}) \Phi(X) \in \mathbb{C}^{N \times D}$,   we can rewrite \cref{eqn:complex-gp-post-mean-approx}  as
\begin{align}
    \hat{\mu}_N( \mat{x} ) &=  \hat{\mat{k}}(\mat{x})^H ( \hat{\mat{K}} + \diag{\mat{\sigma}^2})^{-1} \mat{y}, \nonumber  \\
   &= \Phi(\mat{x})^\top \Phi(\mat{X})^H \left(\Phi(\mat{X}) \Phi(\mat{X})^H + \diag{\mat{\sigma}^2} \right)^{-1} \mat{y} \nonumber \\
    &= \Phi(\mat{x})^\top \Phi(\mat{X})^H \diag{\mat{\sigma}^{-1}} \left(\diag{\mat{\sigma}^{-1}} \Phi(\mat{X}) \Phi(\mat{X})^H \diag{\mat{\sigma}^{-1}} + \mat{I}_N \right)^{-1} \diag{\mat{\sigma}^{-1}} \mat{y} \nonumber \\
    &= \Phi(\mat{x})^\top \left (\Phi(\mat{X})^H \diag{\mat{\sigma}^{-2}} \Phi(\mat{X}) + \mat{I}_D \right)^{-1} \Phi(\mat{X})^H \diag{\mat{\sigma}^{-2}} \mat{y} . \label{eq:approx-post-mean-reform}
\end{align}

Next we deal with \cref{eqn:complex-gp-post-cov-approx}. 
For matrices $A, C, U, V$ of appropriate sizes with $A$ invertible, the Woodbury matrix identity states that $A^{-1} - A^{-1} U (C^{-1} + V A^{-1} U)^{-1} V A^{-1} = (A + UCV)^{-1}$.
By using the Woodbury identity with $A = \mat{I}_D$, $C = {\rm diag}(\mat{\sigma}^{-2})$, $U = \Phi(X)^H$ and $V = \Phi(X)$,  we can rewrite \cref{eqn:complex-gp-post-cov-approx} as
\begin{align}
\hat{k}_N(\mat{x}, \mat{x}') &= \hat{k}(\mat{x}, \mat{x}')  - \hat{\mat{k}}(\mat{x})^H ( \hat{\mat{K}} + \diag{\mat{\sigma}^2})^{-1} \hat{\mat{k}}(\mat{x}) \nonumber \\
    &= \Phi(\mat{x})^\top \overline{\Phi(\mat{x})} - \Phi(\mat{x})^\top \Phi(\mat{X})^H \left( \Phi(\mat{X}) \Phi(\mat{X})^H + \diag{\mat{\sigma}^2} \right)^{-1} \Phi(\mat{X}) \overline{\Phi(\mat{x})} \nonumber \\
    &= \Phi(\mat{x})^\top \left(\mat{I}_D - \Phi(\mat{X})^H \left(\diag{\mat{\sigma}^2} + \Phi(\mat{X}) \Phi(\mat{X})^H \right)^{-1} \Phi(\mat{X}) \right) \overline{\Phi(\mat{x})} \nonumber \\
    &= \Phi(\mat{x})^\top \left( \mat{I}_D + \Phi(\mat{X})^H \diag{\mat{\sigma}^{-2}} \Phi(\mat{X}) \right)^{-1} \overline{\Phi(\mat{x})}. \label{eq:approx-post-cov-reform}
\end{align}

We now study the costs of computing \cref{eq:approx-post-mean-reform} and  \cref{eq:approx-post-cov-reform}.
For both \cref{eq:approx-post-mean-reform} and  \cref{eq:approx-post-cov-reform}, the bottleneck is the computation of the inverse of the following matrix.
\begin{align}
    \label{eqn:gp-bottleneck}
    \mat{B} := \Phi(\mat{X})^H \diag{\mat{\sigma}^{-2}} \Phi(\mat{X}) + \mat{I}_D \in \mathbb{C}^{D \times D}.
\end{align}
The time complexity of computing $\mat{B}$  is $\bigO(N D^2)$, and that of the inverse $\mat{B}^{-1}$  is $\bigO(D^3)$, the latter being the complexity of computing  the Cholesky decomposition $\mat{B} = \mat{L} \mat{L}^H$ with $\mat{L} \in \mathbb{C}^{D \times D}$ being a lower triangular matrix. 
Thus, the overall cost of computing $\mat{B}^{-1}$ is $\bigO(N D^2 + D^3)$.

We next conduct a more detailed analysis of  the costs of $\mat{B}$ and its Cholesky decomposition, and compare them with the computational costs for the corresponding matrix inversion using real-valued features (i.e., when $\Phi(X) \in \mathbb{R}^{N \times D}$).
Below we use the shorthand $\tilde{\Phi}(\mat{X}) := \diag{\mat{\sigma}^{-1}} \Phi(\mat{X})$ so that $\mat{B} = \tilde{\Phi}(\mat{X})^H \tilde{\Phi}(\mat{X})  + \mat{I}_D$.
Then the real and imaginary parts of $\mat{B}$ can be written as
\begin{align*}
    \mathcal{R} \{ \mat{B} \}
    &= \mathcal{R} \{ \tilde{\Phi}(\mat{X})^H \tilde{\Phi}(\mat{X}) \} + \mat{I}_D
    = \mathcal{R}\{ \tilde{\Phi}(\mat{X}) \}^{\top} \mathcal{R}\{ \tilde{\Phi}(\mat{X}) \}
    + \mathcal{I}\{ \tilde{\Phi}(\mat{X}) \}^{\top} \mathcal{I}\{ \tilde{\Phi}(\mat{X}) \} + \mat{I}_D \\
    \mathcal{I} \{ \mat{B} \}
    &= \mathcal{I} \{ \tilde{\Phi}(\mat{X})^H \tilde{\Phi}(\mat{X}) \}
    = \mathcal{R}\{ \tilde{\Phi}(\mat{X}) \}^{\top} \mathcal{I}\{ \tilde{\Phi}(\mat{X}) \}
    - \mathcal{I}\{ \tilde{\Phi}(\mat{X}) \}^{\top} \mathcal{R}\{ \tilde{\Phi}(\mat{X}) \}.
\end{align*}
 Since $(\mathcal{R}\{ \tilde{\Phi}(\mat{X}) \}^{\top} \mathcal{I}\{ \tilde{\Phi}(\mat{X}) \})^{\top} = \mathcal{I}\{ \tilde{\Phi}(\mat{X}) \}^{\top} \mathcal{R}\{ \tilde{\Phi}(\mat{X}) \}$,  one can compute $ \mathcal{I} \{ \mat{B} \}$ by only computing $\mathcal{R}\{ \tilde{\Phi}(\mat{X}) \}^{\top} \mathcal{I}\{ \tilde{\Phi}(\mat{X})\} $.
Therefore, the computation of $\mat{B}$ requires the computations of the three real  $D$-by-$D$ matrices (i.e.,  $\mathcal{R}\{ \tilde{\Phi}(\mat{X}) \}^{\top} \mathcal{R}\{ \tilde{\Phi}(\mat{X}) \}$,
    $\mathcal{I}\{ \tilde{\Phi}(\mat{X}) \}^{\top} \mathcal{I}\{ \tilde{\Phi}(\mat{X}) \}$, and   $\mathcal{R}\{ \tilde{\Phi}(\mat{X}) \}^{\top} \mathcal{I}\{ \tilde{\Phi}(\mat{X})\}$).
Thus, the total number of operations for computing $\mat{B}$ is  $3 \cdot (ND^2) + 2 \cdot D^2$, where $3 \cdot (ND^2)$ is operations for the matrix products  and  $2 \cdot D^2$ for the addition and subtraction inside  $\mathcal{R}\{\mat{B}\}$ and $\mathcal{I}\{\mat{B}\}$, respectively.
Hence, assuming $N \gg D$,  the computational cost for $\mat{B}$ is roughly $3$ times more expensive than the corresponding cost when $\Phi$ is real-valued.

Computing the Cholesky decomposition of a $D$ by $D$ matrix requires roughly $\frac{1}{6} D^3$ subtractions and $\frac{1}{6} D^3$ multiplications \citep[e.g.,][p. 175]{trefethen97}. 
Therefore, when $\Phi$ is real-valued (and thus $\mat{B}$ is real-valued), the Cholesky decomposition of $\mat{B}$ requires $\frac{1}{6} D^3 + \frac{1}{6} D^3 = \frac{1}{3} D^3$ FLOPS. 
On the other hand, when $\Phi$ is complex-valued, the Cholesky decomposition of $\mat{B}$ requires $\frac{4}{3} D^3$ FLOPS:  one complex subtraction requires 2 real subtractions, and thus subtractions in total require $ \frac{1}{6} D^3 \times 2  = \frac{1}{3} D^3$ FLOPS; one complex multiplication requires 4 real multiplications and 2 real subtractions, and thus multiplications in total require $\frac{1}{6} D^3 \times 6 = D^3$ FLOPS; thus  $\frac{1}{3}D^3 + D^3 = \frac{4}{3} D^3$ FLOPS in total.
Thus, the cost for computing the Cholesky decomposition of $\mat{B}$ when $\Phi$ is complex-valued is  $4$ times more expensive than the real-valued case.
 
The memory requirement for the complex case is 2 times as large as the real case, since the complex case requires storing both real and imaginary parts.

Note that, if one uses a $2D$-dimensional {\em real} feature map (i.e., $\Phi(\mat{X}) \in \mathbb{R}^{N \times 2D}$), then this requires 4 times as much memory, 4 times as many operations to compute the matrix $\mat{B}$, and $8$ times as many operations for the Cholesky decomposition of $\mat{B}$ as those required for a $D$-dimensional real feature map.
Therefore, using a $2D$-dimensional real feature map is computationally more expensive than using a $D$-dimensional complex feature map, since the latter only requires $2$ times as much memory, $3$ times as many operations for computing $\mat{B}$, and $4$ times as many operations for computing the Cholesky decomposition of $\mat{B}$ as those required for a $D$-dimensional real feature map, as shown above.
Note also that the performance improvement from using a $D$-dimensional complex feature map is typically larger than using a $2D$-dimensional real feature map; see the experiments in Section \ref{sec:experiments}.

\subsection{GP Classification as Closed-form Multi-output Regression}

We now describe the GP classification approach of \citet{Milios2018}, and how to use approximate posteriors for GP regegression in this approach.

We assume that there are $C \in \mathbb{N}$ classes and that output labels are expressed by one-hot encoding.
Thus, for each class $c \in \{1,\dots,C \}$ and each training input $\mat{x}_i \in \mathbb{R}^d$ with $i=1,\dots,N$, there exist an output  $y_{c,i} \in \{0, 1\}$ such that $y_{c,i} = 1$ if $\mat{x}_i$ belongs to class $c$ and $y_{c,i} = 0$ otherwise.

\paragraph{The approach of \citet{Milios2018}.}
Let $\alpha > 0$ be a constant. 
For each class $c \in \{1,\dots,C\}$, \citet{Milios2018}  define transformed versions $\tilde{y}_{c,1}, \dots, \tilde{y}_{c,N} \in \mathbb{R}$ of the training outputs $ y_{c,1}, \dots, y_{c,N}$ as 
$$
    \tilde{y}_{c,i} := \log ( y_{c,i} + \alpha ) - \sigma_{c,i}^2 / 2,   \quad \text{where} \quad \sigma_{c,i}^2 := \log ( (y_{c,i} + \alpha)^{-1} + 1 ), \quad i=1,\dots,N.
$$
\citet{Milios2018} then define an observation model of $\tilde{y}_{c, 1}, \dots, \tilde{y}_{c, N}$ as  
\begin{equation} \label{eq:GP-milos-obs-model}
\tilde{y}_{c,i} = f_c (\mat{x}_i) + \varepsilon_{c, i},  \quad i=1,\dots,N,
\end{equation}
where  $f_c : \mathbb{R}^d \to \mathbb{R}$ is a latent function and $\varepsilon_{c,i} \sim  \mathcal{N}(0, \sigma_{c,i}^2)$ is an independent Gaussian noise with variance $\sigma^2_{c,i}$.
\citet{Milios2018} propose to model $f_c$ for each $c \in \{1,\dots,C\}$ independently as a GP: 
\begin{equation} \label{eq:GP-milos-prior}
f_c \sim \mathcal{GP}(0, k),
\end{equation}
where $k:\mathbb{R}^d \times \mathbb{R}^d \to \mathbb{R}$ is a kernel. 
  \cref{eq:GP-milos-obs-model} and \cref{eq:GP-milos-prior}  define the joint distribution of the latent function $f_c$ and the transformed labels $\tilde{y}_{c, 1}, \dots, \tilde{y}_{c, N}$.
Thus, conditioning on $\tilde{y}_{c, 1}, \dots, \tilde{y}_{c, N}$, one obtains a GP posterior of $f_c$.  
In other words, one can obtain a GP posterior of $f_c$  by performing GP regression for each class $c \in \{1,\dots,C\}$ using $(\mat{x}_i, \tilde{y}_{c, i})_{i=1}^N$ as training data.

 The constant $\alpha$ is a hyperparameter, which  \citet{Milios2018} propose to choose  by cross validation, using the Mean Negative Log Likelihood (MNLL) \citep[e.g.,][p. 23]{Rasmussen2006} as an evaluation criterion.

\paragraph{Using approximate GP posteriors.}
We now explain how to use approximate posteriors for GP regression in the above approach: For each class $c \in \{1,\dots,C\}$, we perform approximate GP regression using $(\mat{x}_i, \tilde{y}_{c, i})_{i=1}^N$ as training data, to obtain an approximate GP posterior for the latent function $f_c$ in \cref{eq:GP-milos-obs-model}.  
For instance, with our approach on approximate GP regression using complex random features in \cref{sec:GP-regress-comp-features}, we obtain a GP posterior $f_c \sim  \mathcal{GP}(\hat{\mu}_{N, \mathbb{R}, c}, \hat{k}_{N, \mathbb{R}, c})$ for each class $c \in \{1,\dots, C\}$, where $\hat{\mu}_{N, \mathbb{R}, c}: \mathbb{R}^d \to \mathbb{R}$ and $\hat{k}_{N, \mathbb{R}, c}: \mathbb{R}^d \times \mathbb{R}^d \to \mathbb{R}$ are the approximate GP posterior mean and covariance functions in \cref{eqn:real-gp-post-mean-approx} and 
\cref{eqn:real-gp-post-cov-approx}, respectively, with $\mat{y} := (\tilde{y}_{c,1}, \dots, \tilde{y}_{c,N})^\top$ and $\mat{\sigma}^2 := ( \sigma_{c,1}^2, \dots,  \sigma_{c,N}^2)^\top$.

For a given test input $\mat{x} \in \mathbb{R}^d$, one can obtain its posterior predictive probabilities over the $C$ classes in the following way.
For each class $c \in \{1, \dots, C\}$, we first generate a sample $z_c \in \mathbb{R}$  from the posterior distribution of the latent function value $f_c(\mat{x})$.
We then apply the softmax transformation to $z_1, \dots, z_C$ to obtain  probabilities $p_1, \dots, p_C \geq 0$ over the $C$ class labels:  $p_c  := \exp(z_{c}) / \sum_{j=1}^C \exp(z_j) $. 
\citet{Milios2018} show that these probabilities $p_1, \dots, p_C$ are approximately a sample from a Dirichlet distribution, yielding well-calibrated predictions.

\subsection{Kullback-Leibler (KL) Divergence}
In the experiments in Section \ref{sec:experiments}, we use the Kullback-Leibler (KL) divergence between the exact and approximate GP posteriors, to evaluate the quality of each approximation approach. 
Let $\mu_{\rm exact}( \mat{x} )$ and $\sigma_{\rm exact}^2(\mat{x} )$  be the posterior mean and variance at $\mat{x} \in \mathbb{R}^d$ from the exact GP posterior, and  let  $\mu_{\rm appr}( \mat{x} )$ and $\sigma_{\rm appr}^2(\mat{x} )$ be those from an approximate GP posterior. 
Let  $\mat{x}_{*,1}, \dots, \mat{x}_{*,m_*} \in \mathbb{R}^d$ be test input points. 
Define $\mat{\mu}_{\rm exact} := ( \mu_{\rm exact}( \mat{x}_{*, 1} ), \dots, \mu_{\rm exact}( \mat{x}_{*, m_*} )  )^\top$, $\mat{\sigma}_{\rm exact}^2 := ( \sigma_{\rm exact}^2 ( \mat{x}_{*, 1} ), \dots, \sigma_{\rm exact}^2 ( \mat{x}_{*, m_*} )  )^\top$, $\mat{\mu}_{\rm appr} := ( \mu_{\rm appr}( \mat{x}_{*, 1} ), \dots, \mu_{\rm appr}( \mat{x}_{*, m_*} )  )^\top$, and $\mat{\sigma}_{\rm appr}^2 := ( \sigma_{\rm appr}^2 ( \mat{x}_{*, 1} ), \dots, \sigma_{\rm appr}^2 ( \mat{x}_{*, m_*} )  )^\top$. 

We then measure the KL divergence between two diagonal Gaussian distributions, $\mathcal{N}(\mat{\mu}_{\rm appr}, {\rm diag}( \mat{\sigma}_{\rm appr}^2) )$ and $\mathcal{N}(\mat{\mu}_{\rm exact}, {\rm diag}( \mat{\sigma}_{\rm exact}^2) )$:  
\begin{align}
&    KL \left[ \mathcal{N}(\mat{\mu}_{\rm appr}, {\rm diag}( \mat{\sigma}_{\rm appr}^2) ) \ || \ \mathcal{N}(\mat{\mu}_{\rm exact}, {\rm diag}( \mat{\sigma}_{\rm exact}^2) ) \right]  \nonumber \\ 
& = \frac{1}{2} 
    \sum_{i=1}^{m_*} \left( \frac{ \sigma_{\rm exact}^2(\mat{x}_{*,i }) }{ \sigma_{\rm appr}^2(\mat{x}_{*,i }) }
    + \log \frac{ \sigma_{\rm exact}^2(\mat{x}_{*,i }) }{ \sigma_{\rm appr}^2(\mat{x}_{*,i }) }
    - 1
    + \frac{(\mu_{\rm exact}(\mat{x}_{*,i} ) - \mu_{\rm appr}(\mat{x}_{*,i} )  )^2}{ \sigma_{\rm appr}^2(\mat{x}_{*,i }) }
    \right), \label{eqn:kl-div}
\end{align}
We consider these diagonal Gaussian distributions, since the focus of our experiments in Section \ref{sec:experiments} is the prediction performance at test input points $\mat{x}_{*,1}, \dots, \mat{x}_{*,m_*} \in \mathbb{R}^d$.

\section{Additional Experiments}
\label{sec:additional-experiments}

We present here additional experimental results, supplementing those in Section \ref{sec:experiments}. 
\cref{tbl:centered-vs-non-centered} shows the effects of applying zero-centering to input vectors in the polynomial kernel approximation experiments. 
\cref{fig:poly-regression-boston-kin8nm}, \cref{fig:poly-regression-naval-yacht} and \cref{fig:rbf-regression} show the results of additional experiments on GP regression. \cref{fig:poly-classification-eeg-mnist} and \cref{fig:rbf-class2} show the results of additional experiments on GP classification.

\begin{figure}[ht]
    \centering
    \includegraphics[width=1\textwidth]{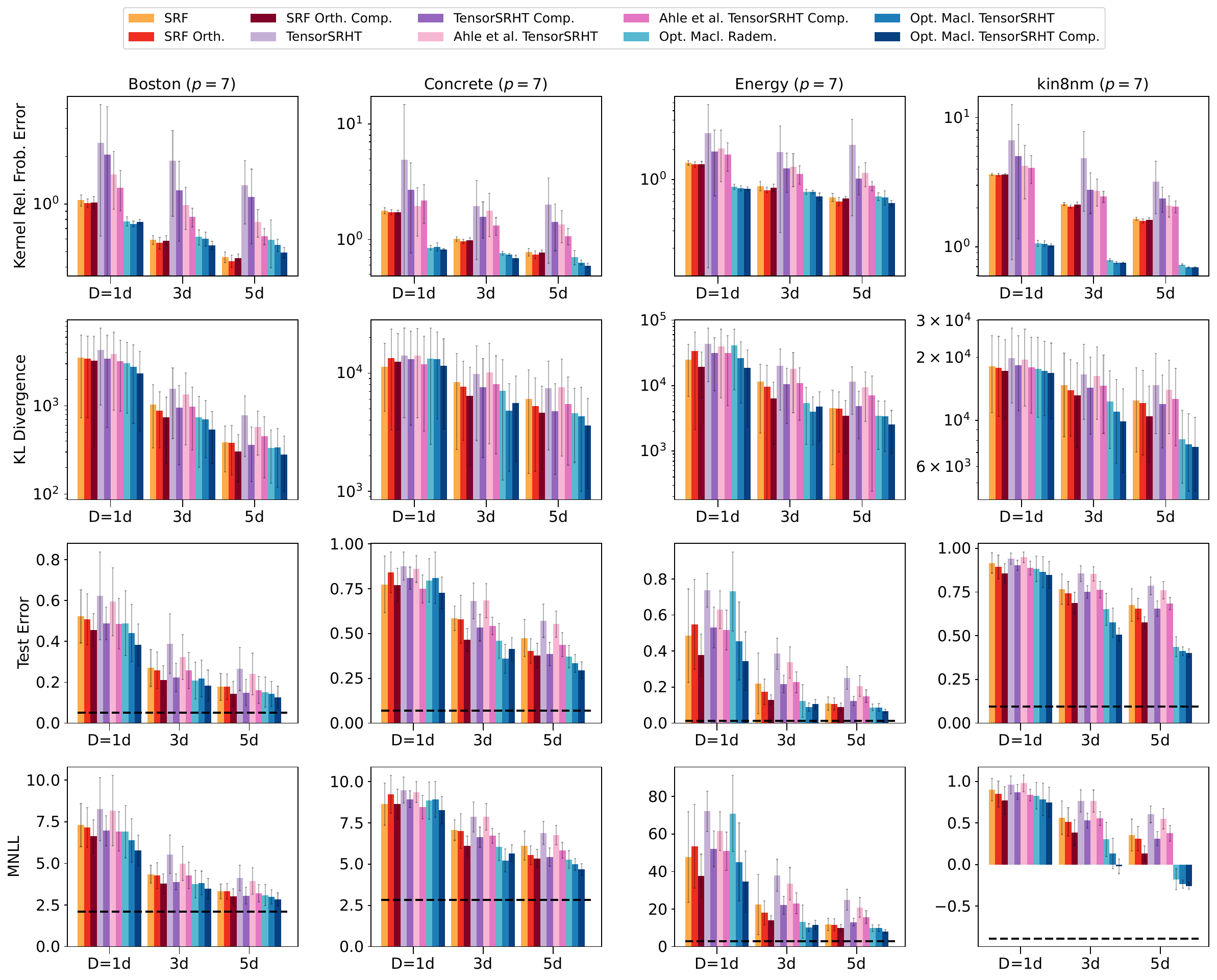}
    \caption{%
    Additional results of the experiments in Section \ref{sec:GP-inference-poly} on approximate GP regression with a $p=7$ polynomial kernel.
    Lower values are better for all the metrics. 
    For each dataset, we show the number of random features $D \in \{1d, 3d, 5d\}$ used in each method on the horizontal axis, with $d$ being the input dimensionality of the dataset. The dashed black line shows test errors and MNLL values for the full target GP.
    In some cases performance is worse than the actual kernel because $D=5d$ is still too small for some datasets. 
    However, there is an indication that the test errors improve as we increase the number of features getting us closer to the true GP.
 } 
    \label{fig:poly-regression-boston-kin8nm}
\end{figure}

\begin{figure}[ht]
    \centering
    \includegraphics[width=1\textwidth]{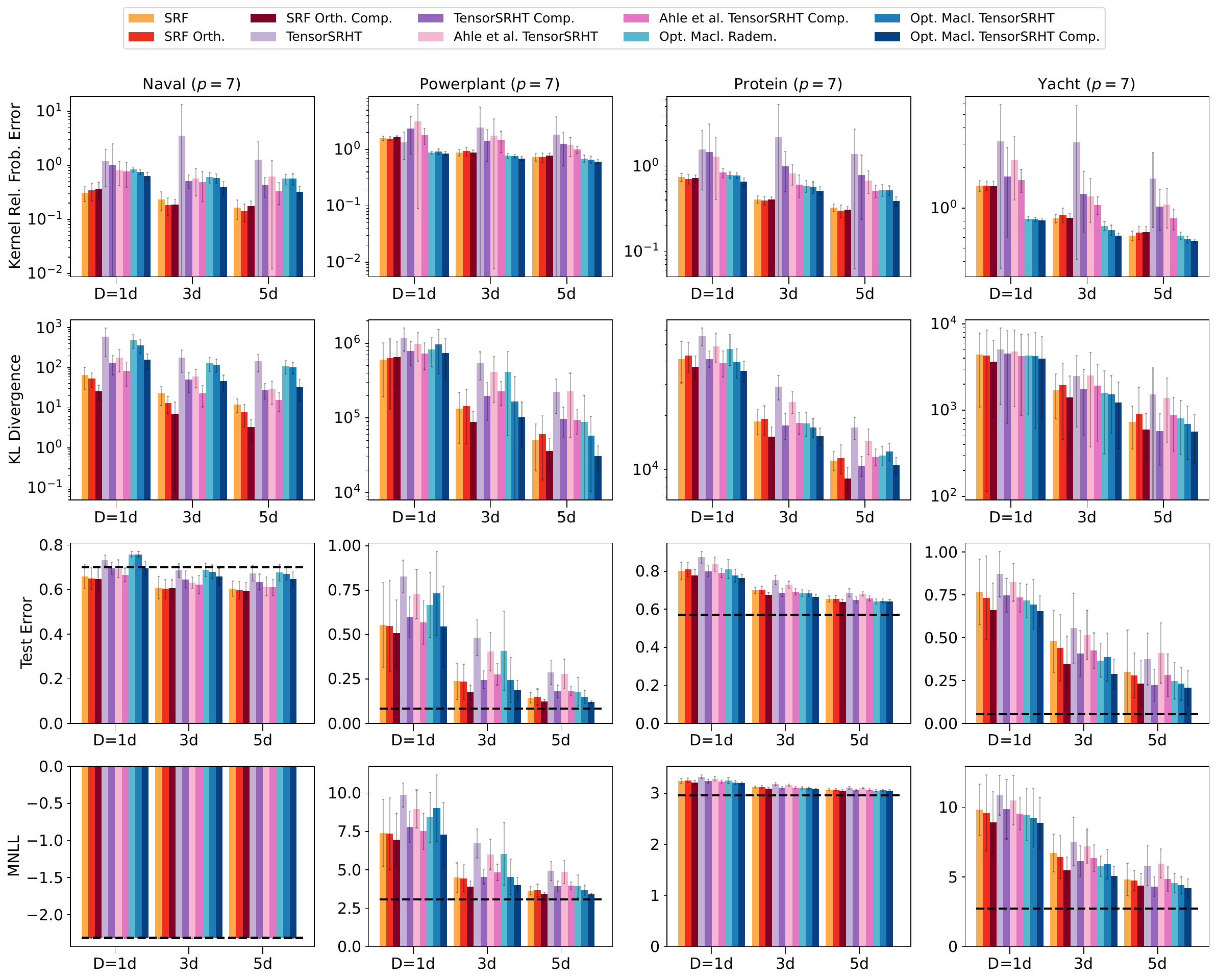}
    \caption{%
    Additional results of the experiments in Section \ref{sec:GP-inference-poly} on approximate GP regression with a $p=7$ polynomial kernel.
    Lower values are better for all the metrics. 
    For each dataset, we show the number of random features $D \in \{1d, 3d, 5d\}$ used in each method on the horizontal axis, with $d$ being the input dimensionality of the dataset. The dashed black line shows test errors and MNLL values for the full target GP.
    Interestingly, in the Naval data set, the sketching approaches provide some form of regularization, yielding better generalization error than the GP with full kernel.  
    }
    \label{fig:poly-regression-naval-yacht}
\end{figure}

\begin{figure}[ht]
    \centering
    \includegraphics[width=1\textwidth]{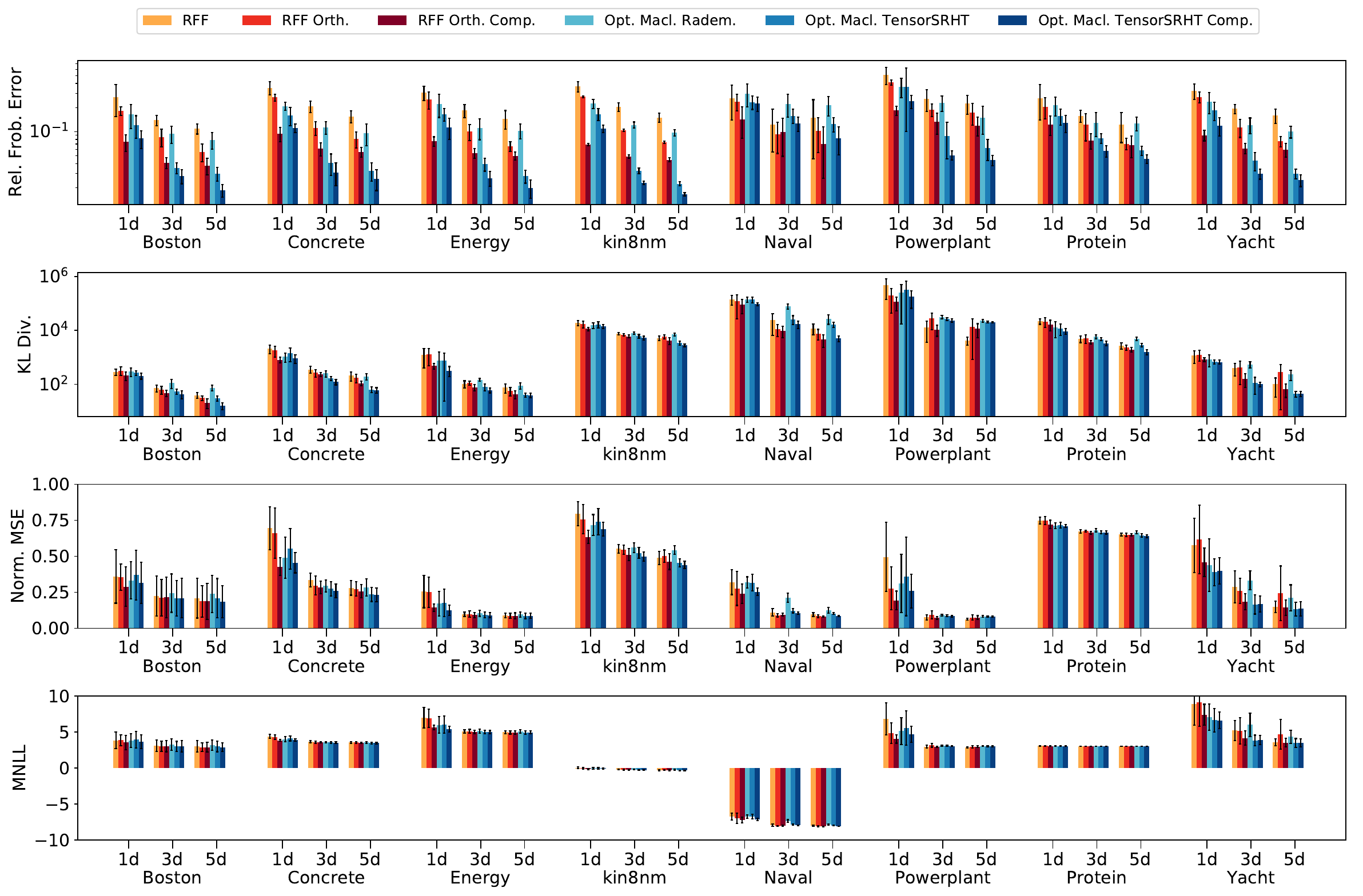}
    \caption{%
    Additional results of the experiments in Section \ref{sec:gaussian-approximation} on approximate GP regression with a Gaussian kernel.
        Lower values are better for all the metrics. 
    For each dataset, we show the number of random features $D \in \{1d, 3d, 5d\}$ used in each method on the horizontal axis, with $d$ being the input dimensionality of the dataset. We put the legend labels and the bars in the same order.
    }
    \label{fig:rbf-regression}
\end{figure}
\begin{table}[ht]
\begin{center}

\resizebox{1 \textwidth}{!}{
\begin{tabular}{l|cc|cc|cc|cc}
\toprule
&
\multicolumn{4}{c|}{MNLL} &
\multicolumn{4}{c}{Rel. Frob. Error} \\

&
\multicolumn{2}{c|}{Non-centred} &
\multicolumn{2}{c|}{Centred} &
\multicolumn{2}{c|}{Non-centred} &
\multicolumn{2}{c}{Centred} \\

Dataset &
SRF Gaus. & Opt. Macl. Rad. &
SRF Gaus. & Opt. Macl. Rad. &
SRF Gaus. & Opt. Macl. Rad. &
SRF Gaus. & Opt. Macl. Rad. \\
\midrule

Boston &
3.410$\pm$0.37 & 3.447$\pm$0.38 &
3.449$\pm$0.62 & \textbf{3.161}$\pm$0.28 &
\textbf{0.044}$\pm$0.02 & 0.212$\pm$0.15 &
0.356$\pm$0.05 & 0.421$\pm$0.06 \\

Concrete &
3.779$\pm$0.07 & 3.811$\pm$0.04 &
3.660$\pm$0.12 & \textbf{3.542}$\pm$0.07 &
\textbf{0.019}$\pm$0.01 & 0.276$\pm$0.17 &
0.610$\pm$0.07 & 0.482$\pm$0.03 \\

Energy &
6.090$\pm$0.12 & 6.090$\pm$0.12 &
5.116$\pm$0.20 & \textbf{5.012}$\pm$0.13 &
\textbf{0.003}$\pm$0.00 & 0.222$\pm$0.14 &
0.507$\pm$0.08 & 0.484$\pm$0.05 \\

kin8nm &
-0.203$\pm$0.07 & -0.310$\pm$0.03 &
-0.203$\pm$0.07 & \textbf{-0.323}$\pm$0.03 &
0.946$\pm$0.04 & 0.525$\pm$0.04 &
0.947$\pm$0.03 & \textbf{0.521}$\pm$0.03 \\

Naval &
-6.069$\pm$0.03 & -6.066$\pm$0.03 &
\textbf{-8.083}$\pm$0.04 & -7.788$\pm$0.10 &
\textbf{0.040}$\pm$0.03 & 0.183$\pm$0.06 &
0.112$\pm$0.04 & 0.384$\pm$0.11 \\

Powerplant &
3.064$\pm$0.03 & \textbf{3.061}$\pm$0.06 &
3.282$\pm$0.14 & 3.400$\pm$0.73 &
\textbf{0.001}$\pm$0.00 & 0.062$\pm$0.04 &
0.609$\pm$0.09 & 0.527$\pm$0.10 \\

Protein &
3.233$\pm$0.01 & 3.233$\pm$0.01 &
3.072$\pm$0.02 & \textbf{3.060}$\pm$0.02 &
\textbf{0.000}$\pm$0.00 & 0.002$\pm$0.00 &
0.277$\pm$0.05 & 0.429$\pm$0.14 \\

Yacht &
4.317$\pm$0.45 & 4.478$\pm$0.45 &
\textbf{3.773}$\pm$0.21 & 3.844$\pm$0.28 &
\textbf{0.028}$\pm$0.01 & 0.276$\pm$0.11 &
0.512$\pm$0.04 & 0.484$\pm$0.03 \\

\midrule

Cod\_rna &
0.307$\pm$0.00 & 0.308$\pm$0.00 &
0.288$\pm$0.06 & \textbf{0.151}$\pm$0.01 &
\textbf{0.022}$\pm$0.01 & 0.087$\pm$0.05 &
0.641$\pm$0.05 & 0.467$\pm$0.05 \\

Covertype &
0.821$\pm$0.01 & - &
0.650$\pm$0.01 & \textbf{0.639}$\pm$0.01 &
\textbf{0.024}$\pm$0.01 & - &
0.361$\pm$0.01 & 0.300$\pm$0.01 \\

Drive &
1.446$\pm$0.02 & 1.453$\pm$0.03 &
0.677$\pm$0.02 & \textbf{0.497}$\pm$0.01 &
\textbf{0.068}$\pm$0.02 & 0.135$\pm$0.05 &
0.348$\pm$0.01 & 0.312$\pm$0.02 \\

FashionMNIST &
\textbf{0.353}$\pm$0.00 & 0.364$\pm$0.00 &
0.364$\pm$0.00 & 0.361$\pm$0.00 &
\textbf{0.029}$\pm$0.00 & 0.062$\pm$0.01 &
0.099$\pm$0.00 & 0.104$\pm$0.01 \\

Magic &
0.453$\pm$0.01 & 0.452$\pm$0.01 &
0.381$\pm$0.02 & \textbf{0.350}$\pm$0.01 &
\textbf{0.068}$\pm$0.01 & 0.147$\pm$0.05 &
0.430$\pm$0.03 & 0.418$\pm$0.04 \\

Miniboo &
0.253$\pm$0.01 & - &
0.239$\pm$0.01 & \textbf{0.213}$\pm$0.01 &
\textbf{0.027}$\pm$0.01 & - &
0.214$\pm$0.01 & 0.229$\pm$0.02 \\

MNIST &
0.076$\pm$0.00 & \textbf{0.074}$\pm$0.00 &
0.290$\pm$0.02 & 0.353$\pm$0.09 &
\textbf{0.073}$\pm$0.00 & 0.082$\pm$0.00 &
0.085$\pm$0.00 & 0.089$\pm$0.01 \\

Mocap &
0.360$\pm$0.01 & 0.334$\pm$0.01 &
0.357$\pm$0.02 & \textbf{0.289}$\pm$0.01 &
\textbf{0.115}$\pm$0.01 & 0.187$\pm$0.04 &
0.414$\pm$0.01 & 0.290$\pm$0.02 \\

\bottomrule
\end{tabular}}
\end{center}
\caption{GP regression (top) and classification (bottom) for centred vs. non-centred data with $D=5d$ features. Non-centred Miniboo and Covertype led to numerical issues for Maclaurin (no scores reported).}
\label{tbl:centered-vs-non-centered}
\end{table}

\begin{figure}[t]
    \centering
    \includegraphics[width=0.93\textwidth]{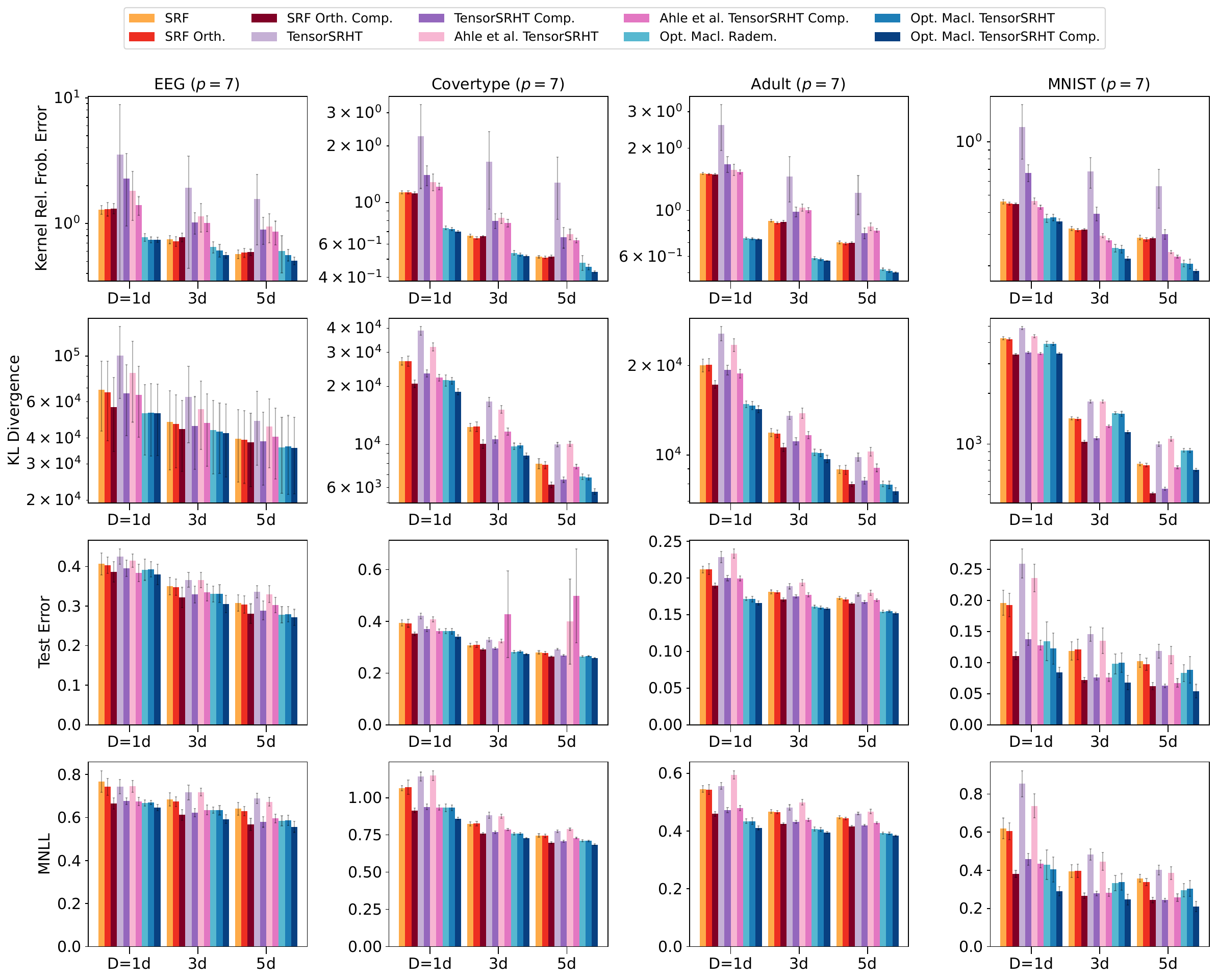}
    \caption{%
    Additional results of the experiments in Section \ref{sec:GP-inference-poly} on approximate GP classification with a $p=7$ polynomial kernel.  
    Lower values are better for all the metrics. 
    For each dataset, we show the number of random features $D \in \{1d, 3d, 5d\}$ used in each method on the horizontal axis, with $d$ being the input dimensionality of the dataset.
    }
    \label{fig:poly-classification-eeg-mnist}
\end{figure}
\begin{figure}[t]
    \centering
    \includegraphics[width=0.93\textwidth]{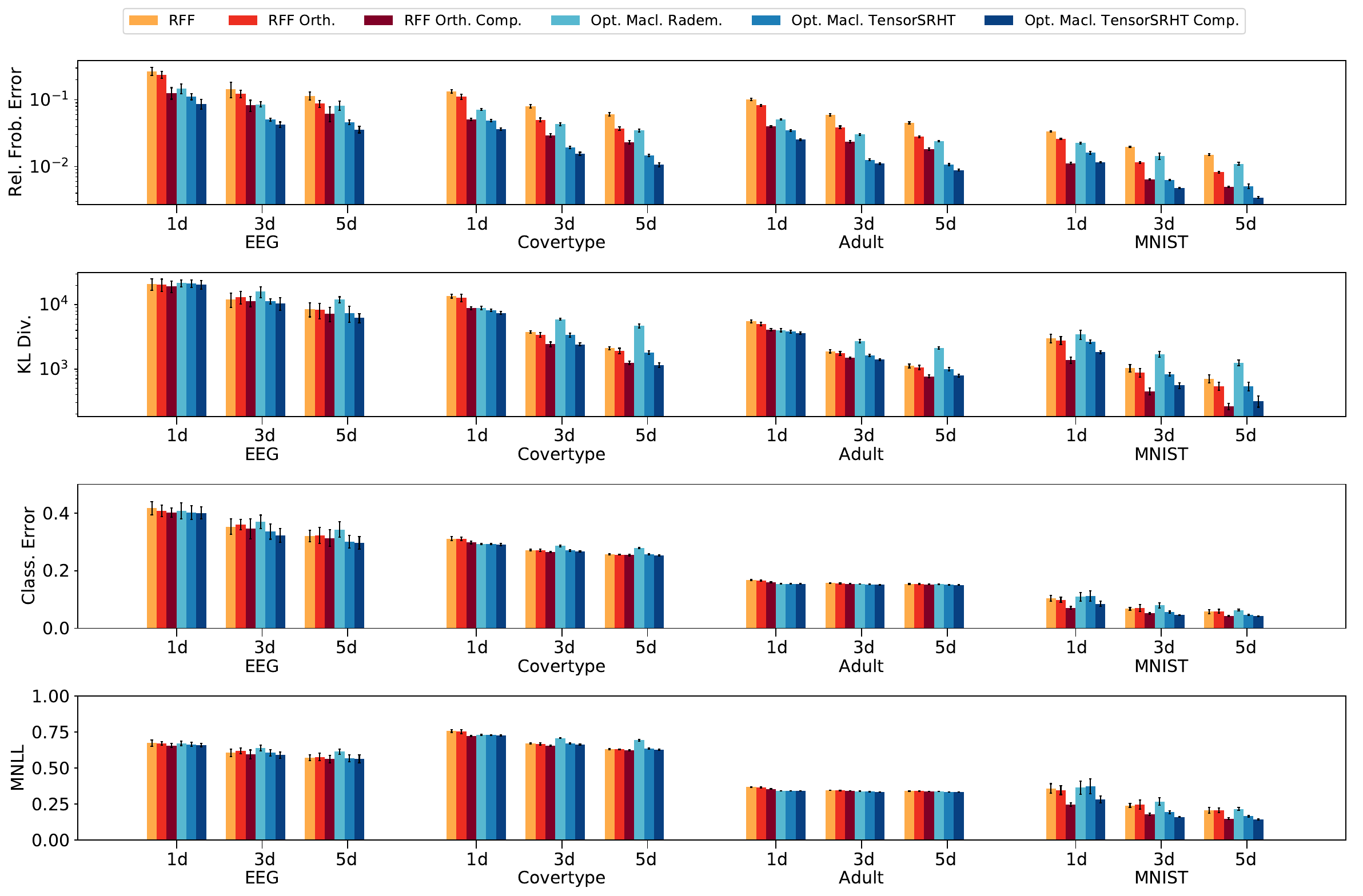}
    \caption{%
    Additional results of the experiments in Section \ref{sec:gaussian-approximation} on approximate GP classification with a Gaussian kernel.
        Lower values are better for all the metrics. 
    For each dataset, we show the number of random features $D \in \{1d, 3d, 5d\}$ used in each method on the horizontal axis, with $d$ being the input dimensionality of the dataset. We put the legend labels and the bars in the same order.
    }
    \label{fig:rbf-class2}
\end{figure}

\section{Additional Results for the Optimized Maclaurin Method} 
\label{app:opt-maclaurin}

We present here additional results for \cref{alg:extended-incremental-algorithm} in \cref{sec:improving-maclaurin}.
In \cref{fig:dp-opt-comparison-fmnist-rademacher} and \cref{fig:dp-opt-comparison-fmnist-srht} we analyze the output of the optimization phase involved in the Maclaurin approximation with Rademacher and TensorSRHT sketches, respectively. We focus on the approximation of the Gaussian kernel and determine $p^*$ using $p_{\rm min}=1$ and $p_{\rm max}=20$. The algorithm is repeated for 20 different random seeds, and the resulting histogram of $p^*$ is shown in the upper figures of \cref{fig:dp-opt-comparison-fmnist-rademacher} and \cref{fig:dp-opt-comparison-fmnist-srht}; the bottom figures show the average and the standard deviation of the number of features assigned to each degree.

We observe that we need relatively few data samples (only 2000 out of 60000 training samples for FashionMNIST) to achieve a stable feature distribution. 
This supports our choice in the experiments in \cref{sec:experiments}, where we used considerably more (5000) data samples to estimate this.
The value of $p^*$ converges more slowly because sometimes very few (e.g., 1) random features are still allocated to high polynomial degrees. 
This does not harm the overall feature distribution too much since most features are already allocated to degrees 1-3 even for small sample sizes (e.g., 500 samples).

Interestingly, most features are allocated to low rather than high degrees, which is due to the variance distribution (see also \cref{fig:bootstrap-boxplot}). 
This supports our choice of $p_{\rm max}=10$ in the experiments because most random features are allocated to small degrees. 
We set $p_{\rm min}=2$ in the experiments to exclude purely linear approximations of the non-linear kernel.
The low-degree allocation phenomenon was the same across datasets in Section~\ref{sec:experiments} as long as the data is zero-centered. 
If it is not, this may change as shown in \citet{WackerKES21}. Then high degrees may receive more features than lower ones.

In the case of TensorSRHT, degree 1 is already perfectly approximated when $D_1=d$ random features are used because the TensorSRHT variance always turns out to be zero for $p=1$.
The algorithm therefore starts investing random features into higher degrees once $D_1=d$. 
So the next degree 2 is chosen to be the most dominant (for Rademacher it was degree 1).
This explains the performance gain of Maclaurin TensorSRHT over Maclaurin Rademacher; we invest less random features into the first degree and use them to decrease the variances of other degrees.

\begin{figure}[h]
    \centering
    \includegraphics[width=1\textwidth]{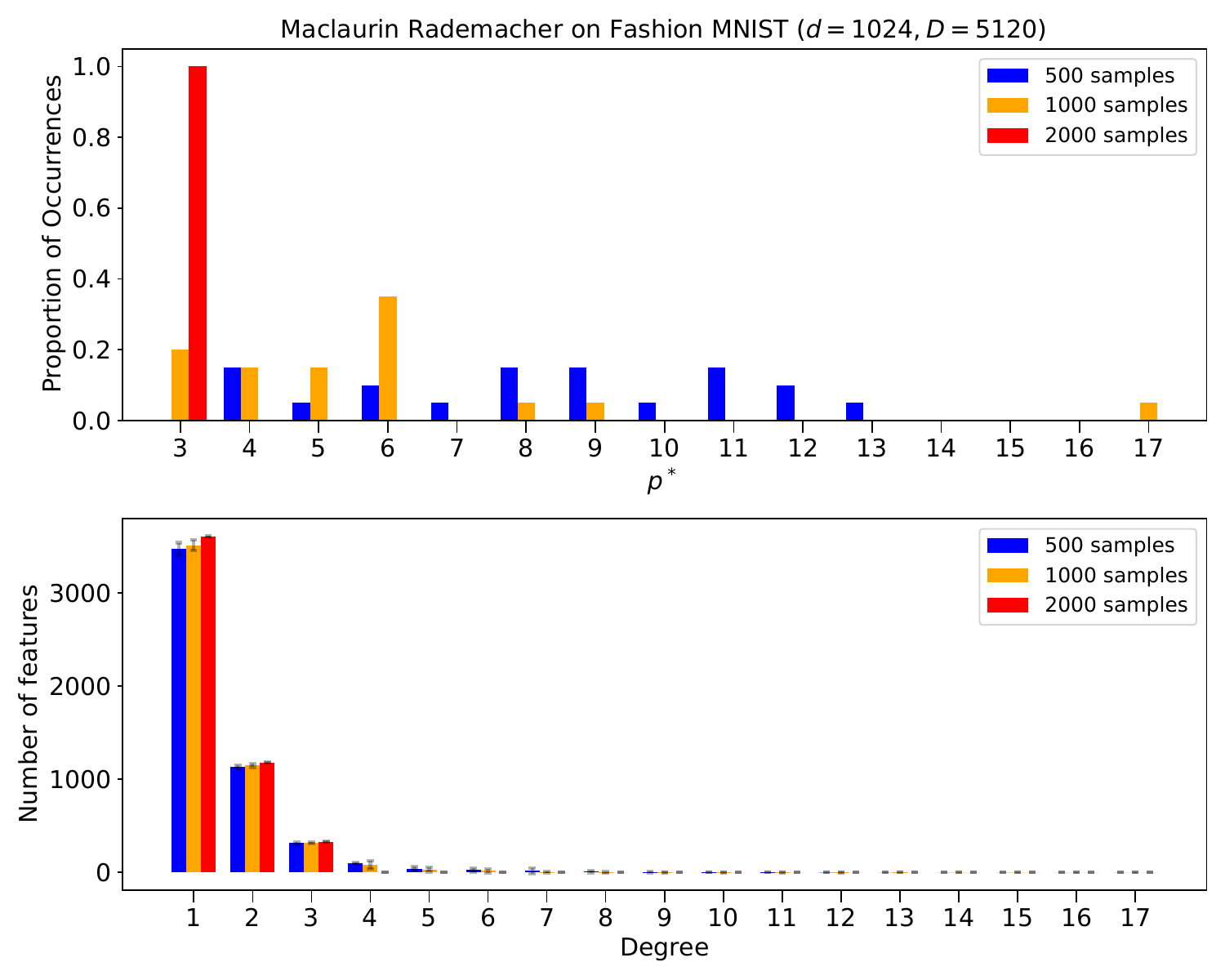}
    \caption{
    Comparing Maclaurin optimization outputs for different sample sizes 500, 1000 and 2000 and real Rademacher sketches over 20 runs of the algorithm. The optimal degree $p^*$ approaches the value of 3 as the sample size increases (20/20 runs resulted in $p^*=3$ for 2000 samples). The feature distribution across degrees allocates most random features to degree 1 followed by degree 2 and so on. This is already stable for small sample sizes (bar heights are mean values and error bars are standard deviations).
    }
    \label{fig:dp-opt-comparison-fmnist-rademacher}
\end{figure}

\begin{figure}[h]
    \centering
    \includegraphics[width=1\textwidth]{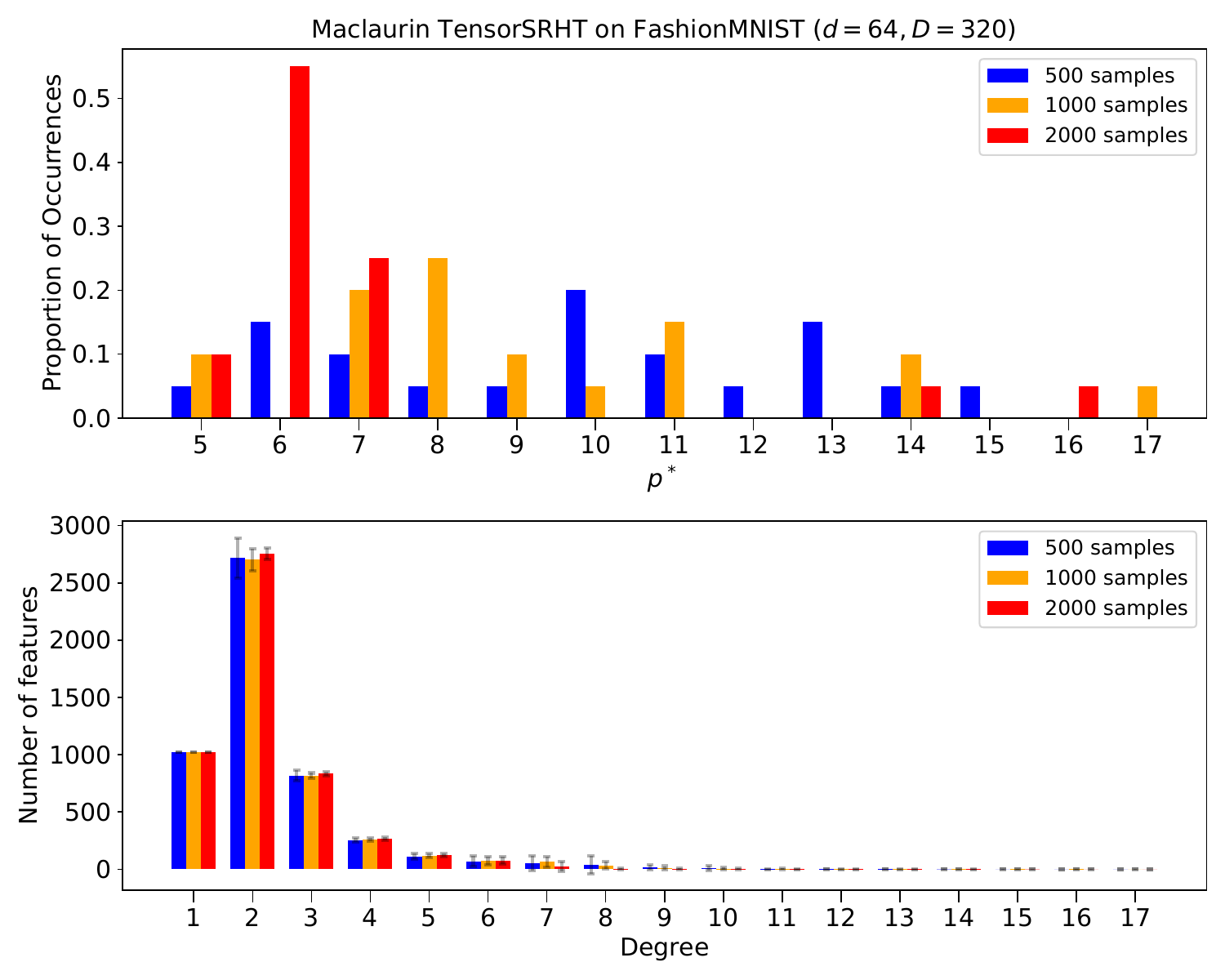}
    \caption{
    Comparing Maclaurin optimization outputs for different sample sizes 500, 1000 and 2000 and real TensorSRHT sketches. The optimal degree $p^*$ approaches the value of 6 as the sample size increases. The feature distribution across degrees allocates most random features to degree 2 this time. This is already stable for small sample sizes.
    }
    \label{fig:dp-opt-comparison-fmnist-srht}
\end{figure}

\paragraph{Time Complexity for Different Values of $p_{\rm min}$ and $p_{\rm max}$.}
Here, we report a theoretical runtime analysis of the Maclaurin method.
In Section \ref{sec:improving-maclaurin}, we have already reported an analysis of the incremental algorithm, giving a time complexity in $\bigO(pD_{\rm total})$. 
In order to find the optimal $p^*$, we need to run the incremental algorithm $p_{\rm max} - p_{\rm min} + 1 \in \bigO(p_{\rm max})$ times.
Additionally, the algorithm requires the \textit{precomputed} variance estimates costing $\bigO(p_{\rm max} m^2)$, where $m$ is the sample size. 
This gives a total time complexity of $\bigO(p_{\rm max}^2 D_{\rm total} + p_{\rm max} m^2)$ for the feature allocation optimization.

Note that a simple Johnsson-Lindenstrauss projection costs $\bigO(ndD)$ for $n$ data points, $d$ input dimensions and $D$ features. 
Note also that $p_{\rm max}^2$ is much smaller than $nd$ though and $p_{\rm max} m^2$ is also small as long as $m \ll n$.
The Maclaurin optimization has therefore a comparable time complexity to a Johnsson-Lindenstrauss projection for reasonable $m$.

\vskip 0.2in
\bibliography{bibliography}

\end{document}